\newif\ifdraft\draftfalse
\newcommand\yj[1]{\textcolor{black}{#1}}
\newcommand\sd[1]{\textcolor{green}{#1}}
\newcommand\nj[1]{\textcolor{black}{#1}}
\newcommand\knj[1]{\textcolor{red}{#1}}
\newcommand\yj[1]{#1}
\newcommand\sd[1]{#1}
\newcommand\nj[1]{#1}
\newcommand\knj[1]{#1}
\newtheorem{theorem}{Theorem}
\newtheorem{proposition}{Proposition}
\ifcvprfinal\pagestyle{empty}\fi
\begin{document}

\title{Butterfly Effect: Bidirectional Control of Classification Performance\\ 
by Small Additive Perturbation}

\author{YoungJoon Yoo\textsuperscript{1} \quad Seonguk Park\textsuperscript{1} \quad Junyoung Choi\textsuperscript{1} \quad Sangdoo Yun\textsuperscript{2} \quad Nojun Kwak \textsuperscript{1}\\
\textsuperscript{1}Graduate School of Convergence Science and Technology, Seoul National University, South Korea\\
\textsuperscript{2}ASRI, Dept. of Electrical and Computer Eng., Seoul National University, South Korea\\
{\tt\small 
\textsuperscript{1}yjyoo3312@gmail.com 
\textsuperscript{2}\{swpark0703, djcola814, yunsd101, nojunk\}@snu.ac.kr 
}
}

\maketitle

\begin{abstract}
This paper proposes a new algorithm for controlling classification results by generating a small additive perturbation without changing the classifier network. Our work is inspired by existing works generating adversarial perturbation that worsens classification performance.
In contrast to the existing methods, our work aims to generate perturbations that can enhance overall classification performance.
To solve this performance enhancement problem, we newly propose a perturbation generation network (PGN) influenced by \nj{the adversarial learning strategy}.
In our problem, the information in a large external dataset is \nj{summarized by }
a small additive perturbation, \nj{which} helps to improve the performance of the classifier trained with the target dataset.
In addition to \nj{this performance enhancement} problem, we show that the proposed PGN can be adopted to solve the classical adversarial problem without \nj{utilizing the information on} the target classifier.
The mentioned characteristics of our method are verified through \nj{extensive experiments on publicly available} visual datasets.

\end{abstract}

\section{Introduction}
\label{sec:intro}
In recent years, deep convolutional neural networks \nj{(CNN)}~\cite{lecun1989backpropagation,krizhevsky2012imagenet} \nj{have} become one of the most powerful ways to handle visual information and have been applied to almost all areas of computer \nj{vision,} 
including classification~\cite{simonyan2014very,szegedy2015going,he2016deep}, detection~\cite{redmon2016you,girshick2015fast,ren2015faster}, and segmentation~\cite{long2015fully,noh2015learning}, among others.
\nj{It has been} shown that deep networks stacking multiple layers \nj{provide} sufficient capacity to extract essential features from visual data for \nj{a} computer vision task.
To efficiently estimate the large number of the model's network parameters, \nj{stochastic gradient descent (SGD) and its variants}~\cite{tieleman2012lecture,kingma2014adam}, which update the network parameters through the gradient obtained by backpropagation~\cite{lecun1989backpropagation}, \nj{have been} proposed. 

However, recent studies~\cite{goodfellow2014explaining,nguyen2015deep,Moosavi-Dezfooli_2017_CVPR,szegedy2013intriguing} suggest that the estimated network parameters are not optimal, and the trained networks are easily fooled by adding \nj{a} small perturbation vector generated by solving an optimization problem~\cite{szegedy2013intriguing} or by \nj{one-step} gradient ascent~\cite{goodfellow2014explaining}, as shown in Figure~\ref{fig:teaser}. 
Also, the generation of universal perturbation that can degrade arbitrary images and networks \nj{has been} proposed~\cite{Moosavi-Dezfooli_2017_CVPR}.
From the results, we can \nj{conjecture} that the trained \nj{networks} are over-fitted in some sense. 

\begin{figure}[t]
\begin{center}
   \includegraphics[width=0.99\linewidth]{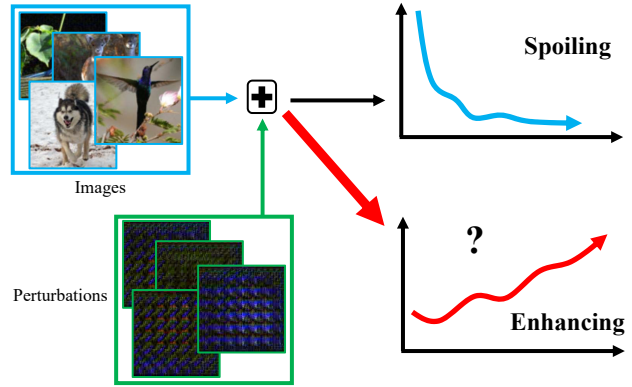}
\end{center}
   \vspace{-2mm}
   \caption{Bidirectional control of classification performance using small additive perturbation. Various approaches have shown that CNN based classifiers can be easily fooled.
   Our model aims to find the perturbation that can control the CNN classification performance in both directions: enhancement, and degradation.}
   \vspace{-2mm}
\label{fig:teaser}
\end{figure}

These works show that it is possible to control the target performance through small external changes without modifying the \nj{values of network parameters}, suggesting that it is possible to generate a perturbation to improve the performance of the model.
Regarding the issue of generating adversarial perturbation, studies including privacy applications~\cite{orekondy2017towards,lu2017no,oh2017adversarial} and defenses of the adversarial perturbation~\cite{gu2014towards, metzen2017detecting} \nj{have been} proposed so far. 
However, \nj{to the best of our knowledge,} designing a perturbation that enhances the performance of a model has not been proposed yet.

In this paper, we propose a new \nj{general framework for generating a perturbation vector that can either enhance or worsen the classification accuracy.} 
The proposed algorithm \nj{solves} two main problems. 
First and most importantly, our model generates \nj{a} perturbation that enhances classification performance without changing the network parameters (\textbf{enhancing problem}).
It is \nj{worth noting} that this is \nj{the} first attempt to show that performance-enhancing \nj{perturbations} exist.
Second, our algorithm generates \nj{a} perturbation vector \nj{so as to lower the classification performance of the classifier} (\textbf{adversarial problem}).
For the adversarial problem, our algorithm can generate \nj{perturbations} without knowing the \nj{structure} of the network \nj{being used}, which is difficult for existing adversarial algorithms~\cite{Moosavi-Dezfooli_2017_CVPR,moosavi2016deepfool,goodfellow2014explaining}.

To solve this problem, we propose a perturbation vector generation network \nj{(PGN)} consisting of two \nj{sub-networks:} generator and discriminator.
The generator network generates a perturbation vector that coordinates the target performance in the \nj{desired} direction, 
and the discriminator network \nj{distinguishes} whether the \nj{generated perturbation is good or bad.}
Both networks are trained through minimax games inspired by \nj{\textit{generative adversarial nets}} (GAN)~\cite{goodfellow2014generative}, and the resultant perturbation vector from the generator controls the result of the target classifier networks.
However, unlike those of the variants of GAN~\cite{goodfellow2014generative,mao2016least,radford2015unsupervised,bradshaw2017adversarial}, the \knj{purpose} of the proposed minimax framework is to generate additive \knj{noises} that help the input data satisfy the desired \knj{goal of performance-enhancement or degradation, not generating a plausible data samples.}
The main contributions of the proposed work can be summarized as follows: 
\begin{itemize}
\item We show the \nj{existence of a} perturbation vector \nj{that enhances the overall classification result of a dataset.}
\item We propose a unified framework \nj{of PGN that can solve} the performance-enhancement, and the adversarial problem.
\item We show that the proposed method can generate perturbation vectors that can solve the adversarial problem without knowing the \nj{structure of the network}.
\end{itemize}

The proposed method has been validated with \nj{a couple of} public \nj{datasets:} the STL-10 dataset~\cite{coates2011analysis}, and subsets of the ImageNet dataset~\cite{russakovsky2015imagenet}. 
Also, \nj{widely used classifier networks} 
such as ResNet~\cite{he2016deep}, VGGnet~\cite{simonyan2014very}, and DenseNet~\cite{huang2016densely} \nj{have been tested }\nj{as the target classifier}.

\section{Related Work}
\label{sec:rel}

In contrast to the great success of CNN in \nj{various} image recognition \nj{tasks}~\cite{simonyan2014very,szegedy2015going,he2016deep,szegedy2017inception}, many studies~\cite{biggio2013evasion,szegedy2013intriguing,moosavi2016deepfool,bastani2016measuring,sabour2015adversarial,tabacof2016exploring,fawzi2015analysis,fawzi2016robustness,rodner2016fine,rozsa2016adversarial,goodfellow2014explaining,Moosavi-Dezfooli_2017_CVPR,moosavi2017analysis,mopuri2017fast} have indicated that CNNs are not robust and are \knj{easily fooled.}
Szegedy~\etal~\cite{szegedy2013intriguing} discovered that such classification networks are vulnerable to well-designed additive small perturbations.
These perturbation vectors can be estimated either by solving an optimization problem~\cite{szegedy2013intriguing,moosavi2016deepfool,bastani2016measuring} or \nj{by one-step} gradient ascent of the network~\cite{goodfellow2014explaining}.
Also, studies~\cite{mordvintsev2015deepdream,nguyen2015deep} \nj{have} been published that \nj{show} the difference \nj{between CNNs and humans in understanding an image.}
These works generate a perturbation vector depending \nj{both} on \nj{the} input image and on \nj{the} network \nj{used}.
\nj{On the other hand}, \nj{the work in ~\cite{hayes2017machine} 
generate an image-specific universal adversarial perturbation vector valid for arbitrary networks,
while \cite{Moosavi-Dezfooli_2017_CVPR,moosavi2017analysis, mopuri2017fast} find a universal adversarial perturbation vector independent of images.
}
%

The discovery of \nj{an} adversarial example has attracted a great deal of attention in relation to privacy issues~\cite{orekondy2017towards,lu2017no,oh2017adversarial}, and many studies have been published on the privacy and defense~\cite{gu2014towards, metzen2017detecting,zantedeschi2017efficient, lu2017safetynet, das2017keeping, hosseini2017blocking} of \nj{adversarial examples}. 
Studies \nj{have also been proposed for tasks such as} transferring the adversarial example \nj{to other networks}~\cite{liu2016delving, bradshaw2017adversarial}, transforming \nj{an input image} to its target class \nj{by adding a perturbation}~\cite{baluja2017adversarial}, and generating an adversarial perturbation vector~\cite{xie2017adversarial} for segmentation and detection.

The main issues we deal with in this paper are different from those in the studies mentioned above.
Unlike the previous works focusing on the adversarial problem and its defense, our works mainly \nj{aim} to propose a network that generates \nj{a} perturbation vector that can enhance the overall classification performance of the target classifier.
Furthermore, in addition to the enhancing problem, the proposed network is designed so that it is also applicable to the adversarial problem 
with an unknown black-box target classifier.

\begin{figure*}[t]
\begin{center}
   \includegraphics[width=0.99\linewidth]{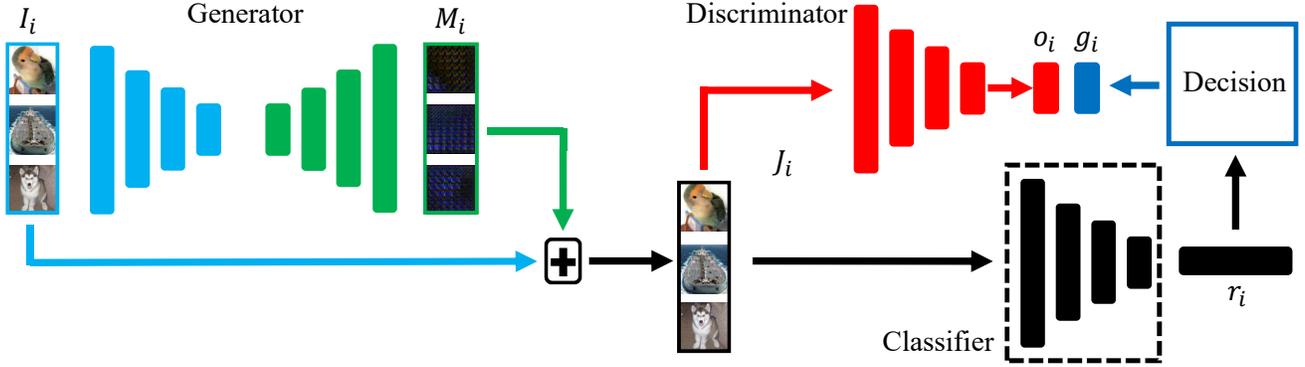}
\end{center}
\vspace{-2mm}
   \caption{Framework of the proposed method. For controlling the performance of the target classifier (Black), a perturbation $M_i$ (Green) is generated from a base image $I_i$ (Blue) by the generator. The discriminator (Red) then judges if the perturbation has adjusted the classification result \nj{as desired}. 
}
\vspace{-2mm}
\label{fig:framework}
\end{figure*}

\section{Proposed Method}
\label{sec:prop}
\subsection{Overview}
\label{sec:prop_overview}
In \nj{an} image classification framework, we are given a set of labeled images $T = \{(I_i,l_i),i=1,...,N\}$ and classifier $r_i = f_c(I_i)$, 
where \nj{$I_i$ and $l_i$ \knj{denote} the image and the corresponding label, respectively,} and the resultant label $r_i$ \knj{belongs to} one of the class labels $r_i \in\{c_1,...,c_K\}$. 
Given this condition, our goal is to generate an additive perturbation vector $M_i$ that can control the classification result of the classifier $f_c(\cdot)$, given an image $I_i$.
The generated perturbation vector $M_i$ is added to the image $I_i$ \nj{as}  
%
\begin{equation} 
\label{eq:additive_noise}
J_i = I_i + \lambda M_i,
\end{equation}
and the classification result $f_c(J_i)$ of the perturbed image $J_i$ should be controlled by the vector $M_i$ so \nj{as to} solve the two listed problems: \nj{the} enhancing problem\, and \nj{the} adversarial problem.

For \nj{the} enhancing problem, we \knj{aim} to generate a perturbation vector under the condition where the classifier network is accessible, but \nj{with fixed parameters}. 
For the adversarial problem, our model \knj{solves} the problem under the situation \nj{that} the classifier network is not accessible at all (Black Box).

\subsection{Perturbation Generation Network}
\label{sec:prop_pgn}

Figure~\ref{fig:framework} describes the overall framework of the proposed \nj{PGN, which}
mainly consists of three networks: \nj{a} generator, \nj{a} discriminator \nj{and a} classifier.
In our problem, only the network parameters \nj{in} \knj{the} generator and \knj{the} discriminator will be updated.

\knj{As in equation~(\ref{eq:generator}), the generator network $G(I_i;W_G)$ generates a perturbation vector $M_i$ with the same size as the input image $I_i$,}
where $W_G$ refers to the network parameters of the generator;
\begin{equation} 
\label{eq:generator}
M_i = G(I_i;W_G).
\end{equation}
In our model, $G(I_i;W_G)$ is composed of an encoder network $v = E(I_i;W_{enc})$ and decoder network \nj{$M_i = D(v;W_{dec})$}, where $W_G = \{W_{enc},W_{dec}\}$.
Using the vector $M_i$ and the image $I_i$, we get a perturbed image $J_i$ as in equation~(\ref{eq:additive_noise}).
The perturbed image $J_i$ then \nj{bifurcates as inputs to} the classifier $f_c(J_i)$ \nj{as well as} the discriminator $D(J_i)$.
In our model, the discriminator $D(J_i;W_D)$ is designed as a \knj{network with a sigmoid output} as in equation~(\ref{eq:discriminator}) to judge whether \nj{$J_i$} is generated according to our purpose, by using the classification result of the given target classifier $f_c(\cdot)$\nj{;}
\begin{equation} 
\label{eq:discriminator}
o_i = D(J_i;W_D),~o_i\in[0,1]\nj{.}
\end{equation}
\nj{Here,} the term $W_D$ denotes the network parameters of the discriminator, and the term $o_i$ is a sigmoid scalar unit.
The important thing here is to set the target variable $g_i$ for the output $o_i$ of the discriminator to fit the purpose of the problem we aim to solve.
Then, the loss functions for training the generator and discriminator \nj{networks are} defined using $g_i$ and $o_i$.
For each of the two problems we want to solve in this work, detailed explanations will be presented in \nj{what follows}.
The overall algorithm will be presented with \nj{the case of the enhancing problem}, and the \knj{case of adversarial problem} will be addressed based on the discussion.

\textbf{Enhancing \nj{Problem:} }
In order to enhance the \nj{performance of the classifier}, we first define a discriminator loss to let the network 
\knj{determine whether the generated perturbed image $J_i$ is good or bad.}
When the \nj{classification result of the generated image $J_i$ matches the ground truth $l_i$}, 
we set the \knj{target $g_i$ as $1$ (good)}, and otherwise \knj{$0$ (bad)} as follows:
\begin{equation} 
\label{eq:disc_gt}
g_i = 
\begin{cases}
1,~r_i = l_i,\\
0,~r_i \neq l_i.
\end{cases}
\end{equation}

\nj{Here,} $r_i = f_c(J_i)$ and $l_i$ is the \nj{ground truth} class label for $I_i$.
Using the target variable $g_i$, \nj{ the discriminator and the generator losses are defined in the sense of mean squared error as in equations~(\ref{eq:dsc_loss}) and (\ref{eq:fool_loss}), respectively;}

\nj{
\begin{eqnarray} 
&L_{d} = \frac{1}{2}\mathbf{E}_{p_g}[{(D(J)-1)}^2]+\frac{1}{2}\mathbf{E}_{p_{\bar{g}}}[{(D(J))}^2] \label{eq:dsc_loss}, \\
&L_{g} = \frac{1}{2}\mathbf{E}_{p_g}[{(D(J)-1)}^2]+\frac{1}{2}\mathbf{E}_{p_{\bar{g}}}[{(D(J)-1)}^2] \label{eq:fool_loss}.
\end{eqnarray}
}
\nj{
The distributions 
$p_g$ and $p_{\bar{g}}$ denote 
$p(g = 1) = p(r=l)$ and $p(g = 0) = p(r \neq l)$, respectively.
Note that both $r$ and $g$ depends on the generated sample $M$.
}

\nj{In practice, the expectations in (\ref{eq:dsc_loss}) and (\ref{eq:fool_loss}) are replaced with empirical \knj{means} as follows:
\begin{eqnarray} 
&L_{d} = \frac{1}{2} \sum_{i} [g_i{(D(J_i)-1)}^2 + (1-g_i) {D(J_i)}^2] \label{eq:dsc_loss_emp} \\
&L_{g} = \frac{1}{2} \sum_{i} {(D(J_i)-1)}^2. \label{eq:fool_loss_emp}
\end{eqnarray}
}

These generator and discriminator losses are inspired by least-square GAN (LSGAN)~\cite{mao2016least}, and we train the discriminator and generator \knj{networks} to minimize each loss with respect to $W_D$ and $W_G$, \nj{respectively}.
However, our formulation is different from \nj{that of} \cite{mao2016least} \nj{as clearly shown in (\ref{eq:fool_loss_emp})}.
The proposed scheme is designed to make every \nj{$D(J_i)$} converge to $1$, which means that our learning scheme reaches the proposed goal \nj{of enforcing correct classification.}
In implementation, $\ell_1$ regularization loss $L_{r} = \frac{1}{N}\sum_{i}{\|M_i \|}_1$ is added to the generator loss in equation~(\ref{eq:fool_loss_emp}) to control the intensity of the perturbation, as \nj{the following:}
\begin{equation} 
\label{eq:gen_loss}
L_{g'} = L_{g} + \gamma L_{r}.
\end{equation}

Qualitatively, minimizing the loss in equation~(\ref{eq:dsc_loss}) means that the output of the discriminator $D(J_i)$ goes to $1$ when $g_i$ equals to $1$ \knj{(good)}, and goes to $0$ in the opposite case \knj{(bad)}. 
Similarly, minimizing $W_G$ of the generator implies that \nj{$M_i = G(I_i)$} is trained so that the output of the discriminator $D(J_i)$ goes to $1$ in every case, by deceiving the discriminator. 
We have shown the proposed minimax game using the equations~(\ref{eq:dsc_loss}),(\ref{eq:fool_loss}) theoretically makes $p_g(J_i)$ converge to $1$. \knj{We have also proven that similar to~\cite{goodfellow2014generative}, this scheme is valid when a 
cross-entropy loss is applied instead of least-square loss in (\ref{eq:dsc_loss}) and (\ref{eq:fool_loss}).} Detailed explanation and proof are provided in Appendix~\ref{app:convergence}.

\textbf{Adversarial Problem: }
\nj{We can} generate an adversarial perturbation vector $M_i$ without much changing \nj{
the previously described model for the performance enhancement problem}.
In \nj{the case of the adversarial problem}, the discriminator should count the vector $M_i$ as success when the classification result $f_c(J_i)$ becomes different from the ground truth $l_i$. Therefore, in this case, the target vector $g_i$ is defined as in equation~(\ref{eq:adv_gt}),
\begin{equation} 
\label{eq:adv_gt}
g_i = 
\begin{cases}
1,~r_i \neq l_i,\\
0,~r_i = l_i.
\end{cases}
\end{equation}
From the experiments, the proposed minimax framework with the discriminator and the generator \nj{losses defined as in equations (\ref{eq:dsc_loss_emp}) and (\ref{eq:fool_loss_emp})} has sufficient capacity to drop the classification performance.
\knj{One thing that}
is worth mentioning 
\knj{is} that the existing works solve the problem \nj{based} on the assumption that the network framework is given \nj{while we can do so without knowing the network framework}.

\begin{algorithm} [t]
\caption{Training procedure of the proposed PGN.}
\begin{algorithmic}[1]
\Require 
Training data \(\{I_{i}, l_{i}\}\), target classifier $f_c(\cdot)$.
\Ensure Trained PGN weights \(W_{G}\) and \(W_{D}\)
\State Initialize \(W_{G}\) and \(W_{D}\)
\Repeat 
	\State $J_i \leftarrow I_i + \lambda M_i,~M_i\leftarrow G(I_i;W_G)$, in eq.~(\ref{eq:additive_noise}), (\ref{eq:generator}).
	\State $r_i\leftarrow f_c(J_i)$.
    \State $o_i\leftarrow D(J_i; W_D)$, in eq.~(\ref{eq:discriminator}).
    \State Get \(g_{i}\) \nj{using} $r_i$ \nj{by} eq.~(\ref{eq:disc_gt}) or (\ref{eq:adv_gt})
    \State Get \(L_{d}\) from \(o_{i}\) and \(g_{i}\), using eq.~(\ref{eq:dsc_loss_emp}).
    \State Update \(W_{D}\) using \(L_{d}\)
    \State $o_i\leftarrow D(J_i; W_D)$, in eq.~(\ref{eq:discriminator}).
    \State Get \(L_{g}\) from \(o_{i}\), \(r_{i}\), and \(l_{i}\), using eq.~(\ref{eq:gen_loss}).
    \State Update \(W_{G}\) using \(L_{g}\).
\Until{\(L_{g}\) converges}
\end{algorithmic}
\label{algo:train_pgn}
\end{algorithm}

\begin{figure*}[t]
\centering
\begin{subfigure}[t]{0.03\textwidth}
\textbf{(A)}
\end{subfigure}
\begin{subfigure}[t]{0.23\textwidth}
\includegraphics[width=\linewidth,valign=t]{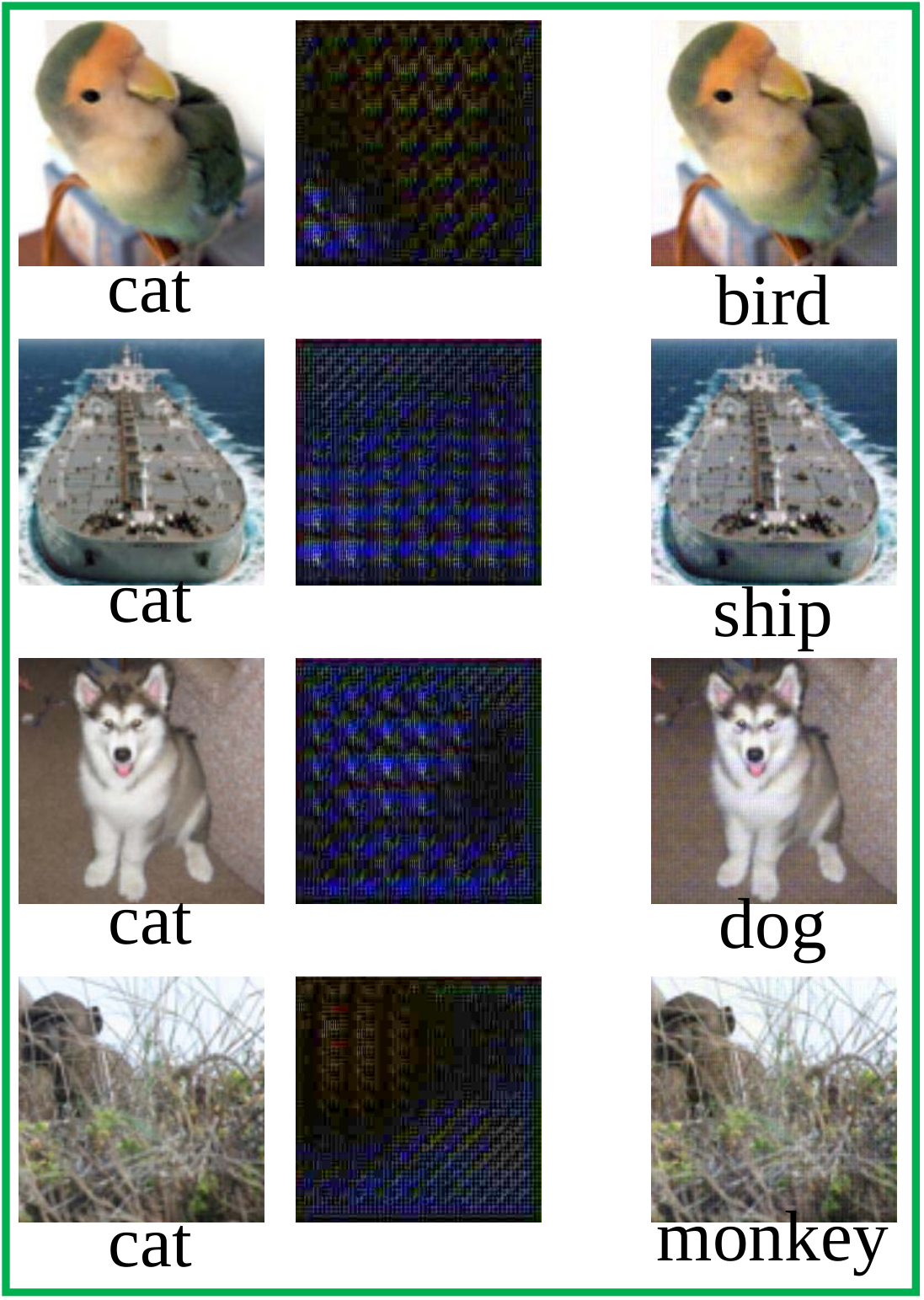}
\end{subfigure}\hfill
\begin{subfigure}[t]{0.23\textwidth}
\includegraphics[width=\linewidth,valign=t]{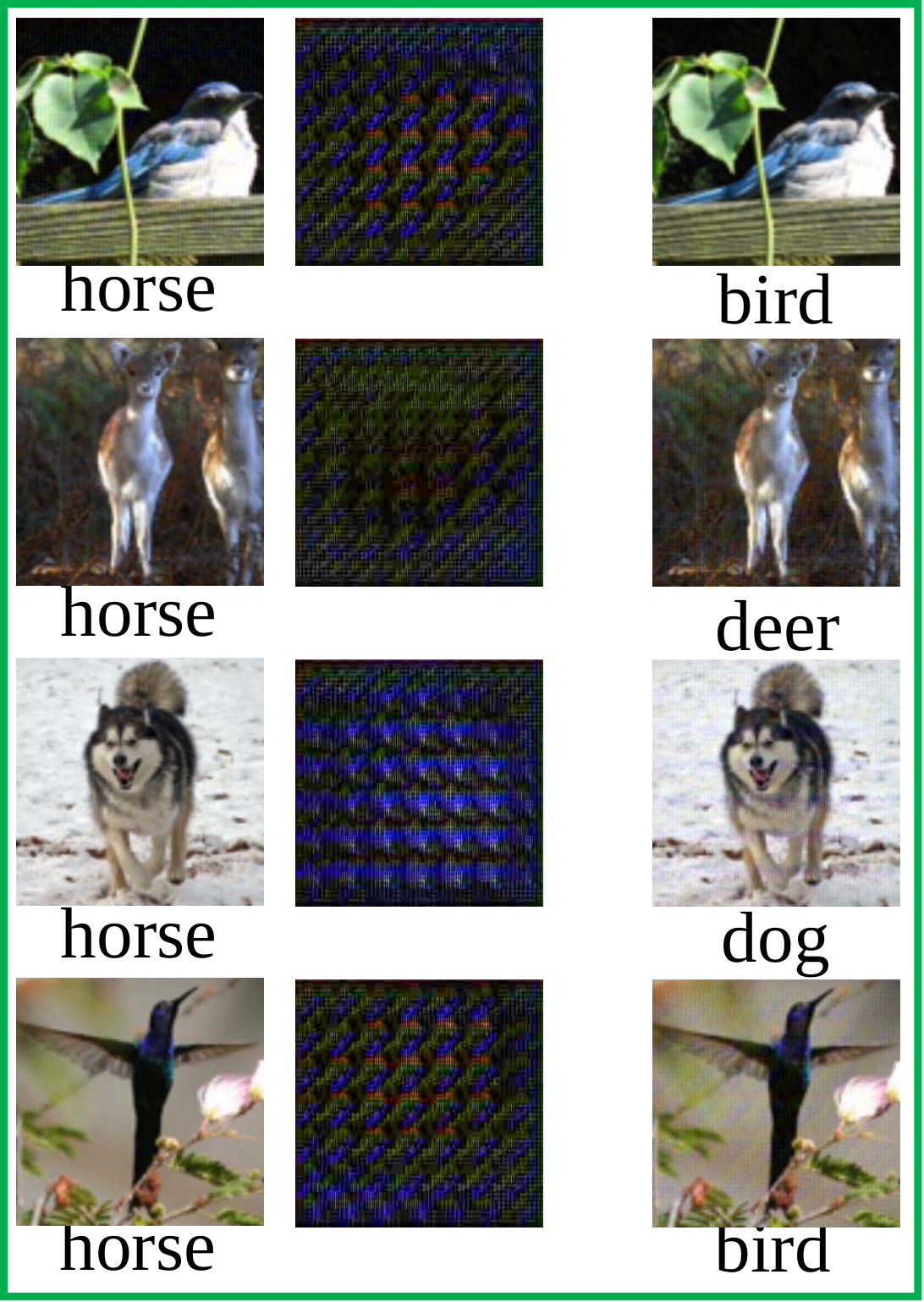}
\end{subfigure}\hfill
\begin{subfigure}[t]{0.03\textwidth}
\textbf{(B)}
\end{subfigure}
\begin{subfigure}[t]{0.23\textwidth}
\includegraphics[width=\linewidth,valign=t]{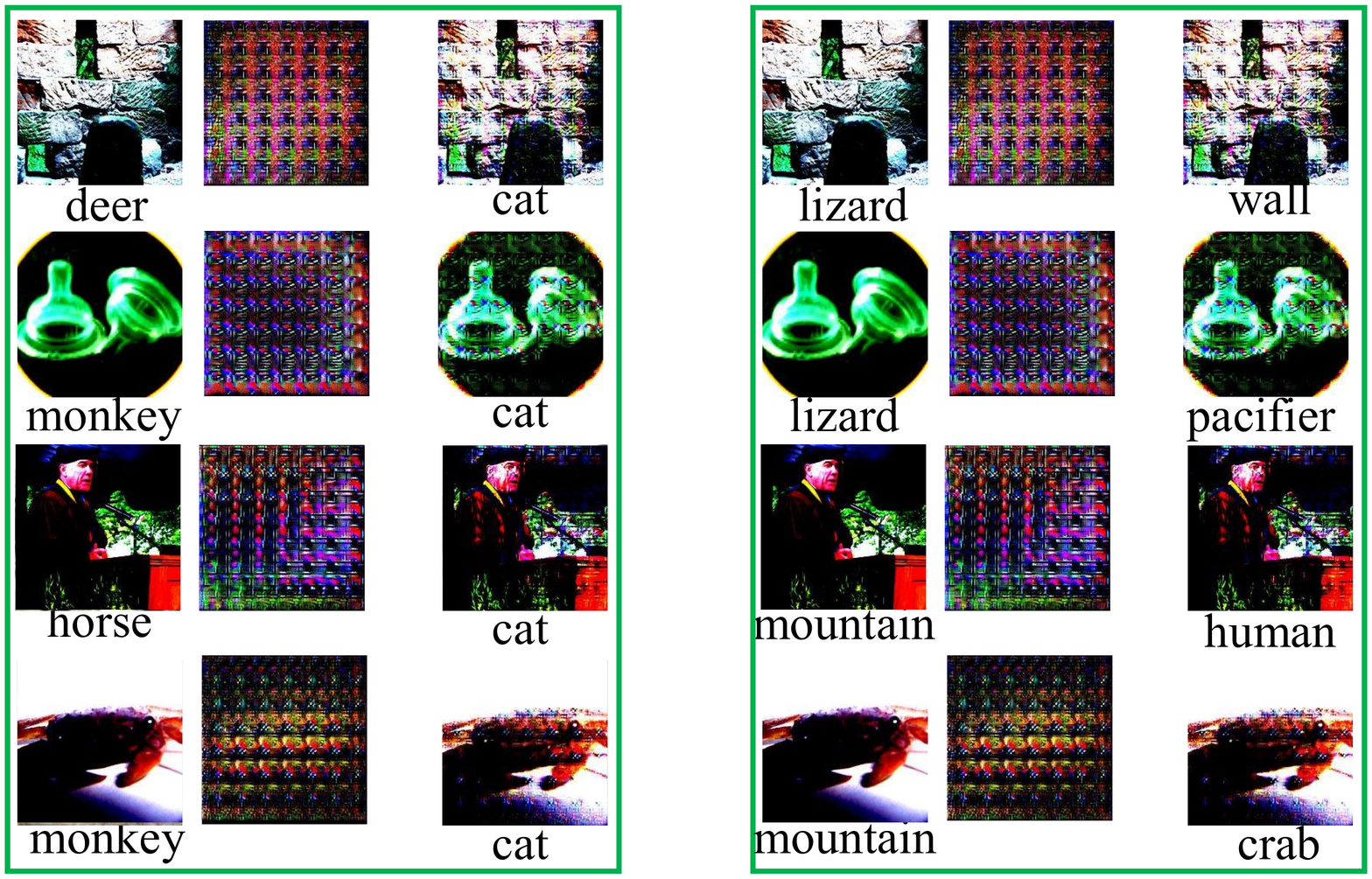}
\end{subfigure}\hfill
\begin{subfigure}[t]{0.23\textwidth}
\includegraphics[width=\linewidth,valign=t]{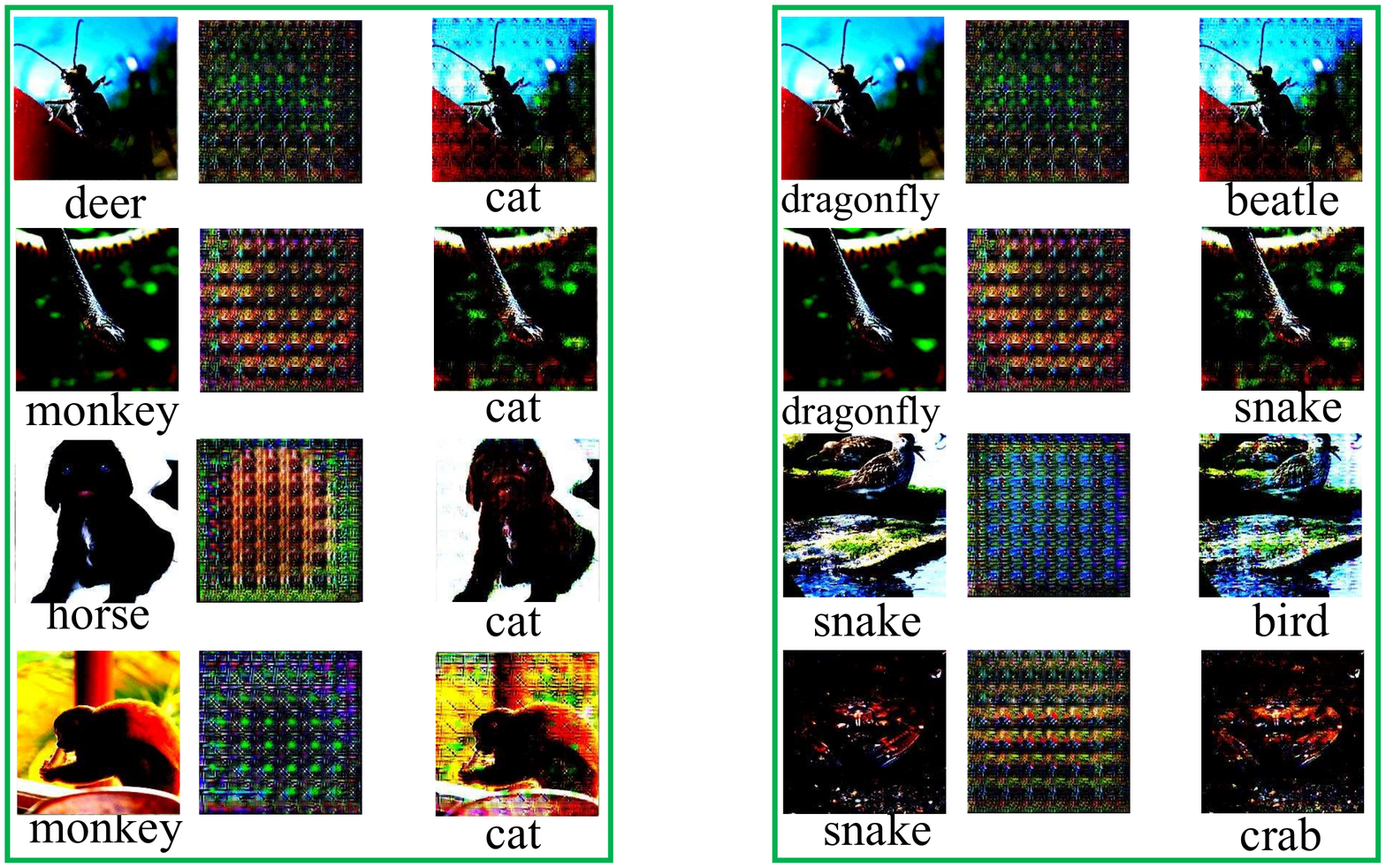}
\end{subfigure}
\vspace{-2mm}
\caption[This is for my LOF]{\nj{Examples} of the \nj{triplets for the enhancing problem:} original image (left), generated perturbation (mid), and perturbed image (right). 
The examples in (A) denote the case when we use vanilla images, and those in (B) show the results from normalized images. 
Intensity of the perturbation in (A) is ten times amplified \nj{for visualization}. (Best viewed in color)}
\label{fig:exp_main}
\vspace{-2mm}
\end{figure*}



\subsection{Training}
\label{sec:prop_training}
By using the defined variables and loss terms, we can train the proposed network in \nj{a similar way} to \nj{the} adversarial min-max training as introduced in \cite{goodfellow2014generative}.
The pseudo-code in Algorithm~\ref{algo:train_pgn} \nj{describes} the detailed training scheme of the proposed algorithm.
For each iteration, we first generate perturbation vector $M_i$ and check if the perturbation satisfies the desired goal in the form of target vector $g_i$.
After setting $g_i$, same as usual advesarial frameworks, we train the discriminator network with given peturbation.
Then, we re-calculate the output value $o_i$ of the discriminator using the updated discriminator and update the generator to deceive the discriminator.

\nj{In our implementation, the} network parameters are updated by Adam optimizer~\cite{kingma2014adam}.
We used fixed learning rate $lr = 0.0001$ to \nj{train both} the generator and the discriminator, and stopped the iteration at $20$ epoch, in practice. 
The additive parameter $\lambda$ is set to $1$ for the entire experiments.

\subsection{Implementation Detail}
\label{sec:network_Setting}
The proposed algorithm is ideally independent \nj{of} a \nj{classifier}. 
However, the Nash-equilibrium~\cite{goodfellow2014generative,mao2016least} for general adversarial framework is difficult to find, and hence an efficient design of initial condition is required.
Thus, For the enhancing problem, we initialize the discriminator network with the weight parameters of target classifier.
In this case, we share the classification network \nj{with} the discriminator, and only a fully connected layer with sigmoid activation \nj{is} additionally trained.
For \nj{the} adversarial problem, both cases \nj{of known and unknown classifier structure are considered.}
In unknown case, we apply ResNet101 network for the discriminator.
For the known case, the same network as target classifier is used.
In both known and unknown cases, ImageNet pre-trained parameters are used to initialize the discriminator, and 
the final fully connected layers are trained.

The generator network of the proposed algorithm consists of \knj{an} encoder and \knj{a} decoder.
We applied the Imagenet pre-trained \knj{ResNet} for the encoder with \knj{a} layer size \knj{of} 50 or 101, respectively.
To define the decoder, we use four deconvolution blocks each \knj{consisting} of three deconvolution layers with filter size $4\times4$, $3\times3$, and $3\times3$, and one final deconvolution layer with filter size $4\times4$ and stride $2$. For each deconvolution block, \knj{the} stride of the first deconvolution layer is set to $2$, and those of the last two deconvolution layer are set to $1$. The \knj{numbers of channels for the total $13$ deconvolution layers} are set to $2048,$$ 1024, 512, 512, 256, 256, 256, 128, 128, 128, 64, 64$, and $3$. 
\section{Experiments}
\label{sec:exp}
Now, we validate the performance of the proposed algorithm \nj{for} the two presented \nj{problems}: \nj{the} enhancing problem and \nj{the} adversarial problem.
Since this is the first attempt to solve \nj{the} enhancing problem, we analyze the proposed network \knj{by} varying parameters and networks.
For the adversarial problem, we compared the performance of the proposed algorithm with those of two representative algorithms that utilize target classifier information, since there \knj{has not been any} algorithms proposed to solve the adversarial problem without knowing the target classifiers.

\subsection{Experimental Settings and Datasets}


In the experiment, we examined recent classifiers such as ResNet~\cite{he2016deep} ($50, 101$), VGG~\cite{simonyan2014very}, and DenseNet ($169$)~\cite{huang2016densely} as target classifiers to be controlled.
For the encoder $E(\cdot)$ of the generator network, we tested two cases, 
\knj{each of which uses}  
ImageNet pre-trained ResNet $101$ (Proposed-101) and ResNet $50$ (Proposed-50) as a base type of the proposed model, respectively.
We also analyzed the effect of regularization loss \nj{$L_r$} by testing different regularization parameter $\gamma$ for \knj{both the} enhancing and \knj{the} adversarial problems. 
For the adversarial problem, the proposed algorithm is also tested with a black-box version `Proposed-B' whose network structure is unknown.
The adversarial performance is compared to the works \knj{of Moosavi \etal} (UAP)~\cite{Moosavi-Dezfooli_2017_CVPR} and \knj{Goodfellow \etal} (EHA)~\cite{goodfellow2014explaining}.
\nj{In all the} experiments, STL-10 dataset~\cite{coates2011analysis} and subsets of ImageNet dataset~\cite{russakovsky2015imagenet} were used.
\nj{To form the subsets of} ImageNet dataset, 10 and 50 classes were randomly \knj{selected}, respectively.
To verify the effect of image normalization, we experimented with the STL-10 dataset without normalization and performed same tests on the ImageNet subset with \knj{normalization applied}.
All the images are scaled to $224\times224$ in the experiments.
For main analysis, we set $\gamma = 0.0001$ for the enhancing problem (Figure~\ref{fig:exp_main} and Table~\ref{table:quant_enhance}) and set $\gamma = 3$ for the adversarial problem (Figure~\ref{fig:exp_main_adv} and Table~\ref{table:quant_adv}).
Target classifiers (Vanilla) were trained with $lr=0.0005$ and $20$ iterations, which sufficient for convergence.

\begin{table*}[t]
\centering
 \caption{Top-$1$ precision and mAP scores for performance enhancing problem: $\gamma$ = 0.0001}
 \label{table:quant_enhance}
  \resizebox{0.99\linewidth}{!}{
\begin{tabular}{|l|l|l|l|l||l|l|}
\hline  
\multirow{2}{*}{Dataset} & \multirow{2}{*}{Classifier} & \multicolumn{3}{|c||}{Encoder from scratch [A]} &\multicolumn{2}{c|}{Imagenet Pre-trained encoder [B]}\\ 
\cline{3-7}
 &  & Vanilla& Proposed-50 & \textbf{Proposed-101}&  Vanilla& \textbf{Proposed-101}\\
\hline
stl-10&ResNet50 & 61\% / 0.588 & 78.1\% / 0.760 & 84.9\% / 0.826 &  92.0\% / 0.883 & 93.6\% / 0.890 \\
	  &ResNet101& 63\% / 0.570 & 84.1\% / 0.785 & 89.6\% / 0.852 & 93.0\% / 0.896 & 94.2\% / 0.907\\
	  &VGG16& 52\% / 0.518 &92.4\% / 0.903 & 91.3\% / 0.866 & 83.4\% / 0.757 & 94.6\% / 0.930 \\
      &DenseNet169& 65\% / 0.564 & 86.5\% / 0.781 & 89.7\% / 0.829 & 95.4 \% 0.884 & 95.9\% / 0.897 \\
\hline 
ImageNet-10&ResNet50 & 78\% / 0.685 & 96.0\% / 0.927 & 91.6\% / 0.871 & 98.0\% / 0.969 & 99.0\% / 0.974 \\
	  	   &ResNet101& 77\% / 0.666 & 93.6\% / 0.898 & 90.4\% / 0.871 & 98.0\% / 0.970 & 98.6\% / 0.972 \\
	       &VGG16& 71\% / 0.613 &95.4\% / 0.886 & 96.2\% / 0.900 & 94.8\% / 0.936 & 96.0\% / 0.927 \\
           &DenseNet169& 77\% / 0.659 &97.0\% / 0.936 & 93.4\% / 0.884 & 98.0\% / 0.970 & 99.0\% / 0.971 \\
\hline
ImageNet-50&ResNet50 & 72\% / 0.649 &91.5\% / 0.883 & 91.3\% / 0.886 & 94.4\% / 0.922 & 95.6\% / 0.928 \\
	  	   &ResNet101& 71\% / 0.635 &89.0\% / 0.856 & 88.1\% / 0.832 & 95.7\% / 0.938 & 96.7\% / 0.949 \\
           &VGG16& 71\% / 0.616 &93.4\% / 0.894 & 94.2\% / 0.902 & 88.5\% / 0.855 & 92.0\% / 0.906\\
           &DenseNet169& 74\% / 0.626 &92.1\% / 0.861 & 93.1\% / 0.875 & 95.5\% / 0.927 & 96.3\% / 0.934\\
\hline
\end{tabular}} 
\end{table*}

\subsection{Enhancing Problem}
\textbf{Main Analysis: }
Figure~\ref{fig:exp_main} (A) shows the examples of the generated perturbation mask for \nj{the} enhancing problem from \knj{STL-10} images without normalization, \knj{\ie pixel values are in between 0 and 1}.
In (A), the original images are misclassified to a \nj{cat} or a horse, respectively.
However, if the proposed perturbation is added to the \nj{misclassified} original image, we can see that the target classifier correctly \nj{classifies} the image.
Figure~\ref{fig:exp_main} (B) presents similar results from the normalized images \knj{of ImageNet dataset, \ie pixel values are normalized to have zero mean and unit variance.}
\knj{In the figure, we can see that originally misclassified examples are correctly classified by adding the corresponding perturbations generated.}
These corrections are remarkable in that the perturbations are small enough that they do not compromise the main characteristics of the original image, and do not resemble the shape of the \nj{correct} target classes.
\begin{figure*}[t]
\centering
\begin{subfigure}[t]{0.03\textwidth}
\textbf{(A)}
\end{subfigure}
\begin{subfigure}[t]{0.21\textwidth}
\includegraphics[width=\linewidth,valign=t]{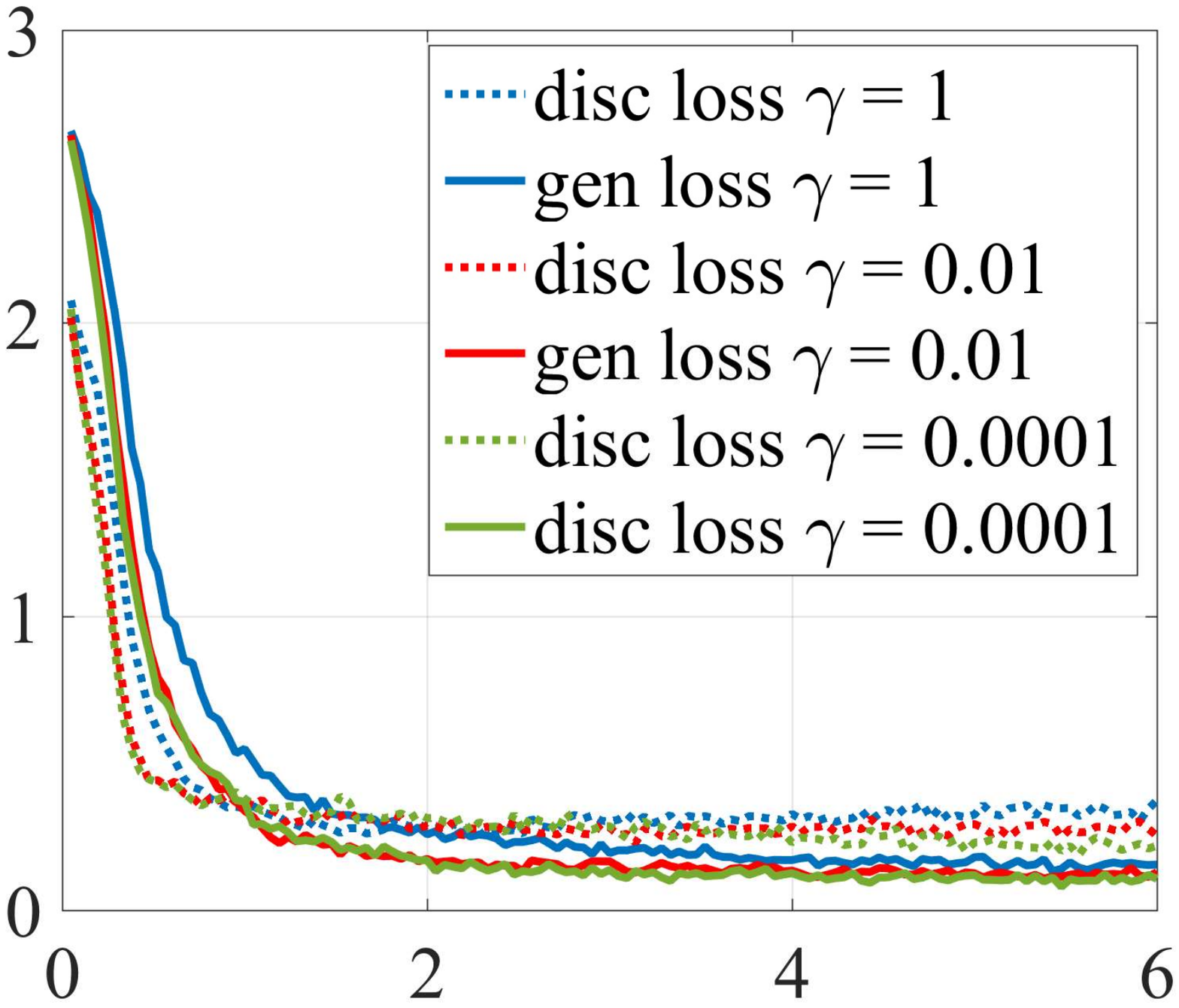}
\end{subfigure}\hfill
\begin{subfigure}[t]{0.03\textwidth}
\textbf{(B)}
\end{subfigure}
\begin{subfigure}[t]{0.21\textwidth}
\includegraphics[width=\linewidth,valign=t]{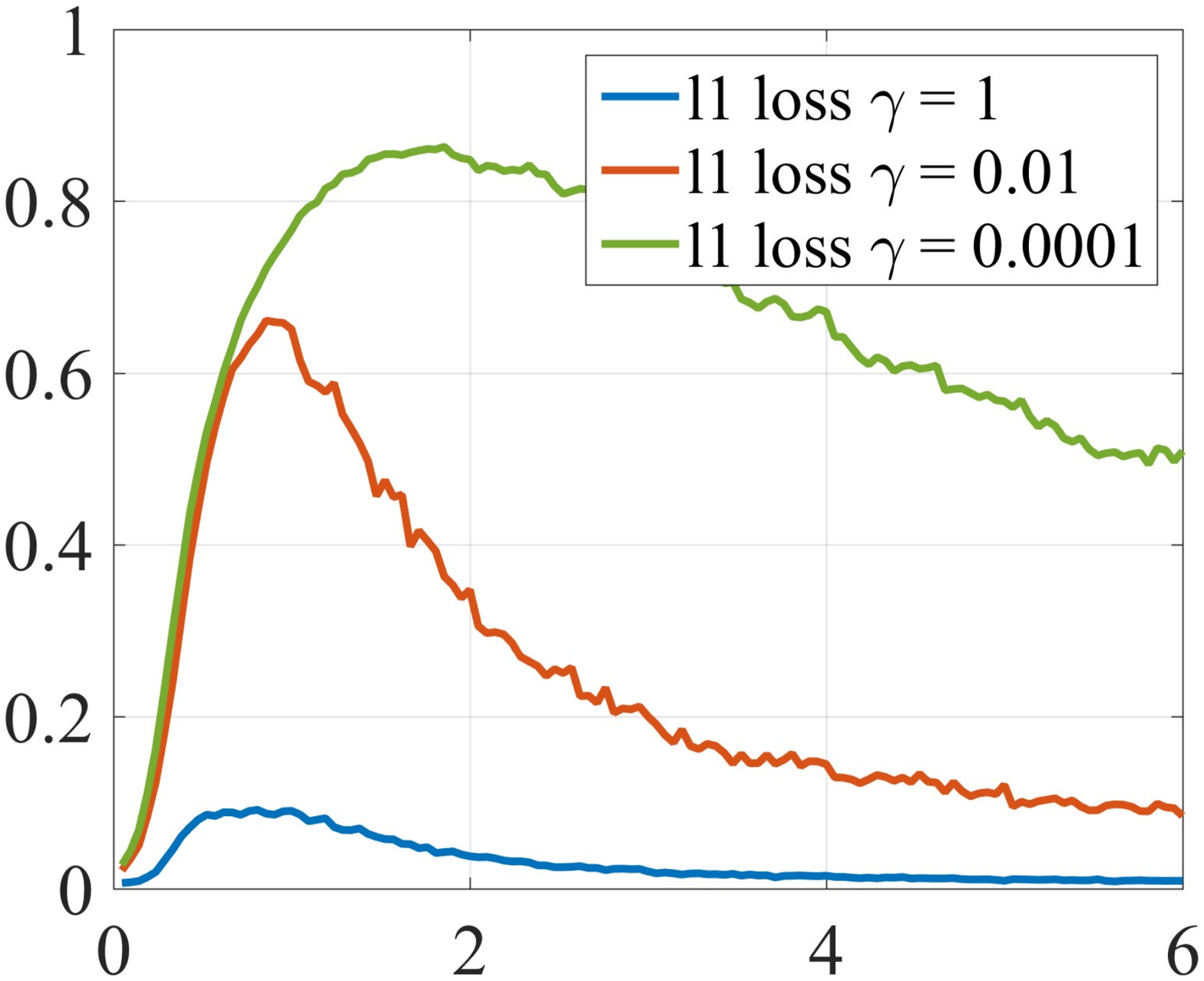}
\end{subfigure}\hfill
\begin{subfigure}[t]{0.03\textwidth}
\textbf{(C)}
\end{subfigure}
\begin{subfigure}[t]{0.21\textwidth}
\includegraphics[width=\linewidth,valign=t]{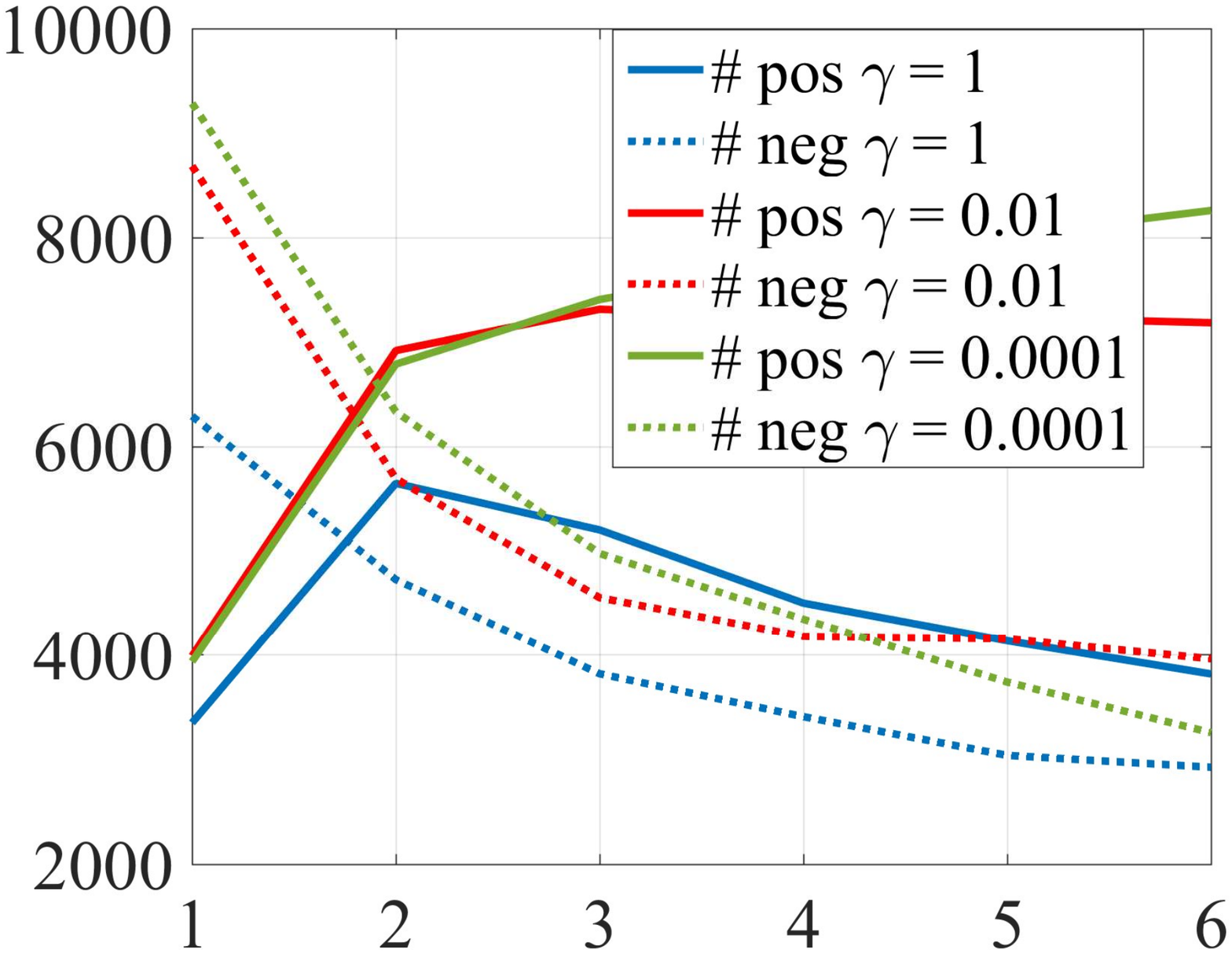}
\end{subfigure}\hfill
\begin{subfigure}[t]{0.03\textwidth}
\textbf{(D)}
\end{subfigure}
\begin{subfigure}[t]{0.21\textwidth}
\includegraphics[width=\linewidth,valign=t]{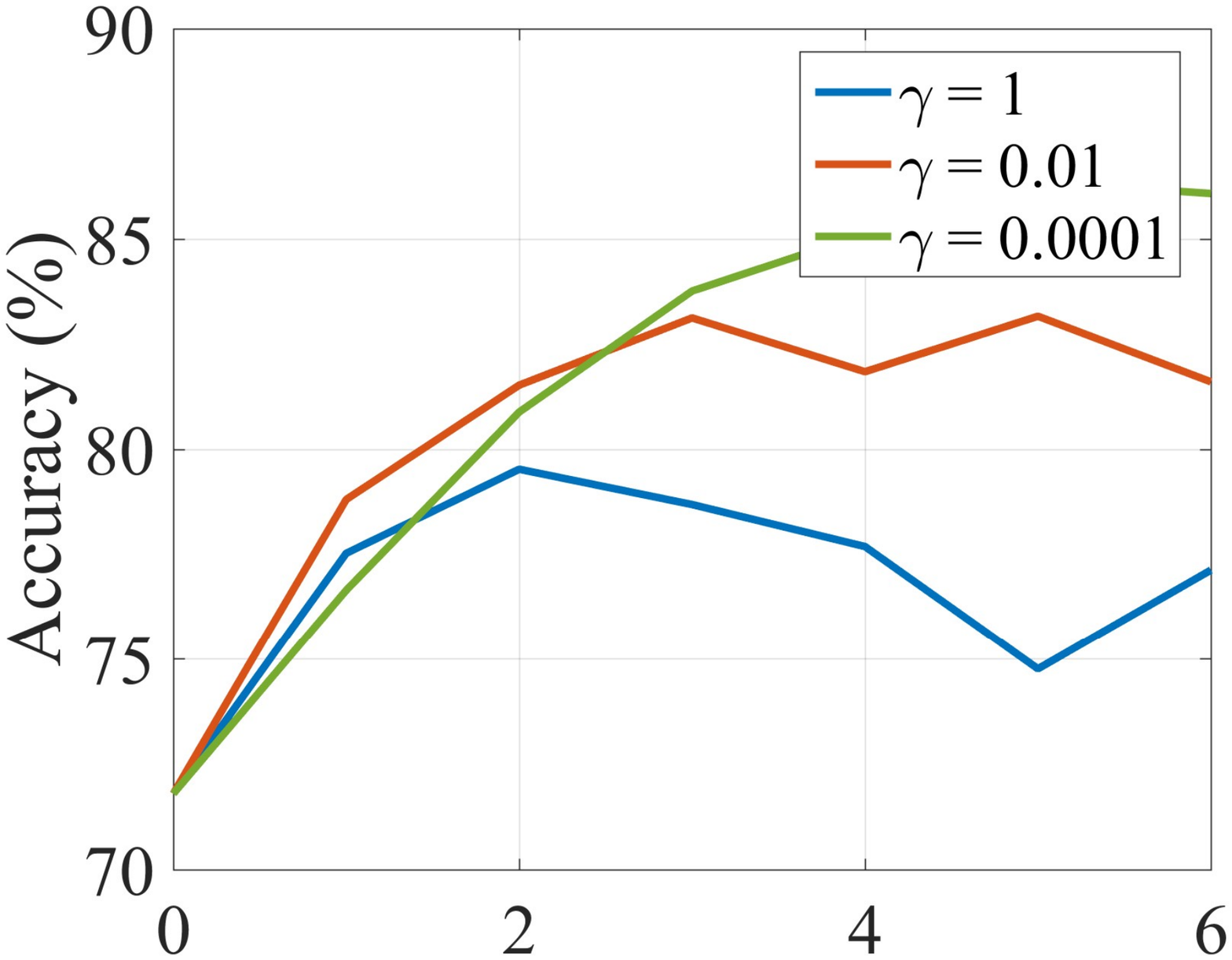}
\end{subfigure}
\vspace{-2mm}
\caption[This is for my LOF]{Graphs describing the convergence and performance enhancement of the proposed algorithm with different $\gamma$:
(A) discriminator loss and generator loss, (B) $\ell$1-loss, (C) positive and negative samples in training set, (D) Accuracy.
Horizontal axis denotes epoch.
The experiments were performed \knj{with the ResNet101 classifier on ImageNet50} using the `Proposed-50'.
}
\vspace{-2mm}
\label{fig:exp_graph_enhance}
\end{figure*}

Table~\ref{table:quant_enhance} presents the quantitative results \nj{showing} the enhanced performance of the proposed algorithm. 
Experiments were conducted on two cases of classifier: (A) classifiers trained from scratch, (B) classifiers trained from ImageNet pre-trained net.
For both cases, we set the target classifiers to evaluation \knj{the} mode which \knj{excludes} the randomness of the classifiers, such as that caused by batch normalization or drop-out, which are usual settings for deep learning testing. 
In Table~\ref{table:quant_enhance}(A), two proposed versions, `Proposed-50' and `Proposed-101', were examined to discover whether the proposed algorithm is affected by the structure of the encoder. 
The result shows that the proposed algorithm can enhance the classification performance of the listed target classifiers for \knj{both versions} in every dataset, and the performance difference is not significant for the two versions.
It \knj{is also worth noting that in many cases,} the classification performance \knj{enhancement} by the proposed perturbation is comparable to the \knj{results of the fine-tuned network initialized with the ImageNet pre-trained parameters}.
In particular, the VGG network achieved \knj{the highest classification performance enhancement by the proposed method.}
This is meaningful in that it shows that our perturbation can compensate for the insufficient information of the classifier.


In Table~\ref{table:quant_enhance}(B), the performance-enhancing results from `Proposed-101' for ImageNet pre-trained classifiers are presented. 
In this case, the classification performance of the vanilla classifiers is obviously higher than that of the scratch-trained version, and hence \nj{it is} more difficult to enhance the performance.
Nevertheless, our algorithm has succeeded in improving performance for all the listed cases.
In particular, we confirmed that the VGG classifier had more performance improvement than the other recent classifiers such as ResNet and DenseNet, and the performance gaps between VGG \nj{and these recent} classifiers were decreased \nj{by adding the perturbation}.
This is meaningful in that it shows the possibility that a relatively simple network like VGG can get better performance. 

\begin{figure*}[t]
\centering
\begin{subfigure}[t]{0.03\textwidth}
\textbf{(A)}
\end{subfigure}
\begin{subfigure}[t]{0.23\textwidth}
\includegraphics[width=\linewidth,valign=t]{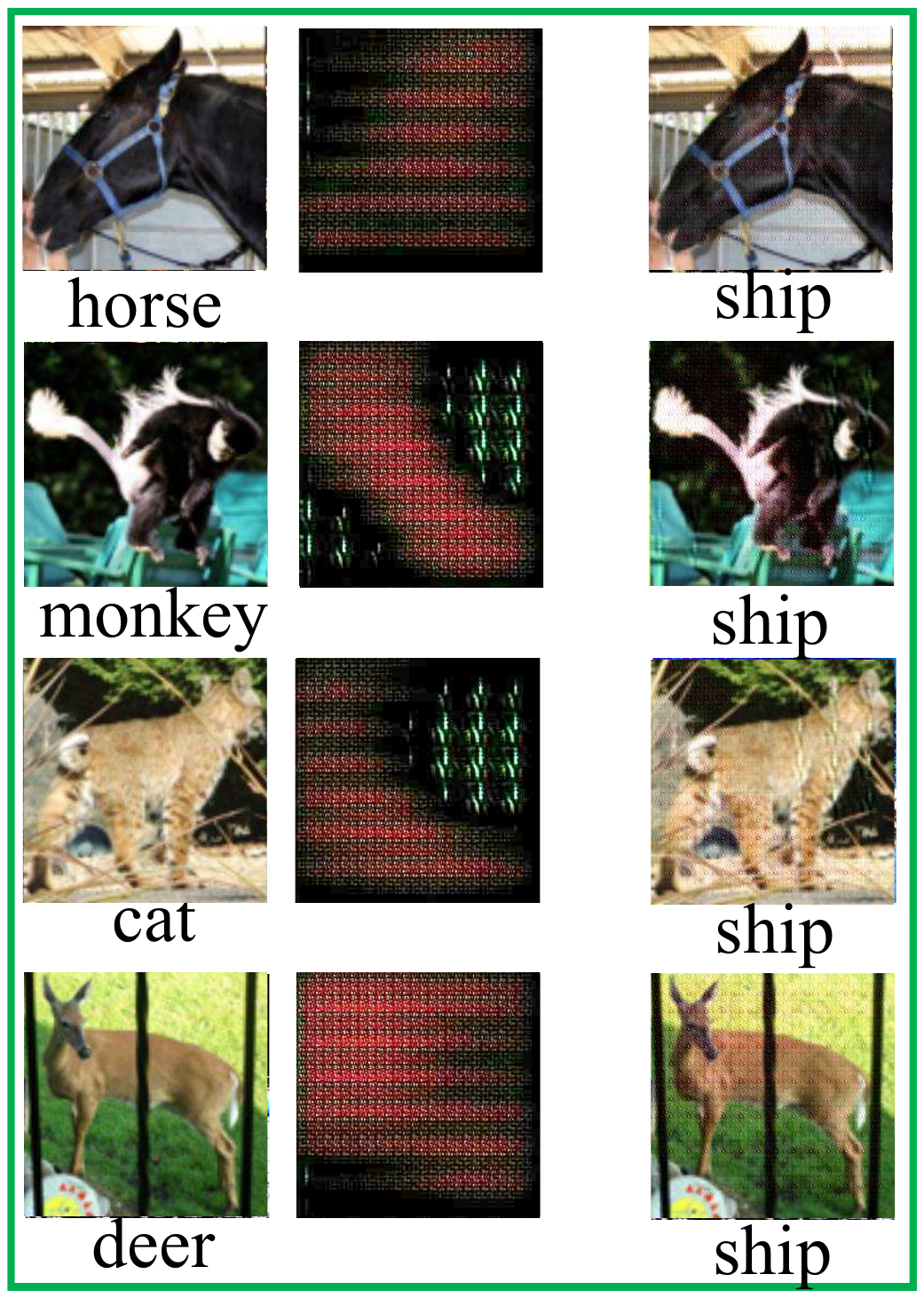}
\end{subfigure}\hfill
\begin{subfigure}[t]{0.23\textwidth}
\includegraphics[width=\linewidth,valign=t]{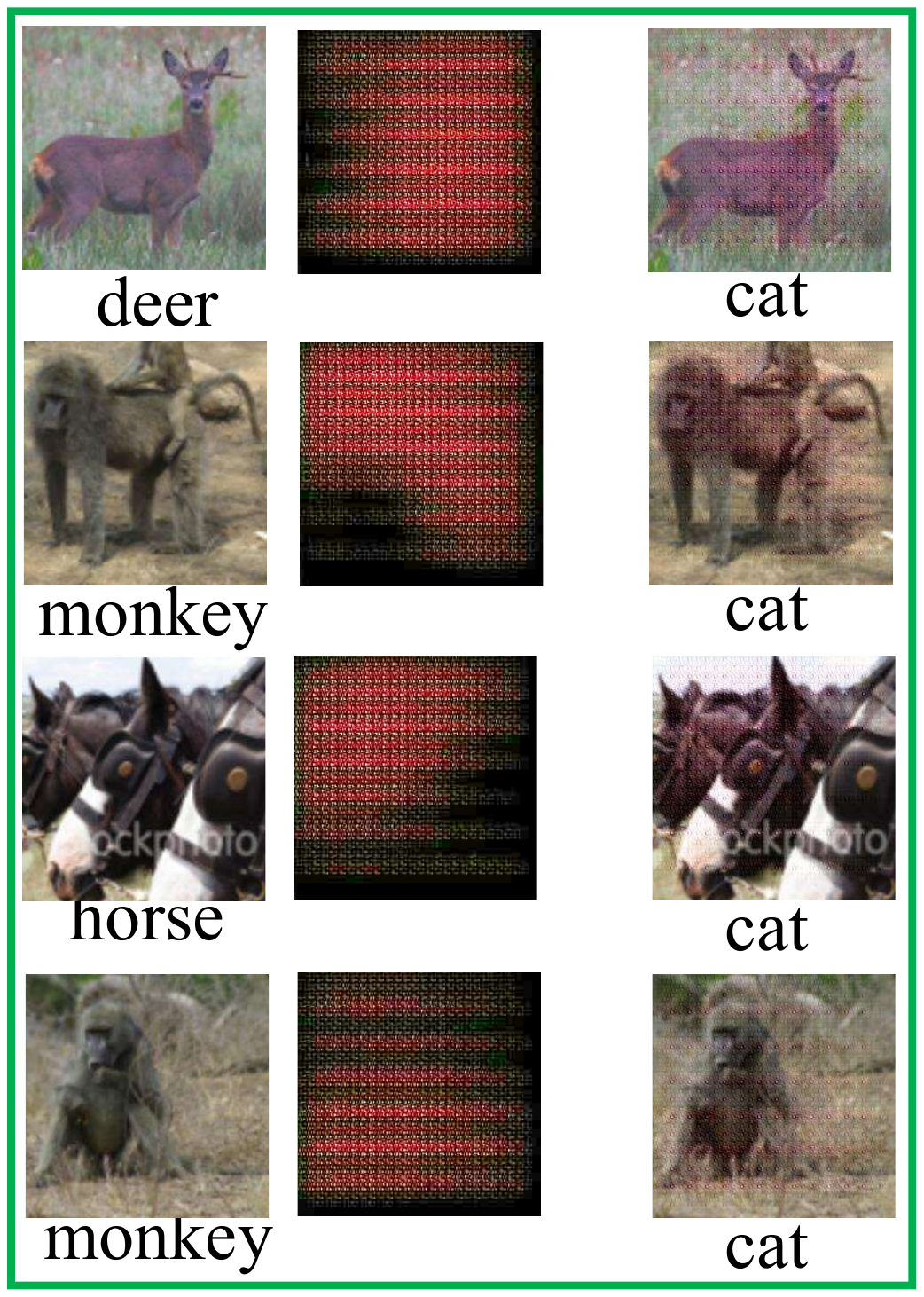}
\end{subfigure}\hfill
\begin{subfigure}[t]{0.03\textwidth}
\textbf{(B)}
\end{subfigure}
\begin{subfigure}[t]{0.23\textwidth}
\includegraphics[width=\linewidth,valign=t]{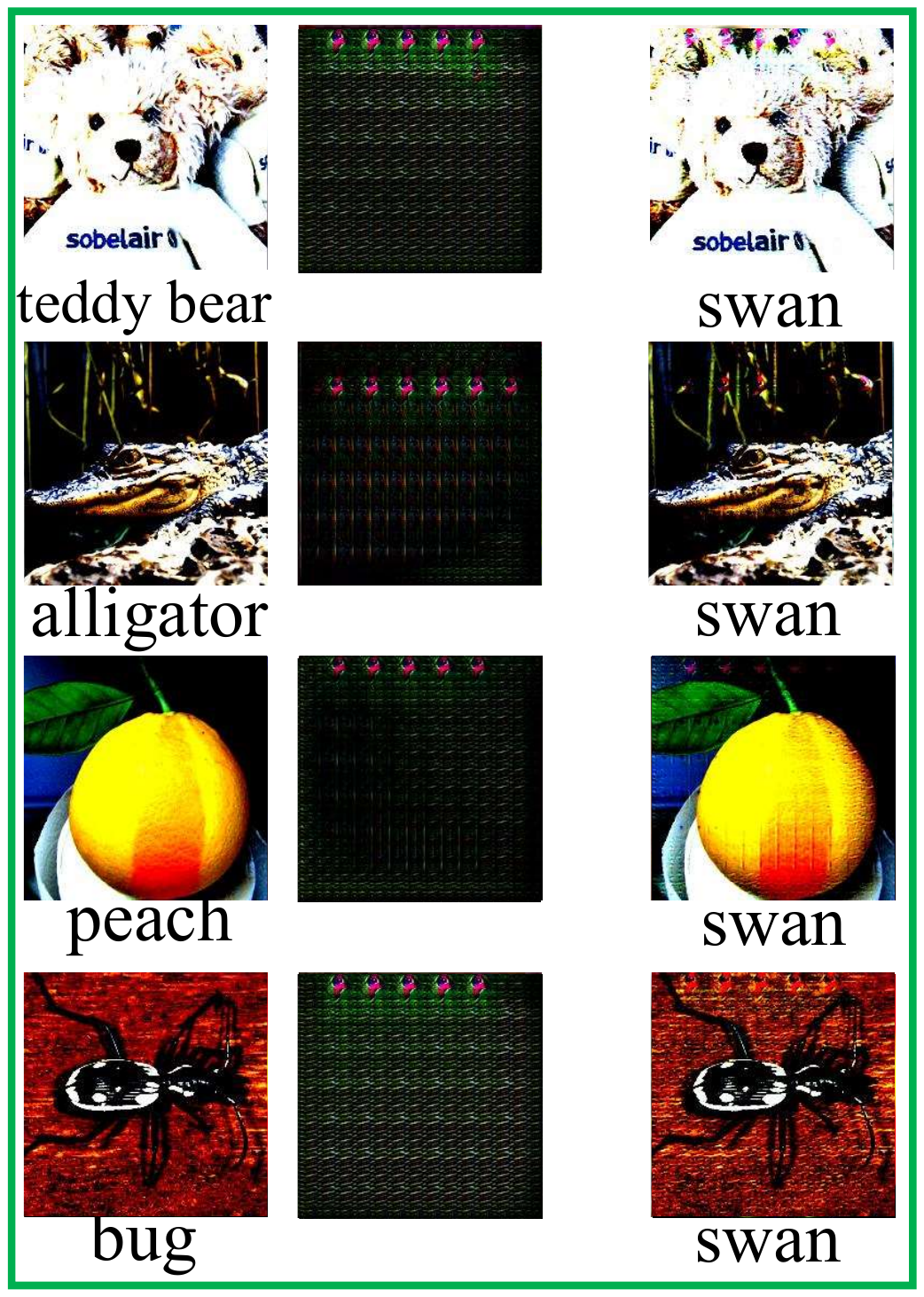}
\end{subfigure}\hfill
\begin{subfigure}[t]{0.23\textwidth}
\includegraphics[width=\linewidth,valign=t]{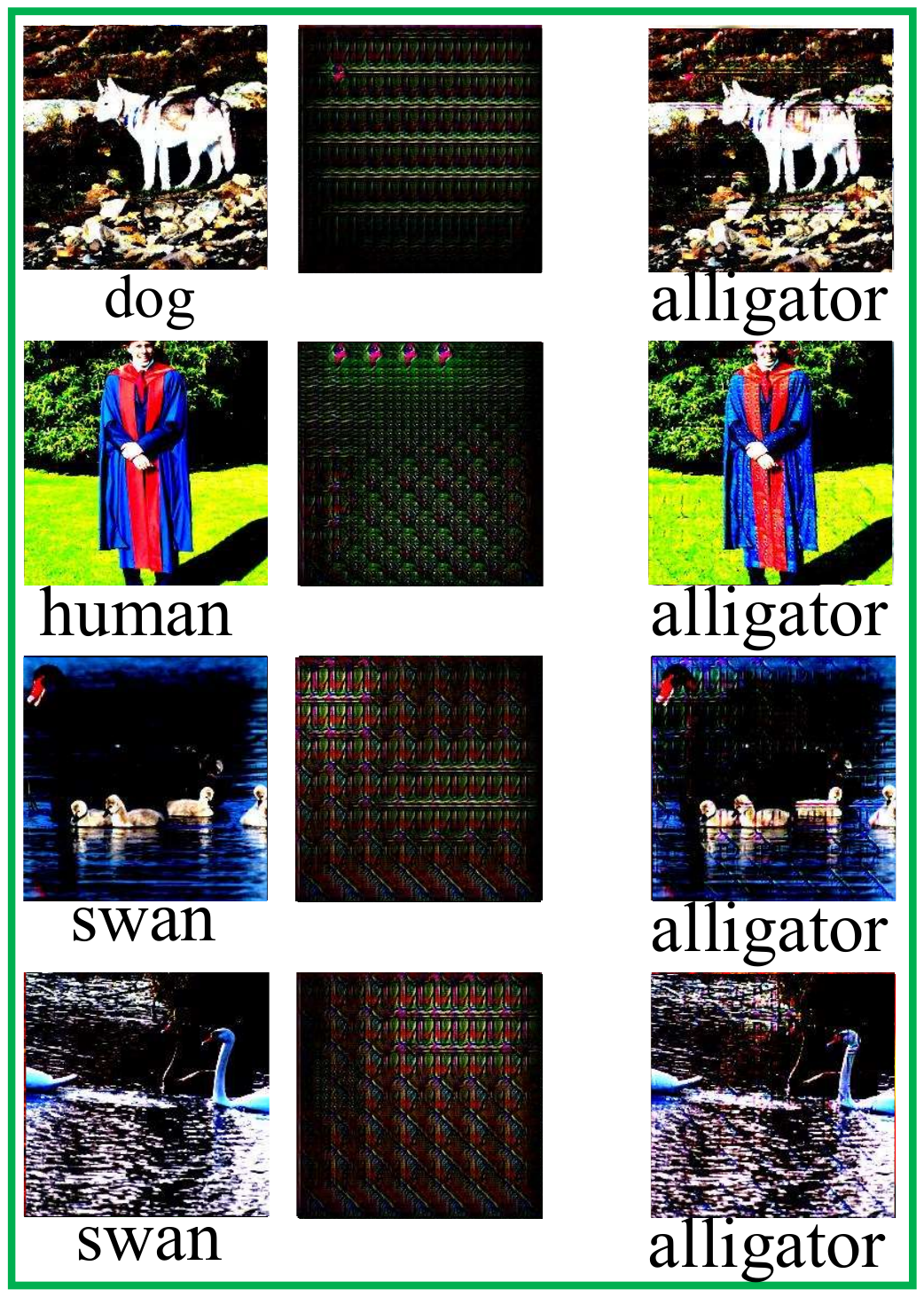}
\end{subfigure}
\caption[This is for my LOF]{\nj{Examples} of the \nj{triplets for the adversarial problem:} original image (left), generated perturbation (mid), and perturbed image (right). 
The examples in (A) denote the case when we use vanilla images, and those in (B) show the results from normalized images. 
Intensity of the perturbation in (A) is ten times amplified \nj{for visualization}. (Best viewed in color) }
\label{fig:exp_main_adv}
\end{figure*}
\begin{figure}[t]
\begin{center}
\includegraphics[width=0.99\linewidth]{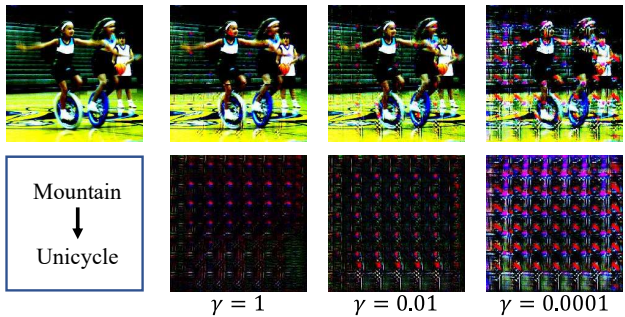}
\end{center}
\vspace{-3mm}
   \caption{
   Example perturbed images from the enhancing problem with different $\gamma = 1,0.01,$ and $0.0001$.
   }
\vspace{-2mm}   
\label{fig:exp_l1_diff}
\end{figure}

\textbf{Convergence and $\bm{\ell1}$-Regularization: }
\knj{The graphs in Figure~\ref{fig:exp_graph_enhance} describe the change of losses as the epoch progresses. 
The 
generator loss $L_g$, the discriminator loss $L_d$, and the $\ell1$-loss $L_r$ are plotted with different $\gamma$'s in equation~(\ref{eq:gen_loss}).
}
As shown in the graph (A), both $L_g$ and $L_d$ converge for every setting of $\gamma$, as desired.
Also, $L_r$ showed a similar convergence tendency for different $\gamma$, as in the graph (B).
However, as $\gamma$ decreases, the time $L_r$ start to decrease delayed, and the maximum value of $L_r$ increased.
From the graphs (C) and (D), We can see that these changes of $L_r$ have a direct impact on the enhancing performance.
The graph (C) describes the number of positive (false to correct) and negative (correct to false) samples in training set for every epoch.
As seen in the graph, The number of positive samples were increased and that of negative samples were decreased until the $\ell1$ intensity of the perturbation fell below a certain level.
We can also see through graph (D) that the accuracy did not rise from the point where the increase of the gap between positive and negative samples is slowed down.
In summary, the amount of possible performance enhancement and $\ell1$-regularization is a trade-off relationship, and the amount can be adjusted depending on how much perturbation is allowed.

The qualitative result of perturbed images for different $\gamma$ is presented in Figure~\ref{fig:exp_l1_diff}.
In the example, we can see that the image changes more to correct the classification result as $\gamma$ becomes smaller.
We confirmed from graph (C) that the performance improvement was about $10\%$ even when $\gamma =1$, and the change in the image would be very small in this case.
In fact, it may be wise to remove the $\ell1$ regularization term if performance gains are only concerned. The analysis is presented in Appendix~\ref{app:convergence}.

\subsection{Adversarial Problem}

\textbf{Main Analysis: }
The example adversarial results from the proposed algorithm are presented in Figure~\ref{fig:exp_main_adv}.
The black box version `Proposed-B' was used \knj{for} the results in the figure, and we can see that the classification results can be changed \knj{by} adding small perturbations for both normalized and \knj{original} images.
It is noteworthy that the \knj{appearances of the perturbations} produced by the normalized \knj{images} and \knj{those of the non-normalized images are} largely different.

Table~\ref{table:quant_adv} shows the performance drop by the adversarial perturbation. 
\knj{In most cases, the} proposed algorithm achieved better adversarial performances \knj{than the conventional} algorithms~\cite{Moosavi-Dezfooli_2017_CVPR,goodfellow2014explaining} \knj{in both cases of known and unknown classifier network.} 
We also confirmed that the proposed algorithm successfully degrades performance, regardless whether the image is normalized.
What is noteworthy is that even if there is no information in the target classification network, the adversarial \knj{performances} were not largely degraded compared to \knj{the case of known network information}.
This is significant in that the proposed algorithm enables more realistic applications than existing algorithms that require a network structure, because the structure of the classifier is usually concealed.
\begin{figure}[t]
\begin{center}
\includegraphics[width=0.99\linewidth]{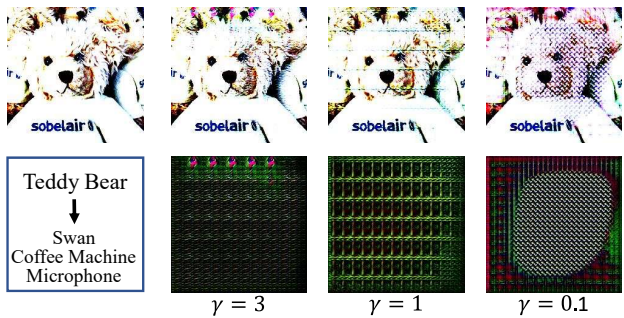}
\end{center}
\vspace{-3mm}
   \caption{
   Example perturbed images from the adversarial problem with different $\gamma = 3, 1,$ and $0.1$.
   }   
\vspace{-2mm}
\label{fig:exp_l1_diff_adv}
\end{figure}

\begin{table*}[t]
\centering
 \caption{Quantitative result for adversarial problem}
 \label{table:quant_adv}
  \resizebox{0.99\linewidth}{!}{
\begin{tabular}{|l|l|l|l|l|l|l|l|}
\hline  
\multicolumn{8}{|c|}{Quantitative result for adversarial problem: $\gamma$ = 3} \\ 
\hline
Dataset& Classifier & Vanilla& Proposed-50 & Proposed-101& proposed-B& UAP~\cite{Moosavi-Dezfooli_2017_CVPR}& EHA~\cite{goodfellow2014explaining}\\
\hline
stl-10&ResNet50 & 92.0\% / 0.883 &5.00\% / 0.071 & 5.15\% / 0.052 & 6.92\% / 0.073& 24.0\% / 0.176 & 20.4\% / 0.141 
\\
	  &ResNet101& 93.0\% / 0.896 &6.60\% / 0.081 & 5.32\% / 0.056 & 9.30\% / 0.094 & 22.3\% / 0.175 & 31.2\% / 0.200  \\
	  &VGG16& 83.4\% / 0.757 &7.00\% / 0.043 & 1.00\% / 0.028 & 7.49\% / 0.090 & 9.9\% / 0.099 & 77.7\% / 0.589  \\
      &DenseNet169& 95.4 \% 0.884 & 14.0\% / 0.136& 9.21\% / 0.104 & 2.80\% / 0.038 & 22.9\% / 0.169 & 19.4\% / 0.145 
      \\
\hline 
ImageNet-10&ResNet50 & 98.0\% / 0.969 & 6.10\% / 0.071 & 9.80\% / 0.110 & 12.4\% / 0.137 & 53.8\% / 0.319 & 9.60\% / 0.093 \\
	  	   &ResNet101& 98.0\% / 0.970 &6.00 \% / 0.078 & 7.00\% / 0.086 & 7.80\% / 0.086 & 31.2\% / 0.212 & 16.6\% / 0.137 \\
	       &VGG16& 94.8\% / 0.936 &4.00\% / 0.064 & 1.00\% / 0.032 & 5.40\% / 0.049 & 11.0\% / 0.115 & 73.4\% / 0.311 \\
           &DenseNet169& 98.0\% / 0.970  &5.00\% / 0.053 & 5.80\% / 0.063 & 3.00\% / 0.039 & 10.0\% / 0.113 & 14.8\% / 0.146 \\
\hline
ImageNet-50&ResNet50 & 94.4\% / 0.922 &3.72\% / 0.043 & 5.44\% / 0.052 & 10.8\% / 0.101 & 25.9\% / 0.162 & 14.2\% / 0.069  \\
	  	   &ResNet101& 95.7\% / 0.938 &12.0\% / 0.110 & 10.7\% / 0.092 & 11.6\% / 0.101 & 14.4\% /0.074 & 19.0\% / 0.088 \\
	       &VGG16& 88.5\% / 0.855 &2.00\% / 0.029 & 2.00\% / 0.022 & 3.20\% / 0.027  & 2.23\% / 0.021 & 56.4\% / 0.126 \\
           &DenseNet169& 95.5\% / 0.927 & 7.04\% / 0.062 & 8.20\% / 0.074 & 6.56\% / 0.063 & 39.7\% / 0.272 & 10.8\% / 0.070 \\
\hline

\hline
\end{tabular}} 
\end{table*}
\begin{figure*}[t]
\centering
\begin{subfigure}[t]{0.03\textwidth}
\textbf{(A)}
\end{subfigure}
\begin{subfigure}[t]{0.21\textwidth}
\includegraphics[width=\linewidth,valign=t]{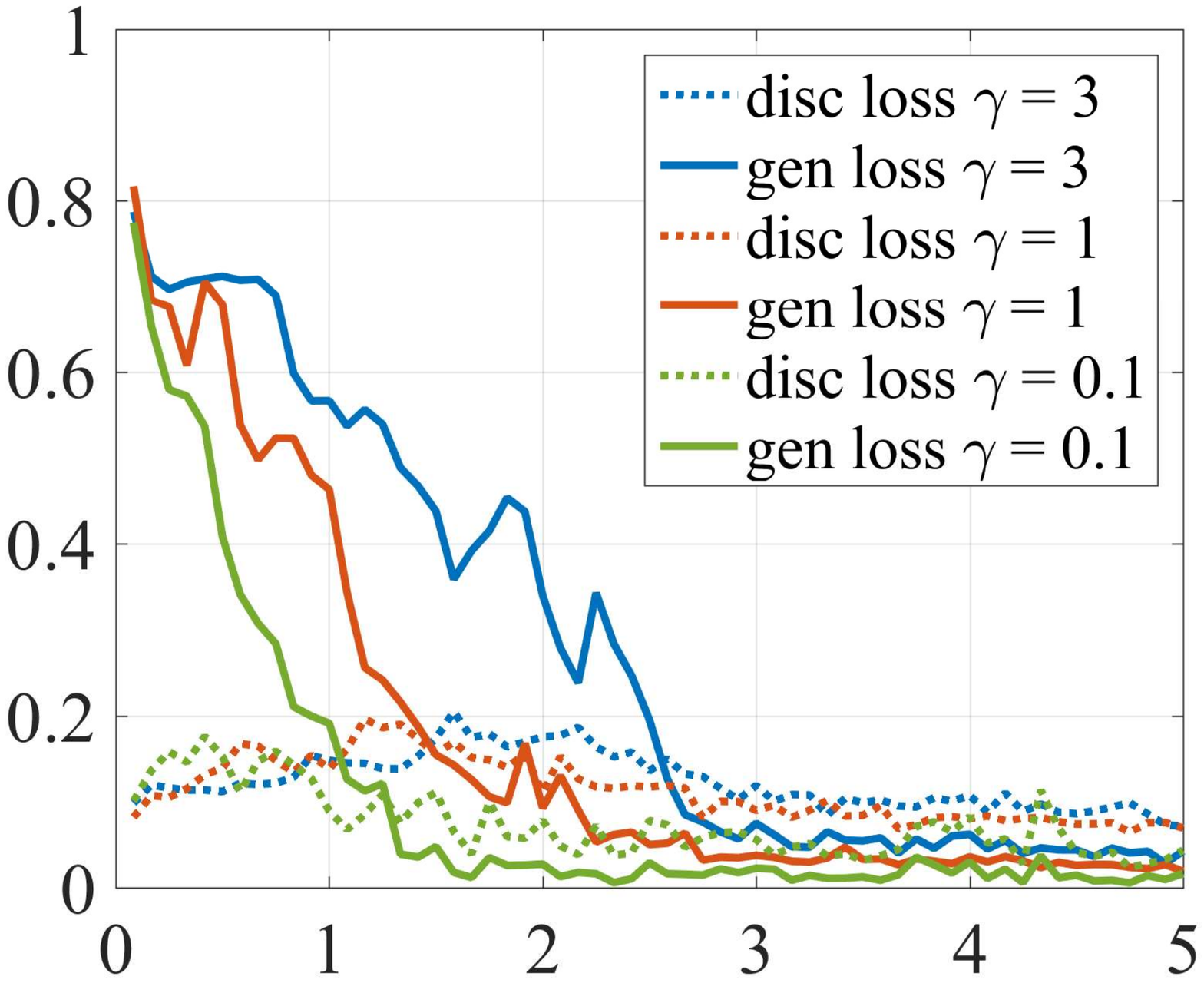}
\end{subfigure}\hfill
\begin{subfigure}[t]{0.03\textwidth}
\textbf{(B)}
\end{subfigure}
\begin{subfigure}[t]{0.21\textwidth}
\includegraphics[width=\linewidth,valign=t]{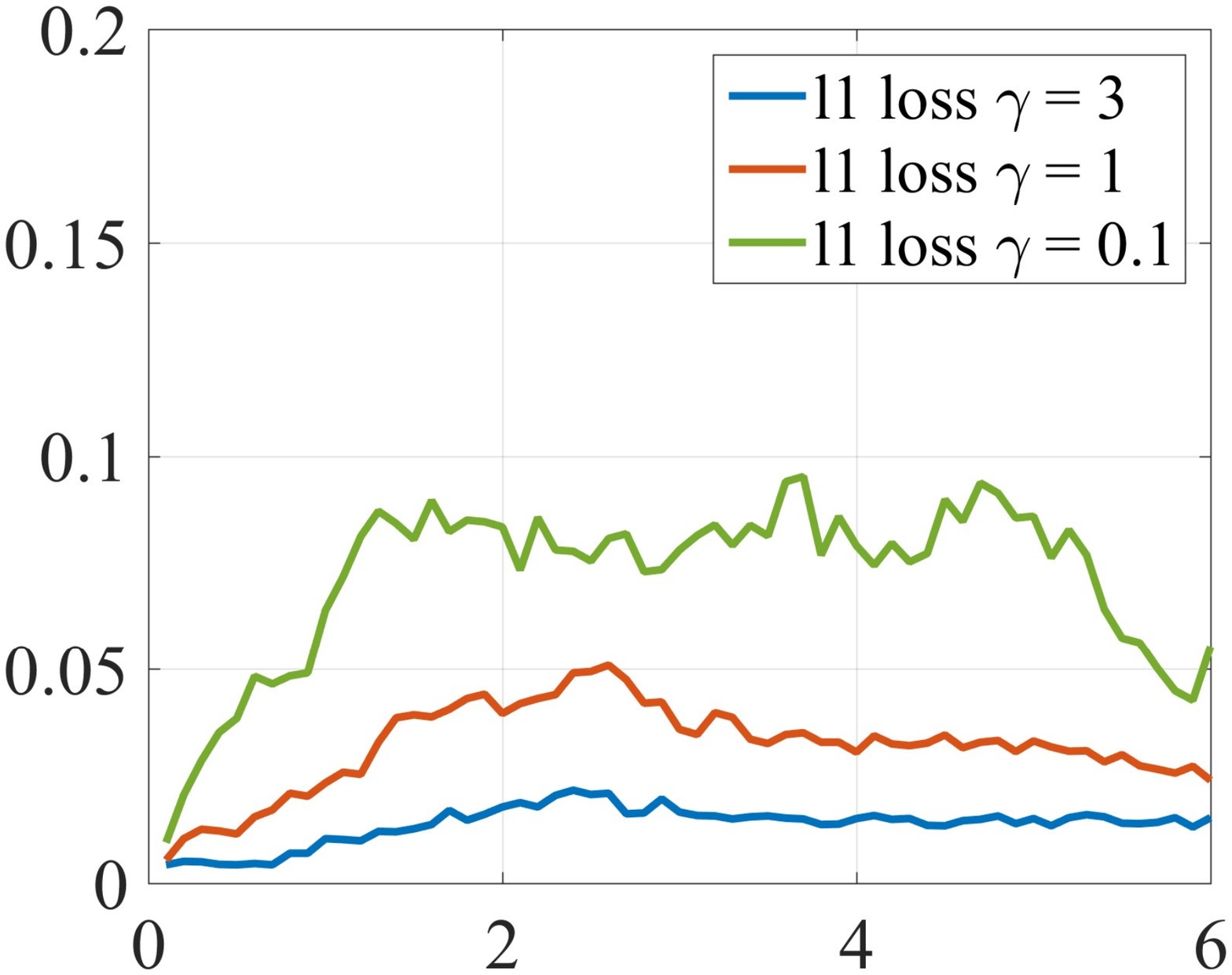}
\end{subfigure}\hfill
\begin{subfigure}[t]{0.03\textwidth}
\textbf{(C)}
\end{subfigure}
\begin{subfigure}[t]{0.21\textwidth}
\includegraphics[width=\linewidth,valign=t]{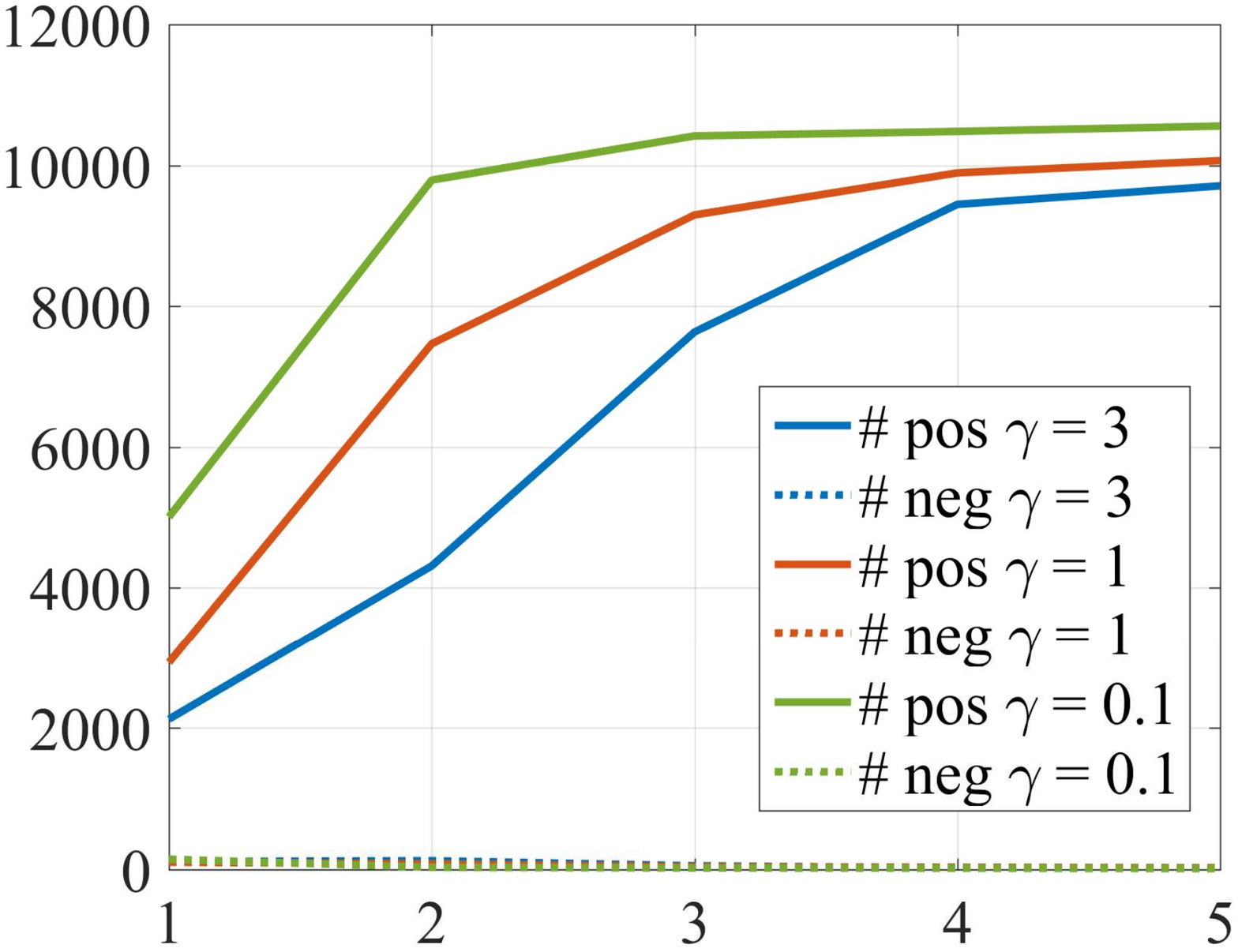}
\end{subfigure}\hfill
\begin{subfigure}[t]{0.03\textwidth}
\textbf{(D)}
\end{subfigure}
\begin{subfigure}[t]{0.21\textwidth}
\includegraphics[width=\linewidth,valign=t]{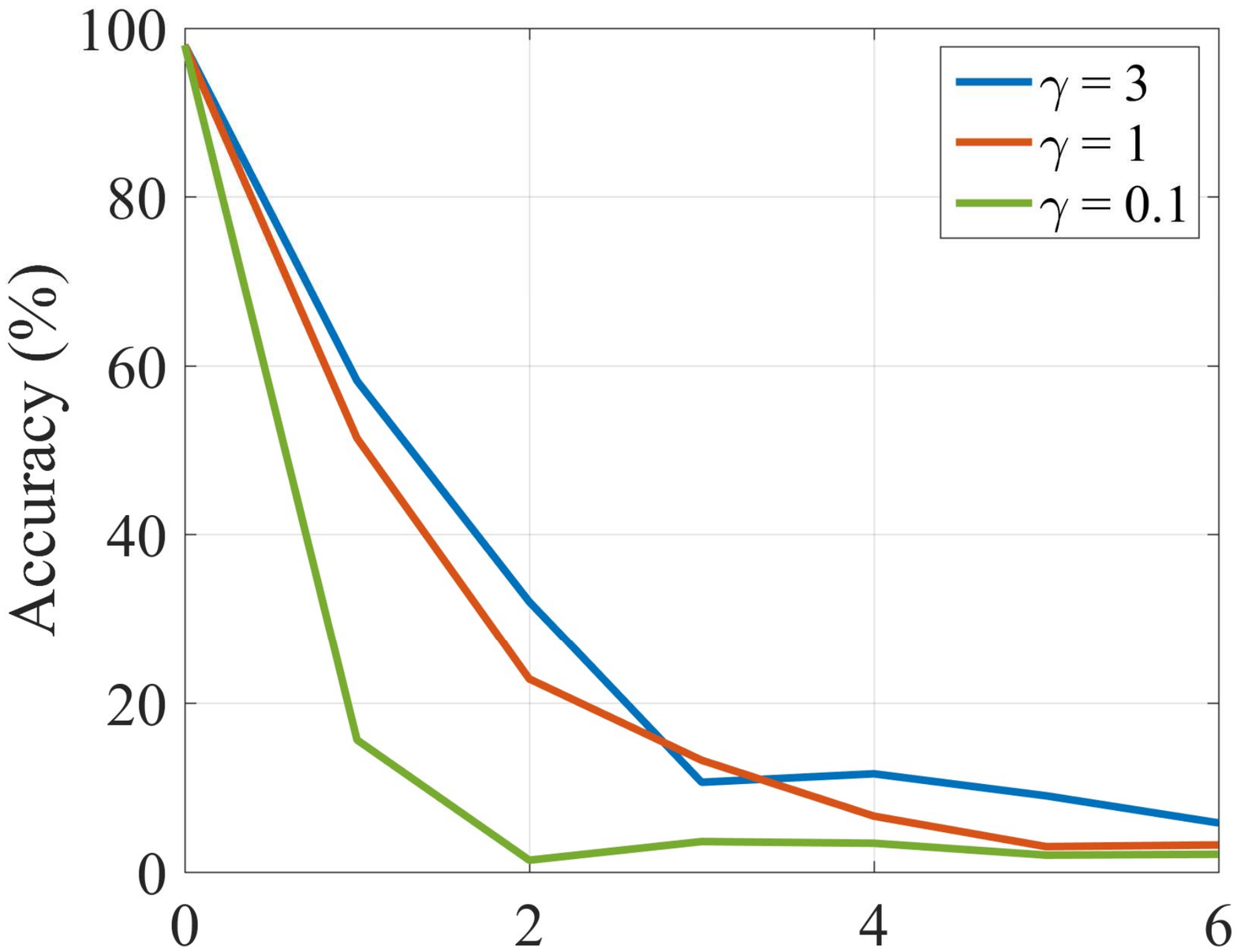}
\end{subfigure}
\vspace{-2mm}
\caption[This is for my LOF]{Graphs describing the convergence and degradation performance of the proposed algorithm for different $\gamma$:
(A) discriminator loss and generator loss, (B) $\ell$1-loss, (C) positive and negative samples in training set, (D) Accuracy.
Horizontal axis denotes epoch.
The experiments were performed on DenseNet169 and ImageNet10 using the `Proposed-B' version.
}
\vspace{-2mm}
\label{fig:exp_graph_adv}
\end{figure*}

\textbf{Convergence and $\bm{\ell1}$-Regularization: }
Unlike with the enhancing problem, we can not expect the proposed adversarial structure to suppress the intensity of the perturbation. 
In the case of the enhancing problem, too much perturbation may be disadvantageous because correct image classification results should be preserved. 
\yj{In fact, $\ell1$ intensity of the perturbation does not increase more than a certain level, even if there is no $\ell1$-regularization term in the enhancing problem case. (See Appendix~\ref{app:further_exp} for more explanation).}
Conversely, in the case of the adversarial problem, it can be predicted that a larger perturbation may more easily ruin the classification result.
Therefore, we can conjecture that the role of $\ell1$-regularization is very important for controlling the intensity of the perturbation in the adversarial problem.

The graphs in Figure~\ref{fig:exp_graph_adv} describe the changes of the loss terms in the proposed framework in the form shown in Figure~\ref{fig:exp_graph_enhance}.
The one main difference to the enhancing problem is that the discriminator loss $L_d$ converges much faster than the enhancing problem case, as seen in graph (A).
This is because, in the adversarial problem case, much more positive samples (correct to false, in this case) compared to negative samples (false to correct) can be obtained from the beginning than the enhancing problem case as shown in graph (C).
This is natural in that the number of correctly classified samples in the training set is much larger than the false sample.
Also, as the gamma decreases, we can see that the generator loss $L_g$ fluctuates drastically, which means that the $\ell1$-loss $L_r$ and $L_g$ are in a competitive relationship. 
The other notable difference to the enhancing problem is that the tendency of $L_r$ decrease is not clear compared to the enhancing problem case.
Rather, we can see that $L_r$ converges on different values according to the value of $\gamma$, as in graph (B).
From the graph (D), we can see that the classification accuracy falls more fast when $L_r$ is kept at larger value, as we expected.

Figure~\ref{fig:exp_l1_diff_adv} shows the amount of perturbation for different value of $\gamma$.
We can see that the intensity of the perturbation get larger when $\gamma$ increases, and the deformation of the image is increased accordingly.
Therefore, we performed the adversarial task for $\gamma = 3$, and we obtained satisfactory performance without compromising the \knj{quality of the perturbed image} significantly.
\section{Conclusion}
In this paper, we have proposed a novel adversarial framework for generating \knj{an} additive perturbation vector that can control the performance of the classifier either in positive or negative directions without changing the network parameters of the classifier.
Through the qualitative and quantitative analysis, we have confirmed that the proposed algorithm can enhance classification performance \knj{significantly by} just adding a small perturbation, marking the first attempt in this field.
Furthermore, we have confirmed that the proposed method can be directly applied to generate an adversarial perturbation vector that \knj{degrades} classification performance, even when the framework \knj{of the} target classifier is concealed, which is another first \sd{attempt}.
These results show that the \knj{parameters of the existing CNNs are} not ideally estimated, we have made the meaningful progress toward influencing the network's outcome in desired direction from outside. 
{\small
\bibliographystyle{ieee}
\bibliography{egbib}

\begin{thebibliography}{10}\itemsep=-1pt

\bibitem{baluja2017adversarial}
S.~Baluja and I.~Fischer.
\newblock Adversarial transformation networks: Learning to generate adversarial
  examples.
\newblock {\em arXiv preprint arXiv:1703.09387}, 2017.

\bibitem{bastani2016measuring}
O.~Bastani, Y.~Ioannou, L.~Lampropoulos, D.~Vytiniotis, A.~Nori, and
  A.~Criminisi.
\newblock Measuring neural net robustness with constraints.
\newblock In {\em Advances in Neural Information Processing Systems}, pages
  2613--2621, 2016.

\bibitem{biggio2013evasion}
B.~Biggio, I.~Corona, D.~Maiorca, B.~Nelson, N.~{\v{S}}rndi{\'c}, P.~Laskov,
  G.~Giacinto, and F.~Roli.
\newblock Evasion attacks against machine learning at test time.
\newblock In {\em Joint European Conference on Machine Learning and Knowledge
  Discovery in Databases}, pages 387--402. Springer, 2013.

\bibitem{bradshaw2017adversarial}
J.~Bradshaw, A.~G. d.~G. Matthews, and Z.~Ghahramani.
\newblock Adversarial examples, uncertainty, and transfer testing robustness in
  gaussian process hybrid deep networks.
\newblock {\em arXiv preprint arXiv:1707.02476}, 2017.

\bibitem{coates2011analysis}
A.~Coates, A.~Ng, and H.~Lee.
\newblock An analysis of single-layer networks in unsupervised feature
  learning.
\newblock In {\em Proceedings of the fourteenth international conference on
  artificial intelligence and statistics}, pages 215--223, 2011.

\bibitem{das2017keeping}
N.~Das, M.~Shanbhogue, S.-T. Chen, F.~Hohman, L.~Chen, M.~E. Kounavis, and
  D.~H. Chau.
\newblock Keeping the bad guys out: Protecting and vaccinating deep learning
  with jpeg compression.
\newblock {\em arXiv preprint arXiv:1705.02900}, 2017.

\bibitem{fawzi2015analysis}
A.~Fawzi, O.~Fawzi, and P.~Frossard.
\newblock Analysis of classifiers' robustness to adversarial perturbations.
\newblock {\em arXiv preprint arXiv:1502.02590}, 2015.

\bibitem{fawzi2016robustness}
A.~Fawzi, S.-M. Moosavi-Dezfooli, and P.~Frossard.
\newblock Robustness of classifiers: from adversarial to random noise.
\newblock In {\em Advances in Neural Information Processing Systems}, pages
  1632--1640, 2016.

\bibitem{girshick2015fast}
R.~Girshick.
\newblock Fast r-cnn.
\newblock In {\em Proceedings of the IEEE international conference on computer
  vision}, pages 1440--1448, 2015.

\bibitem{goodfellow2014generative}
I.~Goodfellow, J.~Pouget-Abadie, M.~Mirza, B.~Xu, D.~Warde-Farley, S.~Ozair,
  A.~Courville, and Y.~Bengio.
\newblock Generative adversarial nets.
\newblock In {\em Advances in neural information processing systems}, pages
  2672--2680, 2014.

\bibitem{goodfellow2014explaining}
I.~J. Goodfellow, J.~Shlens, and C.~Szegedy.
\newblock Explaining and harnessing adversarial examples.
\newblock {\em arXiv preprint arXiv:1412.6572}, 2014.

\bibitem{gu2014towards}
S.~Gu and L.~Rigazio.
\newblock Towards deep neural network architectures robust to adversarial
  examples.
\newblock {\em arXiv preprint arXiv:1412.5068}, 2014.

\bibitem{hayes2017machine}
J.~Hayes and G.~Danezis.
\newblock Machine learning as an adversarial service: Learning black-box
  adversarial examples.
\newblock {\em arXiv preprint arXiv:1708.05207}, 2017.

\bibitem{he2016deep}
K.~He, X.~Zhang, S.~Ren, and J.~Sun.
\newblock Deep residual learning for image recognition.
\newblock In {\em Proceedings of the IEEE conference on computer vision and
  pattern recognition}, pages 770--778, 2016.

\bibitem{hosseini2017blocking}
H.~Hosseini, Y.~Chen, S.~Kannan, B.~Zhang, and R.~Poovendran.
\newblock Blocking transferability of adversarial examples in black-box
  learning systems.
\newblock {\em arXiv preprint arXiv:1703.04318}, 2017.

\bibitem{huang2016densely}
G.~Huang, Z.~Liu, K.~Q. Weinberger, and L.~van~der Maaten.
\newblock Densely connected convolutional networks.
\newblock {\em arXiv preprint arXiv:1608.06993}, 2016.

\bibitem{kingma2014adam}
D.~Kingma and J.~Ba.
\newblock Adam: A method for stochastic optimization.
\newblock {\em arXiv preprint arXiv:1412.6980}, 2014.

\bibitem{krizhevsky2012imagenet}
A.~Krizhevsky, I.~Sutskever, and G.~E. Hinton.
\newblock Imagenet classification with deep convolutional neural networks.
\newblock In {\em Advances in neural information processing systems}, pages
  1097--1105, 2012.

\bibitem{lecun1989backpropagation}
Y.~LeCun, B.~Boser, J.~S. Denker, D.~Henderson, R.~E. Howard, W.~Hubbard, and
  L.~D. Jackel.
\newblock Backpropagation applied to handwritten zip code recognition.
\newblock {\em Neural computation}, 1(4):541--551, 1989.

\bibitem{liu2016delving}
Y.~Liu, X.~Chen, C.~Liu, and D.~Song.
\newblock Delving into transferable adversarial examples and black-box attacks.
\newblock {\em arXiv preprint arXiv:1611.02770}, 2016.

\bibitem{long2015fully}
J.~Long, E.~Shelhamer, and T.~Darrell.
\newblock Fully convolutional networks for semantic segmentation.
\newblock In {\em Proceedings of the IEEE Conference on Computer Vision and
  Pattern Recognition}, pages 3431--3440, 2015.

\bibitem{lu2017safetynet}
J.~Lu, T.~Issaranon, and D.~Forsyth.
\newblock Safetynet: Detecting and rejecting adversarial examples robustly.
\newblock {\em arXiv preprint arXiv:1704.00103}, 2017.

\bibitem{lu2017no}
J.~Lu, H.~Sibai, E.~Fabry, and D.~Forsyth.
\newblock No need to worry about adversarial examples in object detection in
  autonomous vehicles.
\newblock {\em arXiv preprint arXiv:1707.03501}, 2017.

\bibitem{mao2016least}
X.~Mao, Q.~Li, H.~Xie, R.~Y. Lau, Z.~Wang, and S.~P. Smolley.
\newblock Least squares generative adversarial networks.
\newblock {\em arXiv preprint ArXiv:1611.04076}, 2016.

\bibitem{metzen2017detecting}
J.~H. Metzen, T.~Genewein, V.~Fischer, and B.~Bischoff.
\newblock On detecting adversarial perturbations.
\newblock {\em arXiv preprint arXiv:1702.04267}, 2017.

\bibitem{Moosavi-Dezfooli_2017_CVPR}
S.-M. Moosavi-Dezfooli, A.~Fawzi, O.~Fawzi, and P.~Frossard.
\newblock Universal adversarial perturbations.
\newblock In {\em The IEEE Conference on Computer Vision and Pattern
  Recognition (CVPR)}, July 2017.

\bibitem{moosavi2017analysis}
S.-M. Moosavi-Dezfooli, A.~Fawzi, O.~Fawzi, P.~Frossard, and S.~Soatto.
\newblock Analysis of universal adversarial perturbations.
\newblock {\em arXiv preprint arXiv:1705.09554}, 2017.

\bibitem{moosavi2016deepfool}
S.-M. Moosavi-Dezfooli, A.~Fawzi, and P.~Frossard.
\newblock Deepfool: a simple and accurate method to fool deep neural networks.
\newblock In {\em Proceedings of the IEEE Conference on Computer Vision and
  Pattern Recognition}, pages 2574--2582, 2016.

\bibitem{mopuri2017fast}
K.~R. Mopuri, U.~Garg, and R.~V. Babu.
\newblock Fast feature fool: A data independent approach to universal
  adversarial perturbations.
\newblock {\em arXiv preprint arXiv:1707.05572}, 2017.

\bibitem{mordvintsev2015deepdream}
A.~Mordvintsev, C.~Olah, and M.~Tyka.
\newblock Deepdream-a code example for visualizing neural networks.
\newblock {\em Google Res}, 2015.

\bibitem{nguyen2015deep}
A.~Nguyen, J.~Yosinski, and J.~Clune.
\newblock Deep neural networks are easily fooled: High confidence predictions
  for unrecognizable images.
\newblock In {\em Proceedings of the IEEE Conference on Computer Vision and
  Pattern Recognition}, pages 427--436, 2015.

\bibitem{noh2015learning}
H.~Noh, S.~Hong, and B.~Han.
\newblock Learning deconvolution network for semantic segmentation.
\newblock In {\em Proceedings of the IEEE International Conference on Computer
  Vision}, pages 1520--1528, 2015.

\bibitem{oh2017adversarial}
S.~J. Oh, M.~Fritz, and B.~Schiele.
\newblock Adversarial image perturbation for privacy protection--a game theory
  perspective.
\newblock {\em arXiv preprint arXiv:1703.09471}, 2017.

\bibitem{orekondy2017towards}
T.~Orekondy, B.~Schiele, and M.~Fritz.
\newblock Towards a visual privacy advisor: Understanding and predicting
  privacy risks in images.
\newblock {\em arXiv preprint arXiv:1703.10660}, 2017.

\bibitem{radford2015unsupervised}
A.~Radford, L.~Metz, and S.~Chintala.
\newblock Unsupervised representation learning with deep convolutional
  generative adversarial networks.
\newblock {\em arXiv preprint arXiv:1511.06434}, 2015.

\bibitem{redmon2016you}
J.~Redmon, S.~Divvala, R.~Girshick, and A.~Farhadi.
\newblock You only look once: Unified, real-time object detection.
\newblock In {\em Proceedings of the IEEE Conference on Computer Vision and
  Pattern Recognition}, pages 779--788, 2016.

\bibitem{ren2015faster}
S.~Ren, K.~He, R.~Girshick, and J.~Sun.
\newblock Faster r-cnn: Towards real-time object detection with region proposal
  networks.
\newblock In {\em Advances in neural information processing systems}, pages
  91--99, 2015.

\bibitem{rodner2016fine}
E.~Rodner, M.~Simon, R.~B. Fisher, and J.~Denzler.
\newblock Fine-grained recognition in the noisy wild: Sensitivity analysis of
  convolutional neural networks approaches.
\newblock {\em arXiv preprint arXiv:1610.06756}, 2016.

\bibitem{rozsa2016adversarial}
A.~Rozsa, E.~M. Rudd, and T.~E. Boult.
\newblock Adversarial diversity and hard positive generation.
\newblock In {\em Proceedings of the IEEE Conference on Computer Vision and
  Pattern Recognition Workshops}, pages 25--32, 2016.

\bibitem{russakovsky2015imagenet}
O.~Russakovsky, J.~Deng, H.~Su, J.~Krause, S.~Satheesh, S.~Ma, Z.~Huang,
  A.~Karpathy, A.~Khosla, M.~Bernstein, et~al.
\newblock Imagenet large scale visual recognition challenge.
\newblock {\em International Journal of Computer Vision}, 115(3):211--252,
  2015.

\bibitem{sabour2015adversarial}
S.~Sabour, Y.~Cao, F.~Faghri, and D.~J. Fleet.
\newblock Adversarial manipulation of deep representations.
\newblock {\em arXiv preprint arXiv:1511.05122}, 2015.

\bibitem{simonyan2014very}
K.~Simonyan and A.~Zisserman.
\newblock Very deep convolutional networks for large-scale image recognition.
\newblock {\em arXiv preprint arXiv:1409.1556}, 2014.

\bibitem{szegedy2017inception}
C.~Szegedy, S.~Ioffe, V.~Vanhoucke, and A.~A. Alemi.
\newblock Inception-v4, inception-resnet and the impact of residual connections
  on learning.
\newblock In {\em AAAI}, pages 4278--4284, 2017.

\bibitem{szegedy2015going}
C.~Szegedy, W.~Liu, Y.~Jia, P.~Sermanet, S.~Reed, D.~Anguelov, D.~Erhan,
  V.~Vanhoucke, and A.~Rabinovich.
\newblock Going deeper with convolutions.
\newblock In {\em Proceedings of the IEEE conference on computer vision and
  pattern recognition}, pages 1--9, 2015.

\bibitem{szegedy2013intriguing}
C.~Szegedy, W.~Zaremba, I.~Sutskever, J.~Bruna, D.~Erhan, I.~Goodfellow, and
  R.~Fergus.
\newblock Intriguing properties of neural networks.
\newblock {\em arXiv preprint arXiv:1312.6199}, 2013.

\bibitem{tabacof2016exploring}
P.~Tabacof and E.~Valle.
\newblock Exploring the space of adversarial images.
\newblock In {\em Neural Networks (IJCNN), 2016 International Joint Conference
  on}, pages 426--433. IEEE, 2016.

\bibitem{tieleman2012lecture}
T.~Tieleman and G.~Hinton.
\newblock Lecture 6.5-rmsprop: Divide the gradient by a running average of its
  recent magnitude.
\newblock {\em COURSERA: Neural networks for machine learning}, 4(2):26--31,
  2012.

\bibitem{xie2017adversarial}
C.~Xie, J.~Wang, Z.~Zhang, Y.~Zhou, L.~Xie, and A.~Yuille.
\newblock Adversarial examples for semantic segmentation and object detection.
\newblock {\em arXiv preprint arXiv:1703.08603}, 2017.

\bibitem{zantedeschi2017efficient}
V.~Zantedeschi, M.-I. Nicolae, and A.~Rawat.
\newblock Efficient defenses against adversarial attacks.
\newblock {\em arXiv preprint arXiv:1707.06728}, 2017.

\end{thebibliography}
}

\clearpage
\onecolumn
\begin{appendices}
\section{Convergence}
\label{app:convergence}

In this section, we prove that our model has \nj{a} global optima at $p_g(J) = 1$, and converges to the global optima, theoretically.
Also, we show that the same conditions hold when we \nj{replace} the equations (\ref{eq:dsc_loss}, \ref{eq:fool_loss}) in the paper \nj{with the} cross-entropy loss, 
which means that the discriminator is defined as a binary logistic regressor.

\subsection{Proposed Case \nj{(Least Squared Problem)} }
\label{sec:proposed}
In this case, we update the generator and the discriminator using the following equations:
\begin{eqnarray} 
&L_{d}(D,G) =& \frac{1}{2}\mathbf{E}_{p_g}[{(D(J)-1)}^2]+\frac{1}{2}\mathbf{E}_{p_{\bar{g}}}[{(D(J))}^2] \label{eq:dsc_loss}, \\
&L_{g}(D,G) =& \frac{1}{2}\mathbf{E}_{p_g}[{(D(J)-1)}^2]+\frac{1}{2}\mathbf{E}_{p_{\bar{g}}}[{(D(J)-1)}^2] \label{eq:fool_loss}.
\end{eqnarray}
where the distributions $p_g$ and $p_{\bar{g}}$ denote $p(g = 1) = p(r=l)$ and $p(g = 0) = p(r \neq l)$, respectively.
Note that both $r$ and $g$ \nj{depend} on the generated sample \nj{$J$}.
Since \nj{$p_g$} is \nj{a} function of $J$ and hence the function of $g$, we can write $L_d(D,G)$ and $L_g(D,G)$ as $L_d(D,p_g(J)), L_g(D,p_g(J))$, \nj{respectively}.
Therefore, our objective functions become
\begin{eqnarray} 
\min_{D} &L_{d}(D,p_g(J)) \label{eq:dsc_obj}, \\
\min_{p_g(J)} &L_{g}(D,p_g(J)) \label{eq:fool_obj}.
\end{eqnarray}
\begin{proposition}
\label{prop:1_1}
For fixed $G$, \nj{an} optimal discriminator is 
\begin{equation}
D^{*}_{G}(J) = p_g(J).
\end{equation}
\begin{proof}
The training criterion for the discriminator $D$ given $G$ (in this case, $p_g(J)$) is conducted by minimizing the quantity $L_d(D,G)$
\begin{equation}
\label{eq:prop1_ld}
\begin{split}
L_d(D,p_g(J)) &= \int{p_g(J)(D(J)-1)^2+p_{\bar{g}}(J)(D(J))^2}dJ\\
&= \int{p_g(J)(D(J)-1)^2+(1-p_{g}(J))(D(J))^2}dJ\\
&= \int{(D(J))^2-2p_g(J)D(J)+p_g(J)}dJ.
\end{split}
\end{equation}
$\Rightarrow$: 
The term $L_d(D,p_g(J))$ has a local extremum at the point $D^{*}_G(J)$, where
\begin{equation}
\begin{split}
\nabla_D L_d(D^*_G,p_g(J)) &= \int{2D^*_G(J)-2p_g(J)}dJ\\
&= 0.
\end{split}
\end{equation}
%
Therefore, \nj{a sufficient condition for an} optimal $D^{*}_G(J)$ becomes
\begin{equation}
D^{*}_G(J) = p_g(J).
\end{equation}
\\
\noindent$\Leftarrow$:
From equation~(\ref{eq:prop1_ld}), The discriminator loss function is converted as
\begin{equation}
\label{eq:prop1_ld}
\begin{split}
L_d(D,p_g(J)) &= \int{p_g(J)(D(J)-1)^2+p_{\bar{g}}(J)(D(J))^2}dJ\\
&= \int{(D(J))^2-2p_g(J)D(J)+p_g(J)}dJ.\\
&= \int{[D(J)-p_g(J)]^2}dJ + 1 - \int{(p_g(J))^2}dJ~~~~\because \int{p_g(J)}dJ = 1,\\
&= \int{[D(J)-p_g(J)]^2}dJ + C.
\end{split}
\end{equation}
Therefore, the term $L_d(D,p_g(J))$ achieves its minimum over $D$ necessarily at the point $D^{*}_G(J)$ where
\begin{equation}
D^{*}_G(J) = p_g(J).
\end{equation}
\end{proof}
\end{proposition}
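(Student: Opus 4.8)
The plan is to treat this exactly as the least-squares analogue of the classical optimal-discriminator computation for GANs and to reduce the minimization over the \emph{function} $D$ to a family of independent \emph{scalar} minimizations, one per point $J$. First I would rewrite both expectations in equation~(\ref{eq:dsc_loss}) as a single integral against the underlying density, obtaining $L_d(D,p_g(J)) = \int \left[ p_g(J)(D(J)-1)^2 + p_{\bar{g}}(J)(D(J))^2 \right] dJ$, and then substitute the complementarity relation $p_{\bar{g}}(J) = 1 - p_g(J)$, which holds because $g$ is the binary good/bad indicator with $p(g=1)+p(g=0)=1$. After this substitution the integrand depends on the argument $J$ only through the pair $\bigl(D(J),\,p_g(J)\bigr)$, which is what makes the reduction possible.

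The core step is to observe that the values $D(J)$ at distinct points may be chosen independently, so minimizing the integral is equivalent to minimizing its integrand separately at each fixed $J$. I would therefore fix $J$ and analyze the scalar map $\phi(d) = p_g(J)(d-1)^2 + (1-p_g(J))\,d^2$. Completing the square yields $\phi(d) = \bigl(d - p_g(J)\bigr)^2 + p_g(J)\bigl(1-p_g(J)\bigr)$, where the second term does not depend on $d$; hence $\phi$ is a strictly convex quadratic with leading coefficient $1$ and has the unique minimizer $d = p_g(J)$. Integrating the identity back gives $L_d(D,p_g(J)) = \int \bigl[D(J)-p_g(J)\bigr]^2\,dJ + C$ with $C$ independent of $D$, so the global minimizer is $D^{*}_G(J)=p_g(J)$, as claimed. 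The same conclusion also follows by the necessary condition, setting the functional derivative $\nabla_D L_d = 2D(J) - 2p_g(J)$ to zero, which I would present as the forward direction and pair with the completion-of-squares form as the sufficiency direction.

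I expect the main obstacle to be purely a matter of justification rather than computation, namely the legitimacy of the pointwise decoupling. This rests on the usual nonparametric assumption that $D$ ranges over a function class rich enough that its values at different $J$ are free (exactly the idealization used in~\cite{goodfellow2014generative}); once granted, the completion-of-squares identity makes the result immediate, so the argument is short. Two minor points I would check along the way: since $p_g(J)$ is a probability it automatically lies in $[0,1]$, so the unconstrained minimizer is compatible with the sigmoid range constraint $D(J)\in[0,1]$ from equation~(\ref{eq:discriminator}) and no boundary case arises; and the constant $C$ should be confirmed independent of $D$ (it collects only the residual $p_g(1-p_g)$ terms), which is what guarantees that the minimizer of the full integral coincides with the pointwise one.
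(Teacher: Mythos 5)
Your proposal is correct and takes essentially the same route as the paper's own proof: substitute $p_{\bar{g}}(J) = 1 - p_g(J)$, complete the square to obtain $L_d(D,p_g(J)) = \int \bigl[D(J)-p_g(J)\bigr]^2\,dJ + C$ with $C$ independent of $D$, and pair this sufficiency argument with the vanishing (functional) derivative as the stationarity condition. The only cosmetic difference is that you carry out the minimization pointwise in $J$ and write the constant as $\int p_g(J)\bigl(1-p_g(J)\bigr)\,dJ$, whereas the paper completes the square under the integral and evaluates the constant via $\int p_g(J)\,dJ = 1$; the substance is identical.
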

\begin{proposition}
\label{prop:1_2}
For \nj{the} optimal $D^{*}_G(J) = p_g(J)$, the optimal generator is achieved at
\begin{equation}
p^{*}_g(J) = 1.
\end{equation}
\begin{proof}
The training criterion for the generator $G$, hence $p^{*}_g(J)$ is obtained by minimizing the quantity $L_g(D^*_G,p_g(J))$
\begin{equation}
\begin{split}
L_g(D^*_G,p_g(J)) &= \int{p_g(J)(D^*_G(J)-1)^2+p_{\bar{g}}(J)(D^*_G(J)-1)^2}dJ\\
&= \int{p_g(J)(D^*_G(J)-1)^2+(1-p_{g}(J))(D^*_G(J)-1)^2}dJ\\
&= \int{(D^*_G(J)-1)^2}dJ\\
&= \int{(p_{g}(J)-1)^2}dJ \quad \because D^*_G(J) = p_{g}(J).
\end{split}
\end{equation}
The term $L_g(D^*_G,p_g(J))$ is monotonically decreasing function in $p_g(J)\in [0,1]$, as
\begin{equation}
\begin{split}
\nabla_{p_g} L_g(D^*_G,p_g(J)) &= \int{2(p_g(J)-1)}dJ\\
&\leq 0 \quad \because 0\leq p_g(J)\leq 1,\forall J.
\end{split}
\end{equation}
Here, we wrote $\nabla_{p_g(J)}$ as $\nabla_{p_g}$ for simplicity.
Therefore, the optimal $p^*_g(J)$ \nj{that minimizes $L_g(D_G^*, p_g(J))$} becomes
\begin{equation}
p^*_g(J) = 1.
\end{equation}
\end{proof}
\end{proposition}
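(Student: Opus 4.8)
The plan is to substitute the optimal discriminator supplied by Proposition~\ref{prop:1_1} into the generator objective and then show that the resulting functional is monotone in $p_g(J)$, so that its minimizer is forced to the boundary of the feasible range. First I would set $D(J) = D^*_G(J) = p_g(J)$ in the generator loss~(\ref{eq:fool_loss}). The crucial structural observation is that the two terms of $L_g$ carry the \emph{same} integrand $(D(J)-1)^2$ and differ only in their weighting by $p_g$ versus $p_{\bar g}$; since $p_g(J) + p_{\bar g}(J) = 1$ pointwise, these weights collapse and the loss reduces to the single integral $\int (D^*_G(J)-1)^2\, dJ = \int (p_g(J)-1)^2\, dJ$.

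Next I would treat this as a functional of $p_g(J)$ over the admissible set $0 \le p_g(J) \le 1$ and examine its variation. Differentiating with respect to $p_g(J)$ yields the integrand $2(p_g(J)-1)$, which is non-positive everywhere on $[0,1]$. Hence $L_g$ is monotonically decreasing in $p_g(J)$, and its infimum over the feasible set is attained at the largest admissible value, namely $p_g(J) = 1$ for (almost) every $J$. This gives $p^*_g(J) = 1$, as claimed.

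The subtle point — and the step I would be most careful about — is that the optimum sits at the \emph{boundary} of the probability range, so the usual unconstrained stationarity condition $\nabla_{p_g} L_g = 0$ does not apply, since it would have no interior root; the conclusion must instead come from the sign of the gradient together with the constraint $p_g(J) \in [0,1]$. I would also stress that the collapse of the two terms of $L_g$ genuinely relies on $p_g + p_{\bar g} = 1$, and this is exactly what distinguishes the generator loss from the discriminator loss of Proposition~\ref{prop:1_1}, whose two terms had distinct integrands $(D(J)-1)^2$ and $(D(J))^2$ and therefore did not simplify in this way. Apart from this boundary subtlety, the argument is a short direct computation once the substitution and the pointwise normalization are in place.
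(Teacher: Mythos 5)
Your proposal is correct and follows essentially the same route as the paper's proof: substituting $D^*_G(J)=p_g(J)$, collapsing the two terms via $p_g+p_{\bar g}=1$ to obtain $\int (p_g(J)-1)^2\,dJ$, and concluding from the non-positive gradient that the minimizer lies at the boundary $p_g(J)=1$. Your explicit remark that the optimum is a boundary point (so stationarity cannot be invoked) is a welcome clarification of a step the paper leaves implicit, but it is the same argument.
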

\begin{theorem}
\label{sec:theorem1}
If G and D have enough capacity, and at each iterative step minimizing the equations~(\ref{eq:dsc_loss}) \nj{and} (\ref{eq:fool_loss}), $D$ is allowed to reach the optimal given $G$, then the optimal distribution $p^{*}_g(J)$ converges to $1$.
\end{theorem}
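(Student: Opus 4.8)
The plan is to follow the template of the original GAN convergence argument~\cite{goodfellow2014generative}, reducing the coupled two-loss iteration to the minimization of a single convex functional in $p_g$. Propositions~\ref{prop:1_1} and~\ref{prop:1_2} already supply the two ingredients I need: the first guarantees that, under the ``enough capacity'' assumption, each inner optimization of $L_d$ drives the discriminator to the pointwise optimum $D^*_G(J)=p_g(J)$, and the second identifies $p_g\equiv 1$ as the unique minimizer once this optimal discriminator is substituted in. The remaining work is purely about convergence of the outer (generator) iteration to that minimizer.

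First I would substitute $D^*_G(J)=p_g(J)$ into the generator loss, exactly as in the proof of Proposition~\ref{prop:1_2}, to obtain the reduced ``envelope'' objective
\[
C(p_g) \;=\; L_g\big(D^*_G,p_g\big) \;=\; \int \big(p_g(J)-1\big)^2\, dJ .
\]
Next I would observe that $C$ is convex in $p_g$, since it is an integral of the pointwise convex quadratics $\big(p_g(J)-1\big)^2$, and that by Proposition~\ref{prop:1_2} it attains its unique global minimum over the admissible set $\{p_g(J)\in[0,1]\}$ at $p_g\equiv 1$. Then, mirroring Goodfellow~\etal, I would argue that the generator update --- a descent step on $L_g$ taken at the current optimal discriminator --- is a subgradient step for the convex functional $C$, invoking the fact that the subdifferential of the reduced objective contains the gradient of $L_g$ evaluated at the optimizing discriminator, so that each generator step decreases $C$. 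Standard convergence of subgradient descent on a convex function with a unique minimizer, for sufficiently small step sizes, then forces $p_g(J)\to 1$ pointwise, which is the claimed $p^*_g(J)=1$.

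The hard part will be the last conceptual step, namely justifying rigorously that moving the generator while holding the discriminator fixed at $D^*_G$ is a legitimate subgradient step on the envelope $C(p_g)$. Unlike the original minimax GAN, here the discriminator minimizes a \emph{different} loss $L_d$ than the generator's $L_g$, so the clean envelope theorem does not transfer verbatim; the coupling $D^*_G=p_g$ must be used explicitly to treat $p_g$ as the effective optimization variable. I also expect to need the ``enough capacity'' hypothesis together with the pointwise nature of the optimum, so that convexity and the resulting convergence hold for every $J$ simultaneously and the limit is $p_g\equiv 1$ on the whole support rather than merely in an averaged sense.
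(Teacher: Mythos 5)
Your proposal is correct at the same level of rigor as the paper's own argument and follows essentially the same route: substitute the $L_d$-optimal discriminator $D^*_G=p_g$ into $L_g$ to obtain the convex envelope $\int (p_g(J)-1)^2\,dJ$, interpret each generator update as a subgradient step on that envelope, and conclude convergence to the unique minimizer $p_g\equiv 1$, exactly as in \cite{goodfellow2014generative}. The subtlety you flag as the ``hard part'' --- that this game is not zero-sum, so the envelope/Danskin argument does not transfer verbatim --- is in fact glossed over by the paper itself, which writes $U(p_g)=\sup_D L_g(D,p_g(J))$ even though $D^*_G$ minimizes $L_d$ rather than maximizes $L_g$; your explicit use of the coupling $D^*_G=p_g$ is, if anything, the more accurate formulation.
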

\begin{proof}
According to the proposition~\ref{prop:1_2}, 
\nj{the} supremum of $U(p_g) = \sup_D L_g(D,p_g(J))$ is convex.
The \nj{subdifferential}
of $U(p_g)$ includes the derivative of the function $L_g(D,p_g(J))$ at the point \nj{$D_G^*$} where the maximum \nj{of $L_g$} 
is attained.
\nj{Applying a subgradient method to $U(p_g)$ is} equivalent to conducting the gradient descent update \nj{of $p_g(J)$ for the function $L_g(D_G^*, \cdot)$.}
Therefore, $p_g(J)$ converges to the global minimum $1$ with small iterative update of $p_g$, concluding the proof.
\end{proof}

In practice, the generator function has limited capacity to satisfy the desired conditions, and training dataset also has limited representativeness \nj{for} the test dataset.
However, careful designing of the generator function with multi-layered perceptron and employing sufficient amount of training dataset, our model can achieve reasonable performance in spite of the mentioned difficulties.

\subsection{Logistic Regression Case}
\label{sec:logistic}
In logistic regression case, the generator and discriminator loss are defined as follows:
\begin{eqnarray} 
&L_{d}(D,G) =& \frac{1}{2}\mathbf{E}_{p_g}[{-\log(D(J))}]+\frac{1}{2}\mathbf{E}_{p_{\bar{g}}}[-\log(1-D(J))] \label{eq:dsc_loss_log}, \\
&L_{g}(D,G) =& \frac{1}{2}\mathbf{E}_{p_g}[{-\log(D(J))}]+\frac{1}{2}\mathbf{E}_{p_{\bar{g}}}[-\log(D(J))] \label{eq:fool_loss_log}.
\end{eqnarray}
As similar process to section~\ref{sec:proposed}, we use \nj{the} same objective functions in equation~(\ref{eq:dsc_obj}) and ~(\ref{eq:fool_obj}).
\begin{proposition}
\label{prop:2_1}
For fixed $G$, the optimal discriminator is 
\begin{equation}
D^{*}_{G}(J) = p_g(J).
\end{equation}
\begin{proof}
The training criterion for the discriminator $D$ given $G$ (in this case, $p_g(J)$) is conducted by minimizing the quantity $L_d(D,G)$
\begin{equation}
\begin{split}
L_d(D,p_g(J)) &= \int{p_g(J)\log(D(J))+p_{\bar{g}}(J)\log(1-D(J))}dJ\\
&= \int{p_g(J)\log(D(J))+(1-p_{g}(J))\log(1-D(J))}dJ.
\end{split}
\end{equation}

For \nj{all $(a,b)\in {\mathbb{R}^2}$, the function $x\rightarrow a\log(x)+b\log(1-x)$, $x \in  [0,1]$, gets its maximum value} at $x^* = a/(a+b)$ (see \cite{goodfellow2014generative} for detailed explanation).
Using this property, the optimal point $D^*_G(J)$ becomes
\begin{equation}
\begin{split}
D^*_G(J) &= \frac{p_g(J)}{p_g(J)+(1-p_g(J))}\\
&= p_g(J).
\end{split}
\end{equation}
\end{proof}
\end{proposition}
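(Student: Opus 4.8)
The plan is to treat the minimization of $L_d$ over $D$ as a pointwise variational problem, exactly as was done in the least-squares case (Proposition~\ref{prop:1_1}). Assuming the discriminator has enough capacity to represent any measurable map $D(\cdot)\in[0,1]$, the integral functional $L_d(D,p_g)$ is minimized by minimizing its integrand separately at each point $J$. So first I would substitute $p_{\bar g}(J)=1-p_g(J)$ into the cross-entropy loss (\ref{eq:dsc_loss_log}), giving
\begin{equation}
L_d(D,p_g(J)) = -\tfrac{1}{2}\int \left[\, p_g(J)\log D(J) + (1-p_g(J))\log(1-D(J)) \,\right] dJ,
\end{equation}
so that the per-point objective depends only on the scalar $x = D(J)\in[0,1]$.

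Next I would invoke the elementary fact, borrowed directly from the original GAN analysis \cite{goodfellow2014generative}, that for fixed reals $a,b\ge 0$ with $a+b>0$ the map $x\mapsto a\log x + b\log(1-x)$ on $[0,1]$ attains its unique maximum at $x^{\ast}=a/(a+b)$. Since minimizing $L_d$ is the same as maximizing the bracketed term above at each $J$, I apply this with $a=p_g(J)$ and $b=1-p_g(J)$ to obtain $D^{\ast}_G(J)=p_g(J)/(p_g(J)+(1-p_g(J)))=p_g(J)$, which is the claim. Strict concavity of $a\log x + b\log(1-x)$ whenever $0<p_g(J)<1$ guarantees the maximizer is unique, and because one and the same function $D$ simultaneously optimizes the integrand at every $J$, this pointwise optimum genuinely is the optimal discriminator and not merely a candidate.

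The only place I would take care is the degenerate set where $p_g(J)\in\{0,1\}$: there one of the two coefficients vanishes, the objective becomes monotone in $x$, and its maximizer slides to a boundary point at which a $\log$ term diverges, so the strict-concavity argument no longer applies. I would handle this with the standard convention $0\cdot\log 0 = 0$, under which the formula $D^{\ast}_G(J)=p_g(J)$ still names the correct boundary optimizer, and the conclusion then holds for all $J$. I do not expect any genuine obstacle here: the entire argument is the pointwise GAN discriminator computation, with the enough-capacity hypothesis doing the real work of reducing functional minimization to scalar calculus.
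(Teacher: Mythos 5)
Your proposal is correct and follows essentially the same route as the paper: reduce the functional minimization to a pointwise scalar problem and invoke the Goodfellow~\cite{goodfellow2014generative} fact that $x\mapsto a\log x + b\log(1-x)$ is maximized at $x^*=a/(a+b)$ with $a=p_g(J)$, $b=1-p_g(J)$. If anything, you are more careful than the paper's own write-up, which silently drops the minus sign and the $\tfrac{1}{2}$ factor from the loss definition and states the lemma for all $(a,b)\in\mathbb{R}^2$ rather than $a,b\ge 0$; your sign bookkeeping, uniqueness remark, and treatment of the degenerate set $p_g(J)\in\{0,1\}$ close these small gaps.
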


\begin{proposition}
\label{prop:2_2}
For optimal $D^{*}_G(J)$, the optimal generator is achieved at
\begin{equation}
p^{*}_g(J) = 1.
\end{equation}
\begin{proof}
The training criterion for the generator $G$, hence $p^{*}_g(J)$ is obtained by minimizing the quantity $L_g(D^*_G,p_g(J))$
\begin{equation}
\begin{split}
L_g(D^*_G,p_g(J)) &= -\int{p_g(J)\log(D^*_G(J))+p_{\bar{g}}(J)\log(D^*_G(J))}dJ\\
&= -\int{p_g(J)\log(D^*_G(J))+(1-p_{g}(J))\log(D^*_G(J))}dJ\\
&= -\int{\log(D^*_G(J))}dJ\\
&= -\int{\log(p_g(J))}dJ. \quad \because D^*_G(J) = p_{g}(J).
\end{split}
\end{equation}
$\Rightarrow$: derivative of $L_g(D^*_G,p_g(J))$ over $p_g$ becomes
\begin{equation}
\begin{split}
\nabla_{p_g}L_g(D^*_G,p_g(J)) &= -\nabla_{p_g}\int{\log(p_g(J))}dJ\\
&= -\nabla_{p_g}\int{\log(p_g(J))(\nabla_{J}p_g(J))^{-1}}dp_g\\
&= -\log(p_g(J))(\nabla_{J}p_g(J))^{-1}.\\
\label{eq:derivatives}
\end{split}
\end{equation}
We wrote $\nabla_{p_g(J)}$ as $\nabla_{p_g}$, and $d{p_g(J)}$ as $d{p_g}$ for simplicity.
By (\ref{eq:derivatives}), \nj{$L_g(D_G^*, \cdot)$ has a local extremum point at $p_g^*(J) = 1$}, and \nj{the} corresponding \nj{value}
$L_g(D^*_G,p_g(J)) = 0$ at that point, which is the global minimum value.
\\
\\
\noindent$\Leftarrow$: Assume that $p_g(J) = 1$ for $\forall J$. In this case, $L_g(D^*_G,p_g(J)) = 0$ and it is the global minimum of the quantity $L_g(D^*_G,p_g(J))$.
\\

Therefore, the optimal $p^*_g(J)$ becomes
\begin{equation}
p^*_g(J) = 1.
\end{equation}
\end{proof}
\end{proposition}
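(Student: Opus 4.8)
The plan is to mirror the structure already used for the least-squares case in Proposition~\ref{prop:1_2}: substitute the known optimal discriminator into the generator loss, collapse the two expectation terms into a single integral, and then argue that the resulting functional is minimized at $p_g(J) = 1$. The payoff of the logistic formulation is that this substitution is especially clean, so most of the work is bookkeeping once the right object is in hand.

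First I would invoke Proposition~\ref{prop:2_1}, which supplies the optimal discriminator $D^*_G(J) = p_g(J)$ for fixed $G$. Substituting this into $L_g(D,G)$ and writing $p_{\bar{g}}(J) = 1 - p_g(J)$, the two weighting factors $p_g(J)$ and $1 - p_g(J)$ multiply the same $-\log(D^*_G(J))$ term, so they sum to $1$ and the loss reduces to $L_g(D^*_G, p_g(J)) = -\int \log(p_g(J))\, dJ$. This is the key algebraic simplification and the step I expect to be the cleanest, since it uses only the substitution $D^*_G = p_g$ together with the identity $p_g + p_{\bar g} = 1$.

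Next I would show this functional is minimized at $p_g(J) = 1$. Rather than differentiating with respect to $p_g$ and tracking a change-of-variables Jacobian $(\nabla_J p_g(J))^{-1}$, I would argue pointwise: for each $J$ the integrand $-\log(p_g(J))$ is nonnegative on the feasible range $p_g(J) \in [0,1]$ and vanishes exactly at $p_g(J) = 1$. Hence the integral is bounded below by $0$, this bound is attained, and it is attained only by the pointwise choice $p_g(J) = 1$. Since the necessary direction (the loss cannot go below $0$) and the sufficient direction (the value $0$ is achieved at $p_g = 1$) agree, I can conclude $p^*_g(J) = 1$.

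The main obstacle is making the argument watertight precisely where a first-order calculation is delicate: $\log(p_g(J))$ diverges as $p_g(J) \to 0$, so an unconstrained minimization is ill-posed, and it is exactly the feasibility constraint $p_g(J) \in [0,1]$ that pins the minimizer at the boundary value $p_g = 1$. I would therefore state the constraint explicitly and lean on the monotonicity of $-\log$ on $[0,1]$ rather than on stationarity conditions, which sidesteps the singular boundary behavior and removes the need for the Jacobian manipulation entirely.
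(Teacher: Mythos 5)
Your proposal is correct, and it reaches the paper's conclusion by a cleaner route in the optimization step. The first half is identical to the paper's proof: both substitute $D^*_G(J) = p_g(J)$ from Proposition~\ref{prop:2_1}, use $p_{\bar{g}} = 1 - p_g$ to collapse the two terms, and arrive at $L_g(D^*_G,p_g(J)) = -\int \log(p_g(J))\,dJ$. Where you diverge is in how the minimizer is identified. The paper computes a functional derivative via a change of variables $dJ \to dp_g$, introducing the Jacobian factor $(\nabla_J p_g(J))^{-1}$, and reads off a stationary point at $p_g(J)=1$; it then separately verifies ($\Leftarrow$) that $p_g \equiv 1$ gives the value $0$. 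Your pointwise argument --- $-\log(p_g(J)) \geq 0$ on the feasible range $p_g(J) \in [0,1]$, with equality exactly at $p_g(J)=1$, so the integral is bounded below by $0$ and the bound is attained precisely there --- subsumes both directions at once and is strictly more rigorous: the paper's Jacobian manipulation tacitly assumes $p_g$ is an invertible function of $J$, and the stationarity calculation is delicate near the boundary where $\log$ diverges, both of which you deliberately sidestep. You also correctly identify that it is the constraint $p_g(J)\in[0,1]$, not a first-order condition, that pins the minimizer at the boundary. The only caveat worth adding is measure-theoretic bookkeeping: attainment of the bound forces $p_g(J)=1$ only up to a set of measure zero, so the conclusion should strictly read $p_g^*(J)=1$ almost everywhere --- a refinement the paper itself does not make either.
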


\begin{theorem}
\label{sec:theorem2}
If G and D have enough capacity, and at each iterative step minimizing the equations~(\ref{eq:dsc_loss_log})(\ref{eq:fool_loss_log}), $D$ is allowed to reach the optimal given $G$, then the optimal distribution $p^{*}_g(J)$ converges to $1$.
\end{theorem}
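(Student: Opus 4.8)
The plan is to mirror the convergence argument used for Theorem~\ref{sec:theorem1} in the least-squares case, now feeding in the logistic-loss analogues established in Propositions~\ref{prop:2_1} and~\ref{prop:2_2}. First I would use Proposition~\ref{prop:2_1} to collapse the inner optimization: once the discriminator reaches its optimum $D^*_G(J) = p_g(J)$ for the fixed generator, the quantity the generator update actually descends is the reduced functional $U(p_g) = L_g(D^*_G, p_g(J))$. The computation already carried out inside the proof of Proposition~\ref{prop:2_2} evaluates this in closed form as $U(p_g) = -\int \log(p_g(J))\,dJ$, so the whole theorem reduces to a statement about iteratively minimizing this one functional of $p_g$.

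The second step is to certify that $U$ is convex as a functional of $p_g$. Since $x \mapsto -\log x$ is convex on $(0,1]$ and a nonnegative integral of a pointwise-convex family of functions is again convex, $U(p_g)$ is convex; this plays exactly the role that the manifest convexity of $\int (p_g-1)^2\,dJ$ played for Theorem~\ref{sec:theorem1}. I would then invoke the same subgradient fact used there and in~\cite{goodfellow2014generative}: the subdifferential $\partial U(p_g)$ contains the gradient $\nabla_{p_g} L_g(D^*_G, p_g(J))$ of the loss evaluated at the optimal discriminator $D^*_G = p_g$. Consequently the generator step is a subgradient step on the convex functional $U$, which for a sufficiently small step size drives $p_g$ toward the global minimizer; Proposition~\ref{prop:2_2} identifies that minimizer as $p^*_g(J) = 1$ with minimum value $0$, which concludes the argument.

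The main obstacle I anticipate is not the convexity but the rigorous justification of the subgradient inclusion in this infinite-dimensional functional setting---essentially a Danskin/envelope-theorem interchange of differentiation with the optimization over $D$---together with the minor technical nuisance that $-\log p_g$ diverges as $p_g \to 0$. Because the descent direction points toward $p_g = 1$, the iterates move away from that singularity and integrability is preserved along the trajectory, but a fully careful treatment would still need the standard GAN-style hypotheses: that $D$ has enough capacity for $D^*_G = p_g$ to be reachable at each step, and that convexity is read in the appropriate function space. As in~\cite{goodfellow2014generative}, I would present this at the level of rigor of the original GAN convergence proof rather than supplying a complete measure-theoretic argument.
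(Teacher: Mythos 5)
Your proposal is correct and takes essentially the same route as the paper's own proof: both reduce the generator update to descent on the functional $U(p_g)=L_g(D^*_G,p_g(J))=-\int \log(p_g(J))\,dJ$ obtained from Propositions~\ref{prop:2_1} and~\ref{prop:2_2}, note its convexity in $p_g$, and then reuse the subgradient-descent argument of Theorem~\ref{sec:theorem1} (following \cite{goodfellow2014generative}) to conclude convergence to the global minimizer $p^*_g(J)=1$. The only difference is presentational: you make explicit the Danskin-type subgradient inclusion and the singularity at $p_g(J)=0$, which the paper dismisses with the phrase ``convex except the point $p_g(J)=0$.''
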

\begin{proof}
We follow \nj{the} same steps of the proof \nj{for Theorem~\ref{sec:theorem1}}.
In this case also, the supremum of $U(p_g) = \sup_D L_g(D,p_g(J))$ is convex except the point $p_g(J) = 0$, as in proposition~\ref{prop:2_2} (negative log function).
Therefore, $p_g(J)$ converges to the global minimum $1$ with small iterative update of $p_g$, concluding the proof.
\end{proof}

\section{Further Experiments}
\label{app:further_exp}
\begin{table*}[t]
\centering
 \caption{Top-$1$ precision and mAP scores for performance enhancing and adversarial problem for Imagenet50 Dataset: \\
 $\gamma$ = 0.0001 for the enhancing problem, $\gamma$ = 3 for the adversarial problem.}
 \label{table:enhance_sup}
  \resizebox{0.99\linewidth}{!}{
\begin{tabular}{|l|l|l|l||l|l|l|}
\hline  
\multirow{2}{*}{Classifier} & \multicolumn{3}{|c||}{Enhancing problem} &\multicolumn{3}{c|}{Adversarial problem}\\ 
\cline{2-7}
 &  Vanilla (1)& Proposed-ls & Proposed-lr&  vanilla (2)& proposed-m (B)& Proposed-l (B)\\
\hline
ResNet50 & 72\% / 0.649 &91.5\% / 0.883 & 88.6\% / 0.875 & 94.4\% / 0.922 &10.8\% / 0.101 & 16.3\% / 0.156 \\
ResNet101& 71\% / 0.635 &89.0\% / 0.856 & 88.4\% / 0.854 & 95.7\% / 0.938 &11.6\% / 0.101 & 22.2\% / 0.206 \\
VGG16& 71\% / 0.616 &93.4\% / 0.894 & 92.6\% / 0.891 & 88.5\% / 0.855 &3.20\% / 0.027 & 8.20\% / 0.078\\
DenseNet169& 74\% / 0.626 &92.1\% / 0.861 & 94.2\% / 0.919 & 95.5\% / 0.927 &6.56\% / 0.063 & 18.4\% / 0.160\\
\hline
\end{tabular}} 
\end{table*}
\begin{figure*}[t]
\centering
\begin{subfigure}[t]{0.03\textwidth}
\textbf{(A)}
\end{subfigure}
\begin{subfigure}[t]{0.29\textwidth}
\includegraphics[width=\linewidth,valign=t]{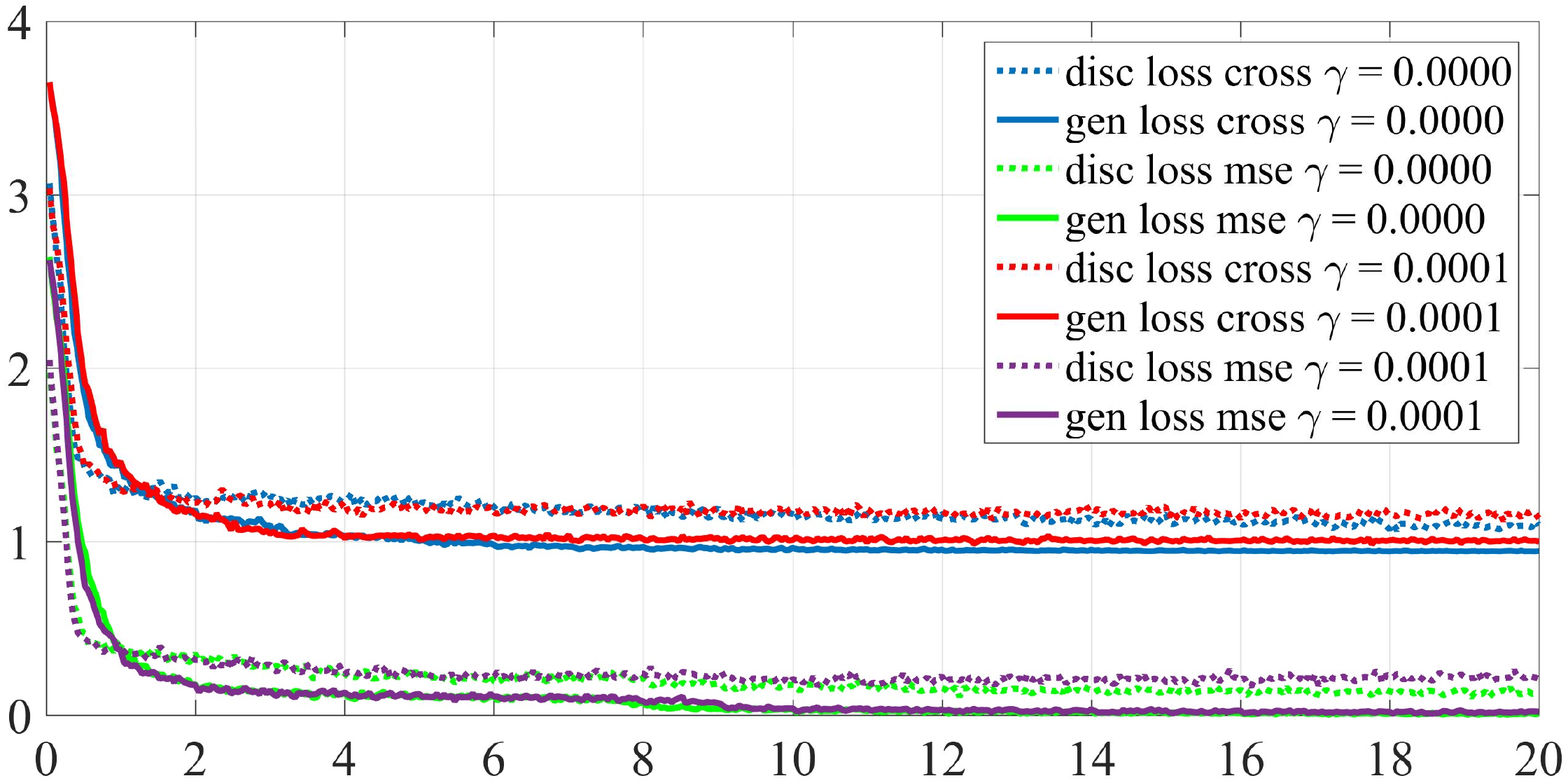}
\end{subfigure}\hfill
\begin{subfigure}[t]{0.03\textwidth}
\textbf{(B)}
\end{subfigure}
\begin{subfigure}[t]{0.29\textwidth}
\includegraphics[width=\linewidth,valign=t]{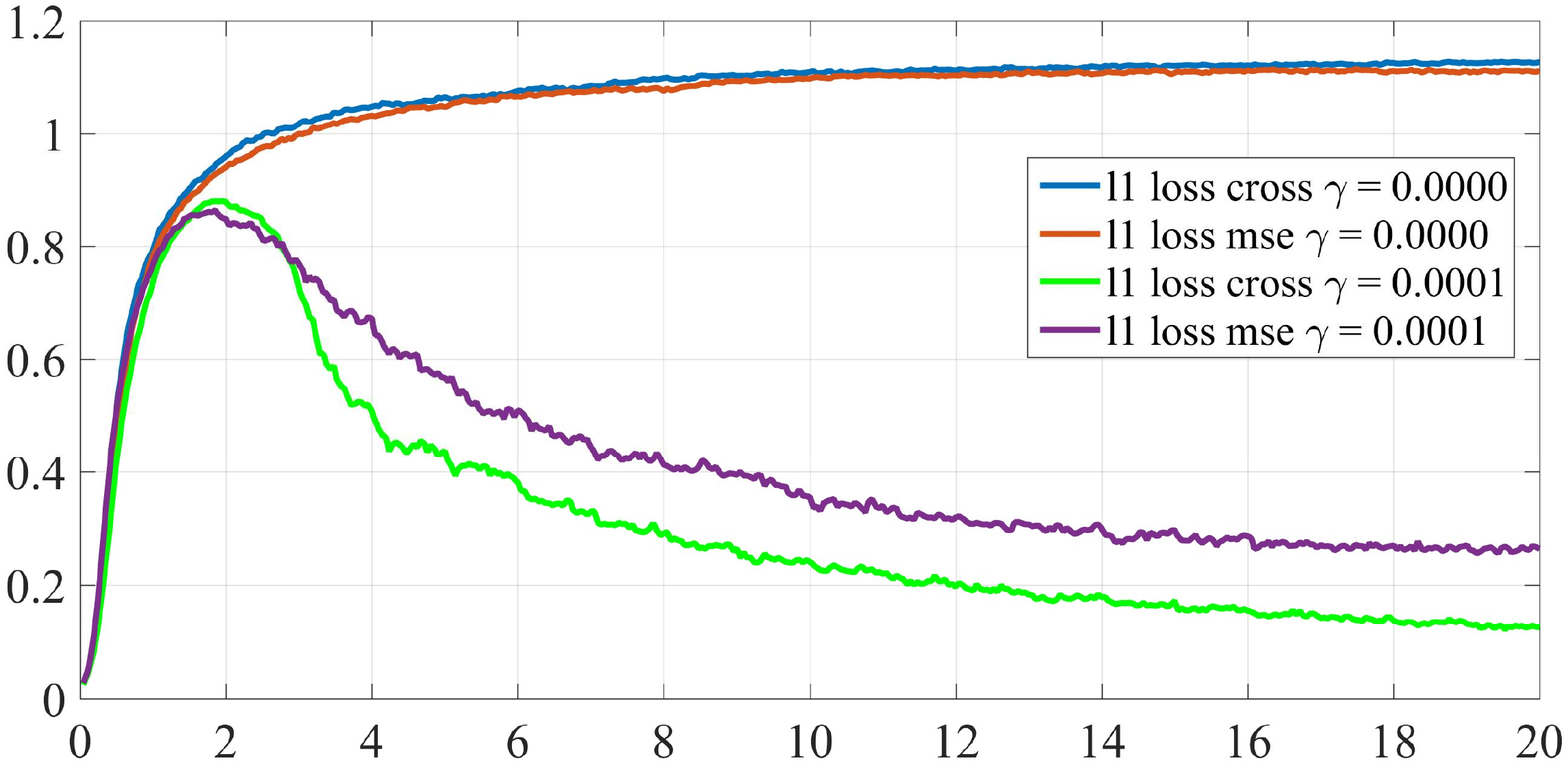}
\end{subfigure}\hfill
\begin{subfigure}[t]{0.03\textwidth}
\textbf{(C)}
\end{subfigure}
\begin{subfigure}[t]{0.29\textwidth}
\includegraphics[width=\linewidth,valign=t]{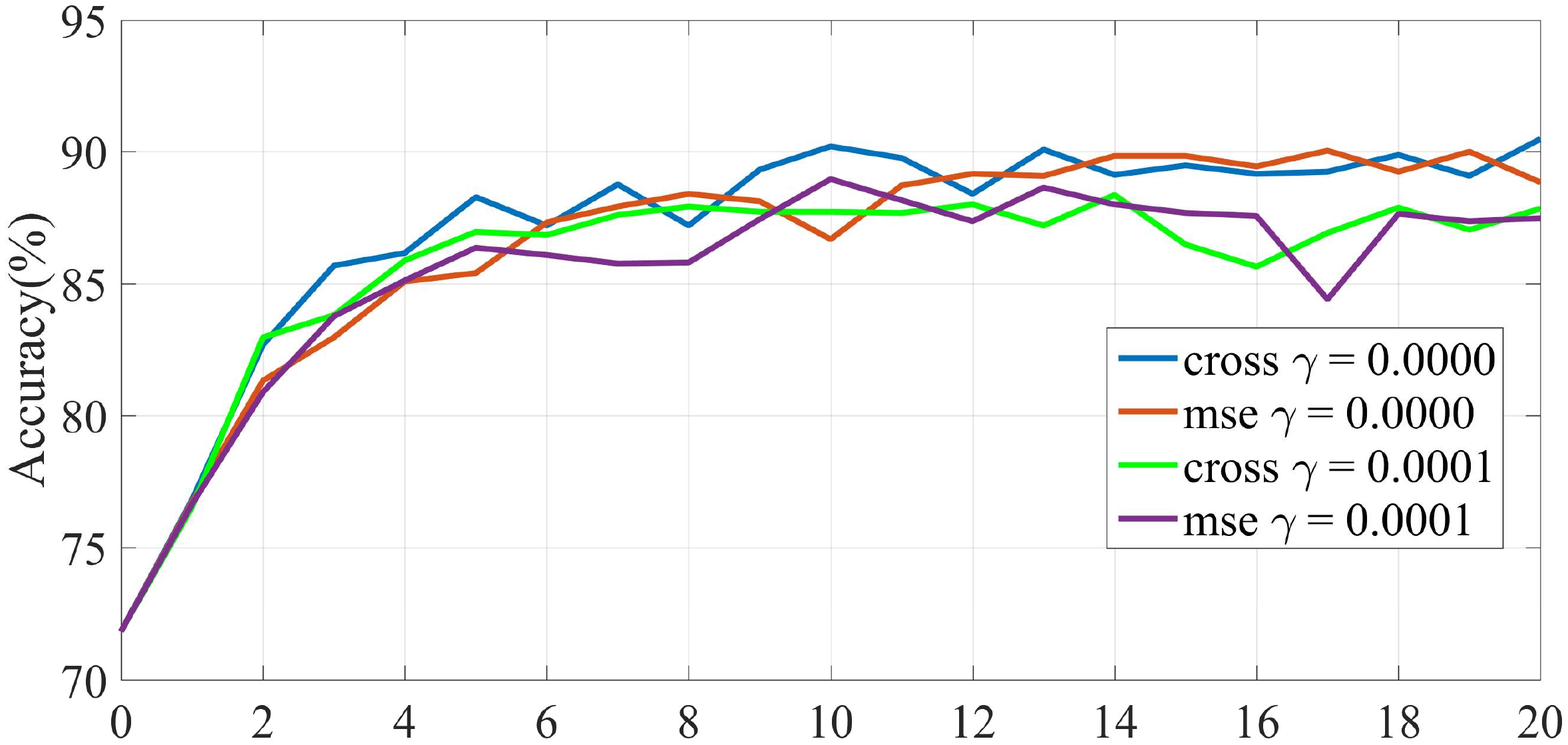}
\end{subfigure}\hfill
\caption[This is for my LOF]{Graphs describing the convergence and classification performance enhancement of the proposed algorithm with different $\gamma$:
(A) discriminator loss and generator loss, (B) $\ell$1-loss, (C) Accuracy.
Horizontal axis denotes epoch.
The experiments were performed \knj{with the ResNet101 classifier on ImageNet50} using the `Proposed-50'.
}
\label{fig:exp_graph_enhance_sup}
\end{figure*}
\begin{figure*}[t]
\centering
\begin{subfigure}[t]{0.03\textwidth}
\textbf{(A)}
\end{subfigure}
\begin{subfigure}[t]{0.29\textwidth}
\includegraphics[width=\linewidth,valign=t]{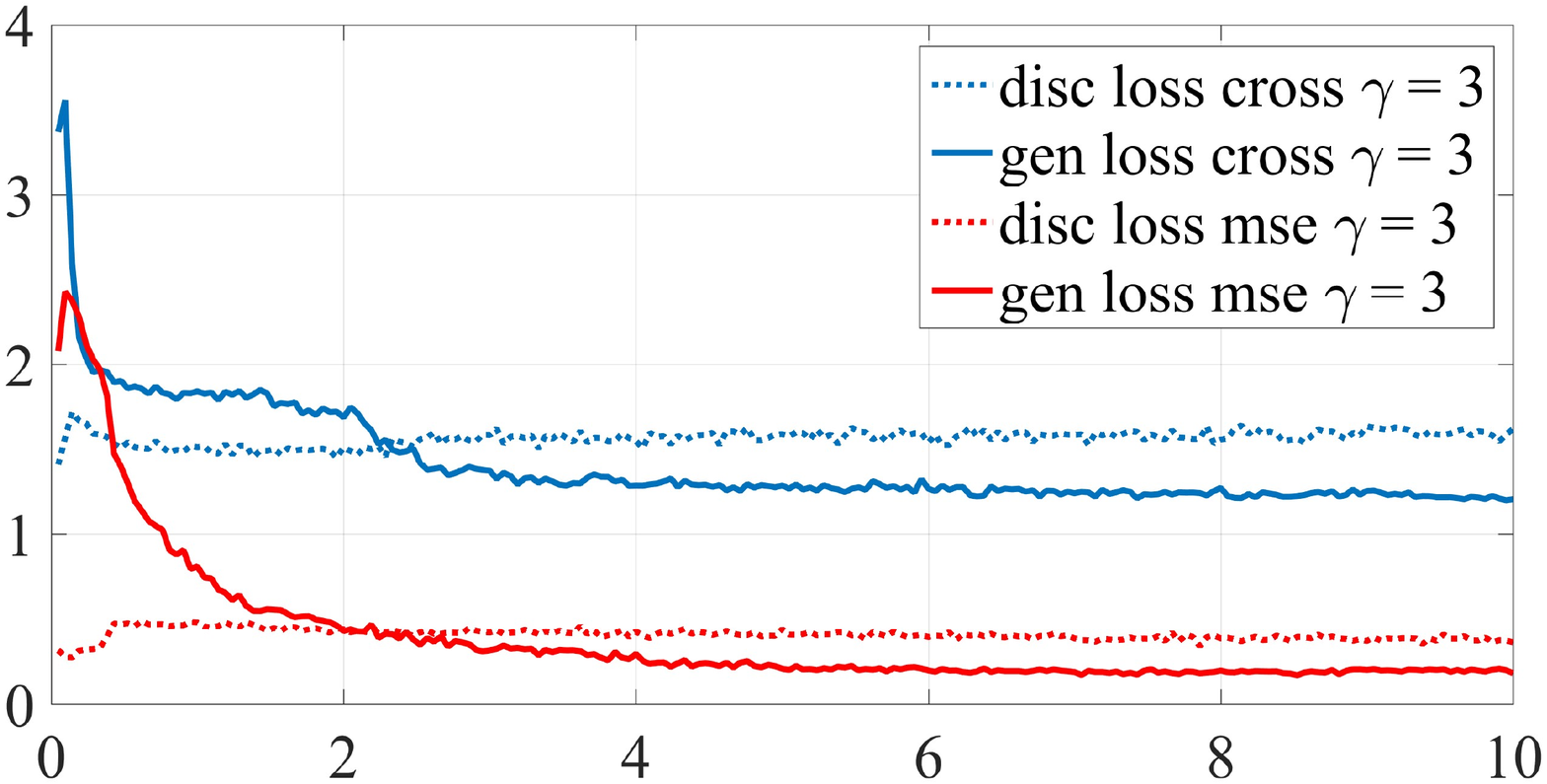}
\end{subfigure}\hfill
\begin{subfigure}[t]{0.03\textwidth}
\textbf{(B)}
\end{subfigure}
\begin{subfigure}[t]{0.29\textwidth}
\includegraphics[width=\linewidth,valign=t]{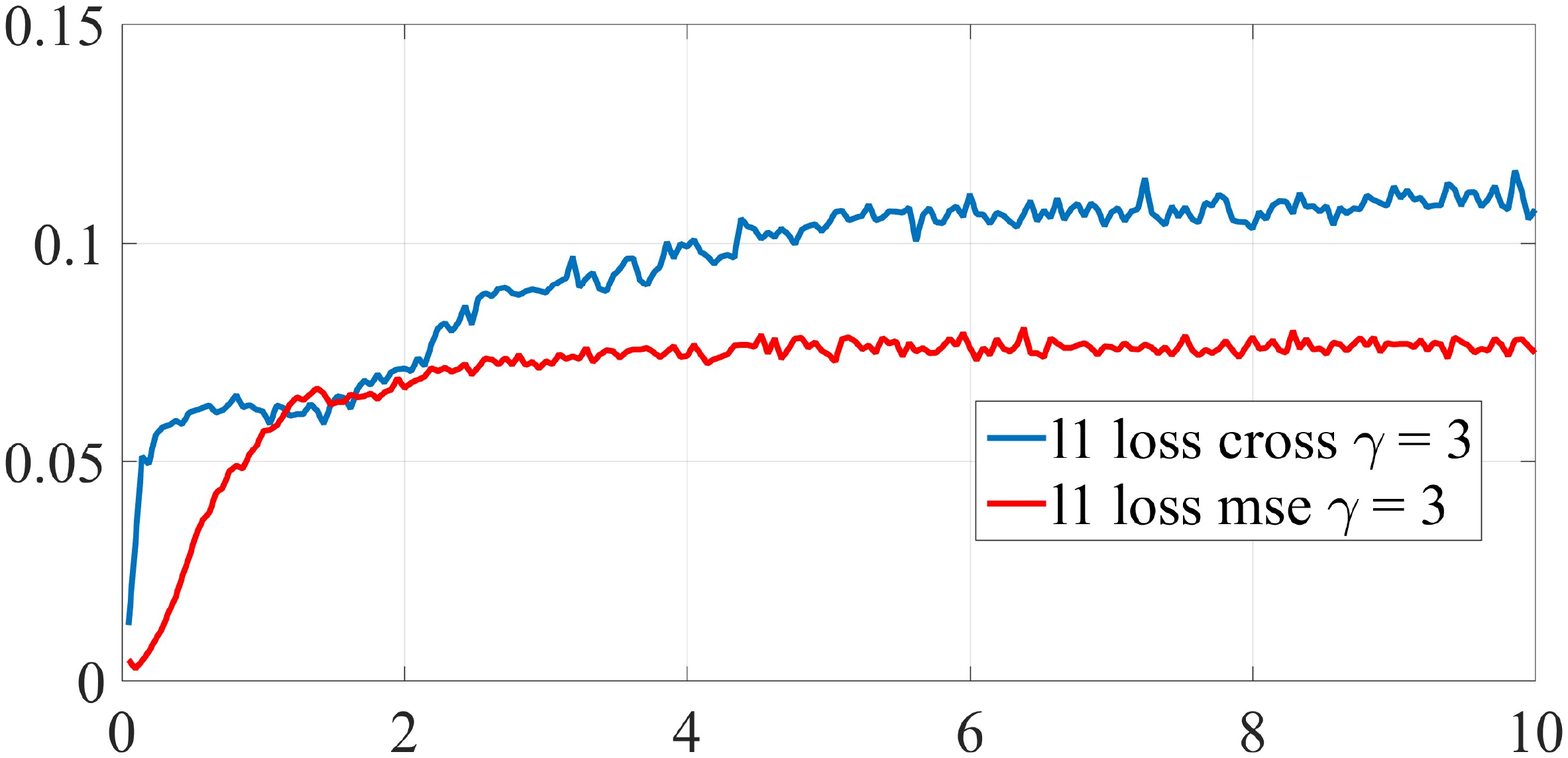}
\end{subfigure}\hfill
\begin{subfigure}[t]{0.03\textwidth}
\textbf{(C)}
\end{subfigure}
\begin{subfigure}[t]{0.29\textwidth}
\includegraphics[width=\linewidth,valign=t]{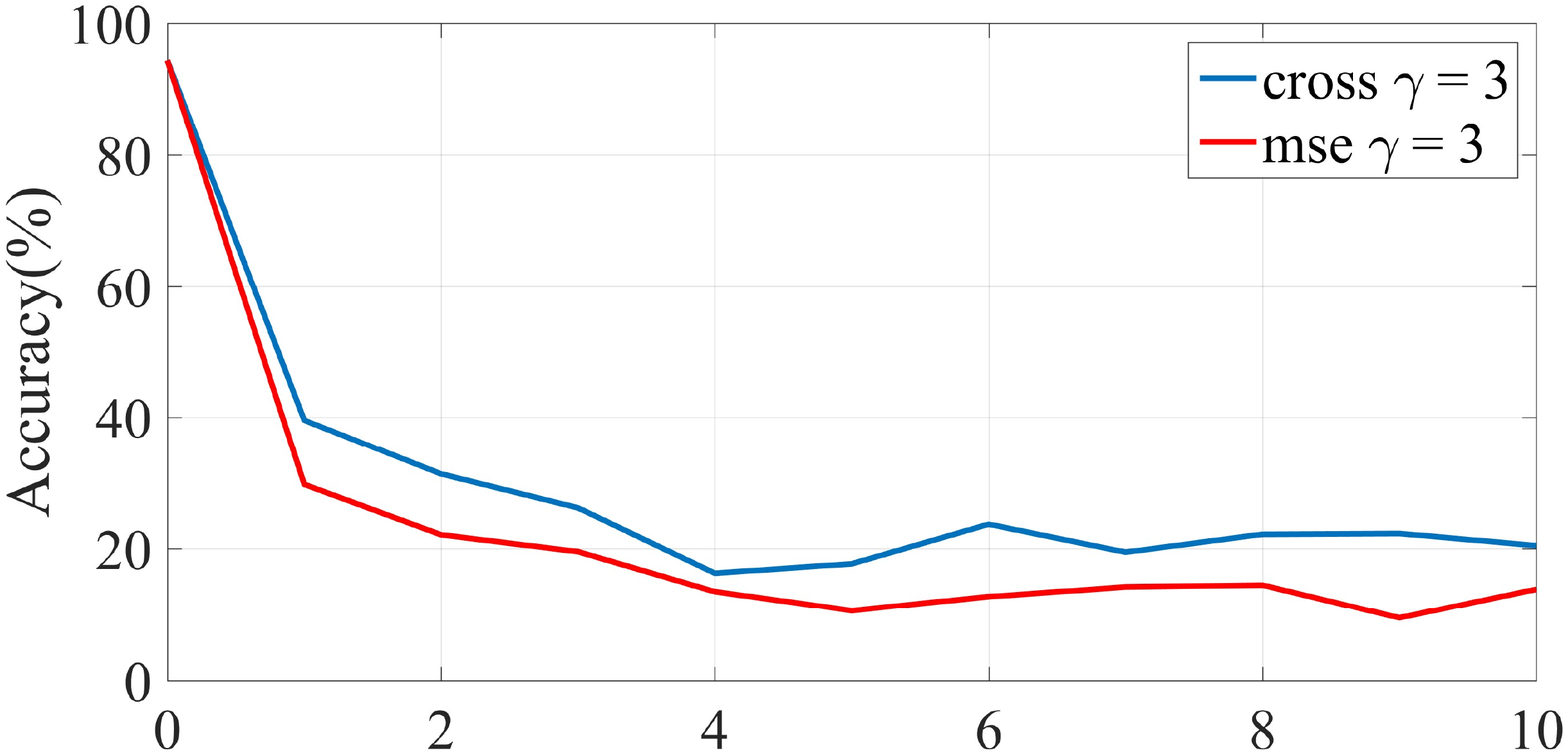}
\end{subfigure}\hfill
\caption[This is for my LOF]{Graphs describing the convergence and classification performance degradation of the proposed algorithm with different $\gamma$:
(A) discriminator loss and generator loss, (B) $\ell$1-loss, (C) Accuracy.
Horizontal axis denotes epoch.
The experiments were performed \knj{with the ResNet50 classifier on ImageNet50} using the `Proposed-B'.
}
\label{fig:exp_graph_adv_sup}
\end{figure*}
Table~\ref{table:enhance_sup} shows the performance enhancement and degradation of the classifiers by proposed algorithm each applying the two different loss functions: least square loss (proposed-ls), and cross-entropy loss(proposed-lr).
The experiment was tested with Imagnet50, the largest dataset in the paper.
For the enhancement case, we confirmed that the `proposed-lr' obtained comparable results to the `proposed-ls'.
In the degradation case, 
the `proposed-lr' also dropped the classification performance significantly, but the decrease was smaller than that of `proposed-ls' case.

In graphs in Figure~\ref{fig:exp_graph_enhance_sup}, changes of the losses and classification accuracy over epoch for the enhancement problem are presented.
As seen in the graph (A), we have confirmed that both generator and discriminator losses are converged for both `proposed-ls' and `proposed-lr' cases. 
We note that the negative log losses in `proposed-lr' do not converge to zero.
Interesting thing is that $\ell1$-loss has lower value in `proposed-lr' than `proposed-ls'.
Since the performance difference between `proposed-lr' and `proposed-ls' is not that significant, as in graph (C), we can conclude that we can enhance the classification performance with smaller perturbation when using the cross-entropy loss than the least square loss. 
The graphs also show the changes of the losses in the case $\ell1$-regularization term was detached.
In this case, the $\ell1$-intensity of the perturbation was converged to a specific value (about 1.1 as in graph (B)), as mentioned in the paper.
We also confirmed from graphs (C) and (D) that excluding the $\ell1$-regularization improved the enhancement performance, but the increase was insignificant.

Graphs in Figure~\ref{fig:exp_graph_adv_sup} show the same changes presented in Figure~\ref{fig:exp_graph_adv_qual} for the adversarial problem.
From the graph (A), we have confirmed that the generator loss and the discriminator loss both converge for both `proposed-ls' and `proposed-lr'.
Different from the performance enhancement problem, the discriminator loss converged very fast, which is also reported in the paper.
What is noteworthy is that `proposed-ls' achieved better degradation performance than `proposed-lr' with small $\ell1$ intensity of the perturbation (see graphs (B) and (C)).
In the adversarial problem case, it seems that applying least square loss can be more efficient choice than applying cross-entropy loss.

\begin{figure*}[t]
\centering
\begin{subfigure}[t]{0.03\textwidth}
\textbf{(A)}
\end{subfigure}
\begin{subfigure}[t]{0.21\textwidth}
\includegraphics[width=\linewidth,valign=t]{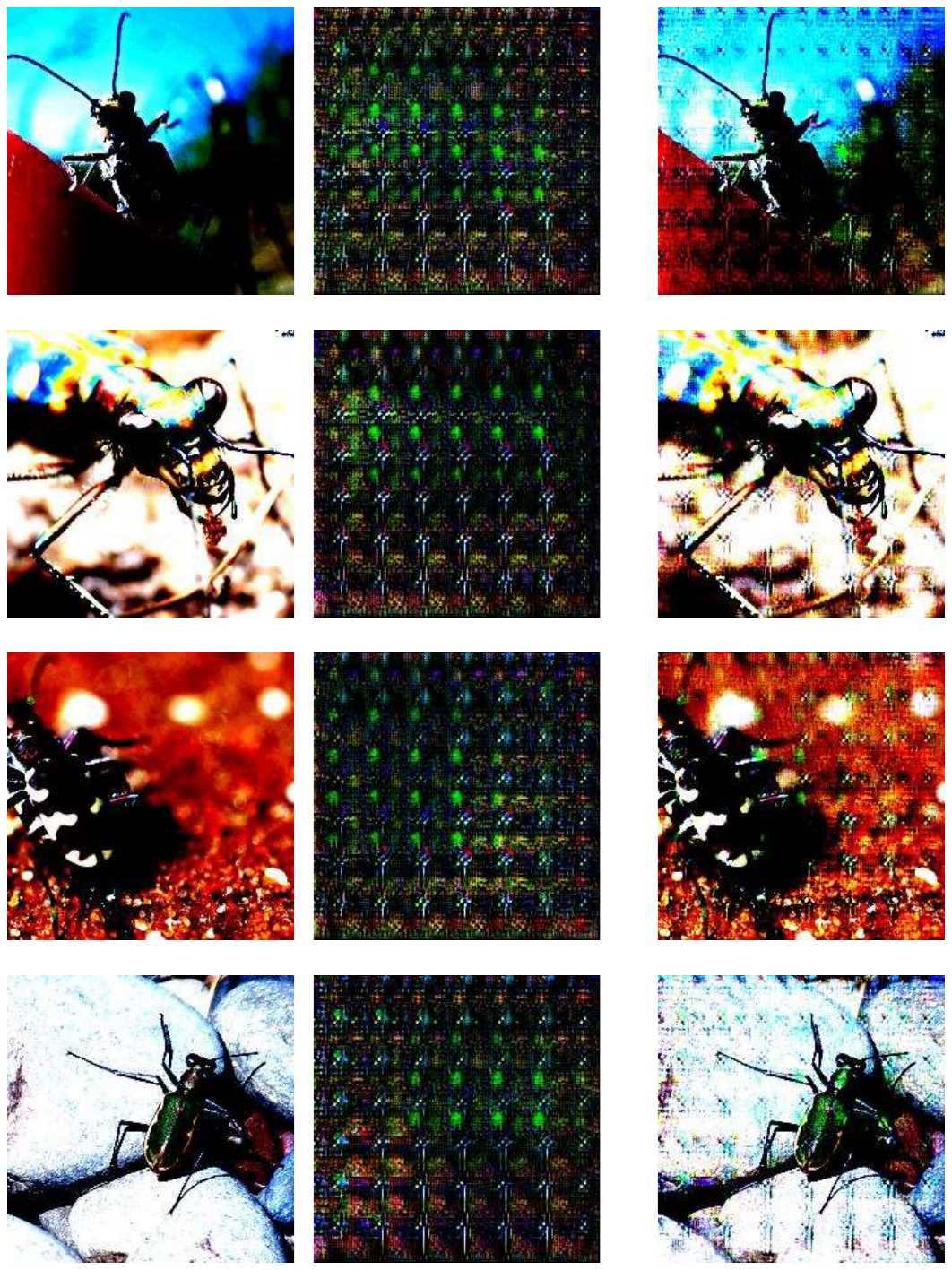}
\end{subfigure}\hfill
\begin{subfigure}[t]{0.03\textwidth}
\textbf{(B)}
\end{subfigure}
\begin{subfigure}[t]{0.21\textwidth}
\includegraphics[width=\linewidth,valign=t]{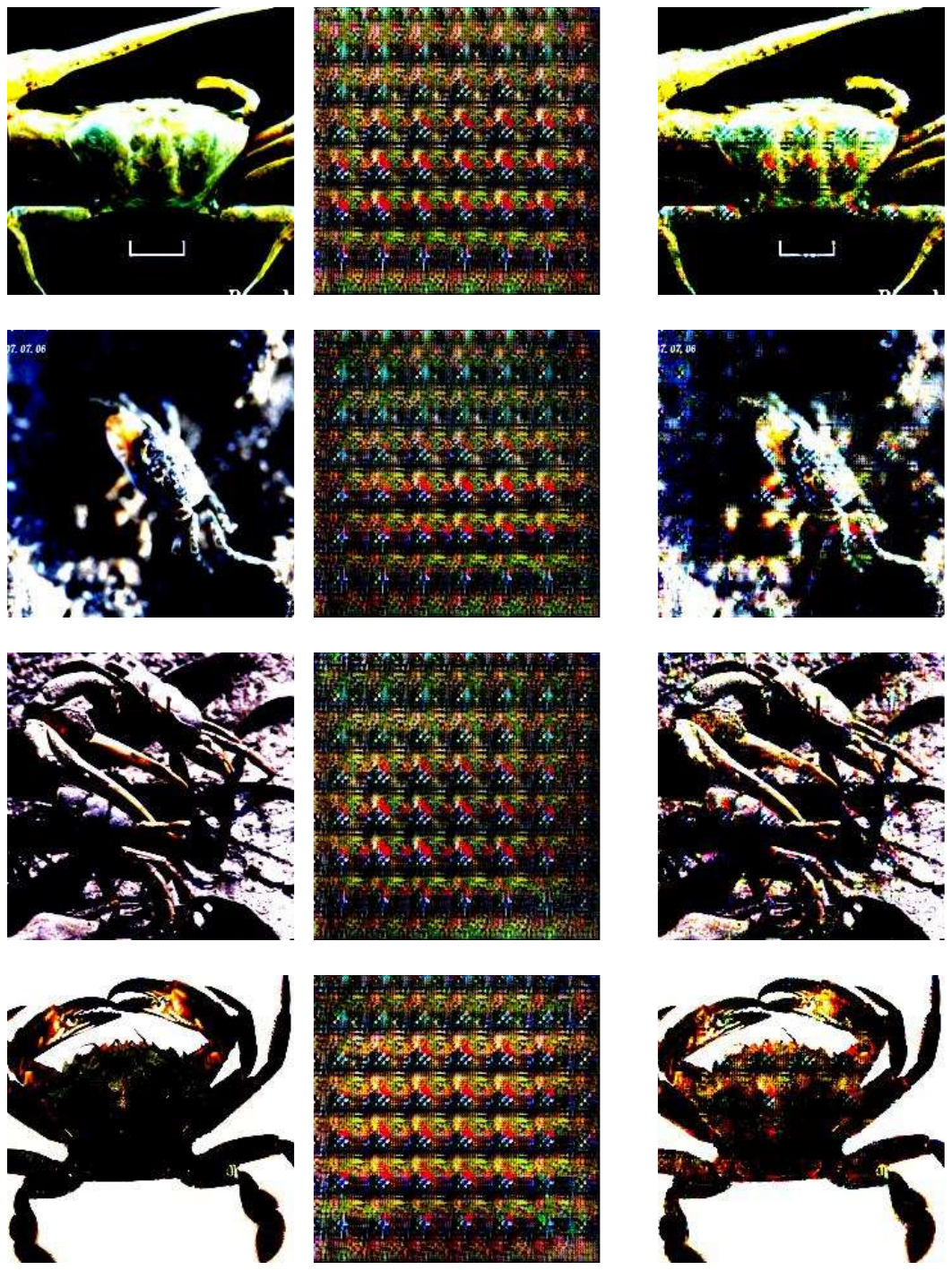}
\end{subfigure}\hfill
\begin{subfigure}[t]{0.03\textwidth}
\textbf{(C)}
\end{subfigure}
\begin{subfigure}[t]{0.21\textwidth}
\includegraphics[width=\linewidth,valign=t]{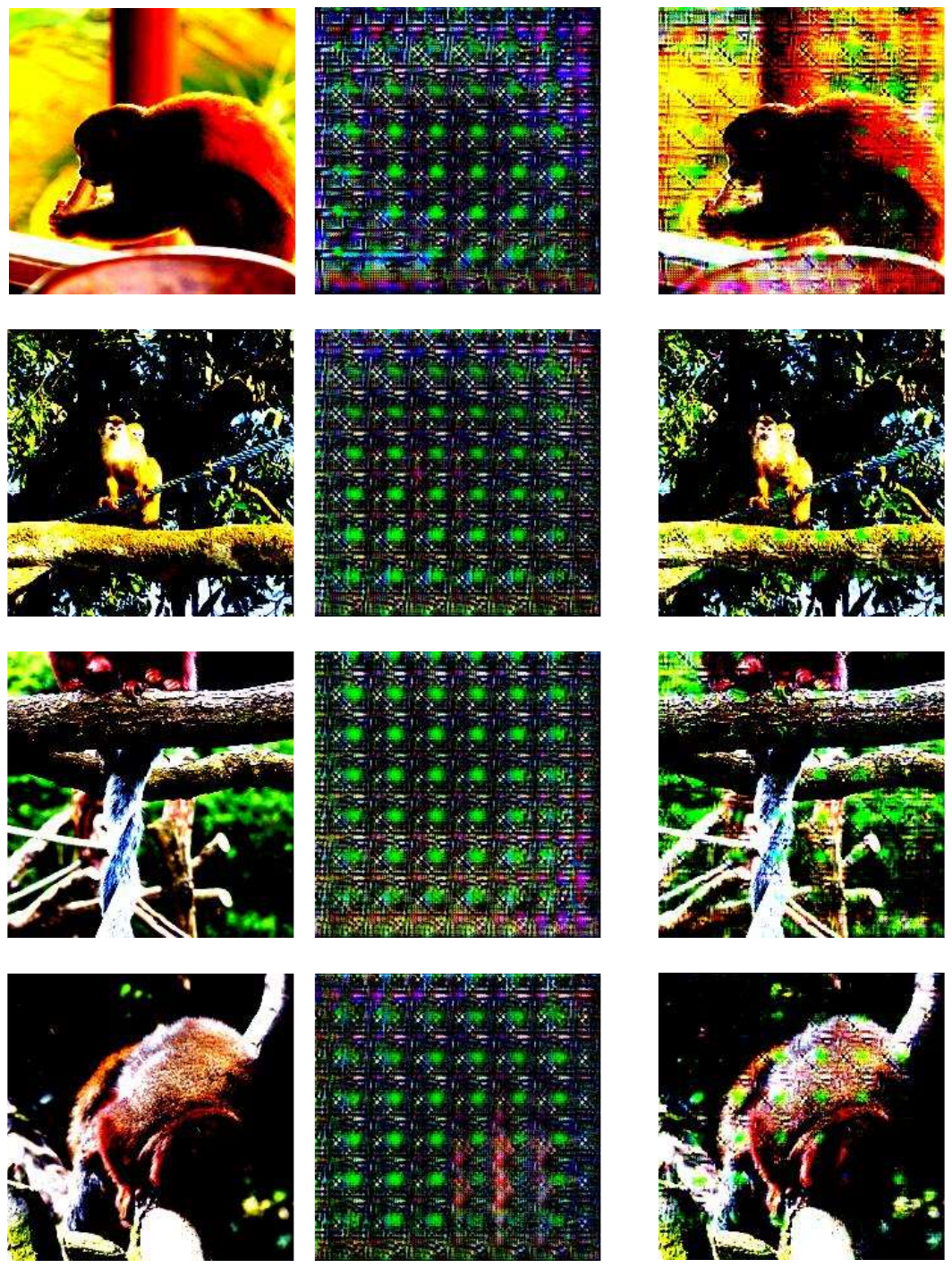}
\end{subfigure}\hfill
\begin{subfigure}[t]{0.03\textwidth}
\textbf{(D)}
\end{subfigure}
\begin{subfigure}[t]{0.21\textwidth}
\includegraphics[width=\linewidth,valign=t]{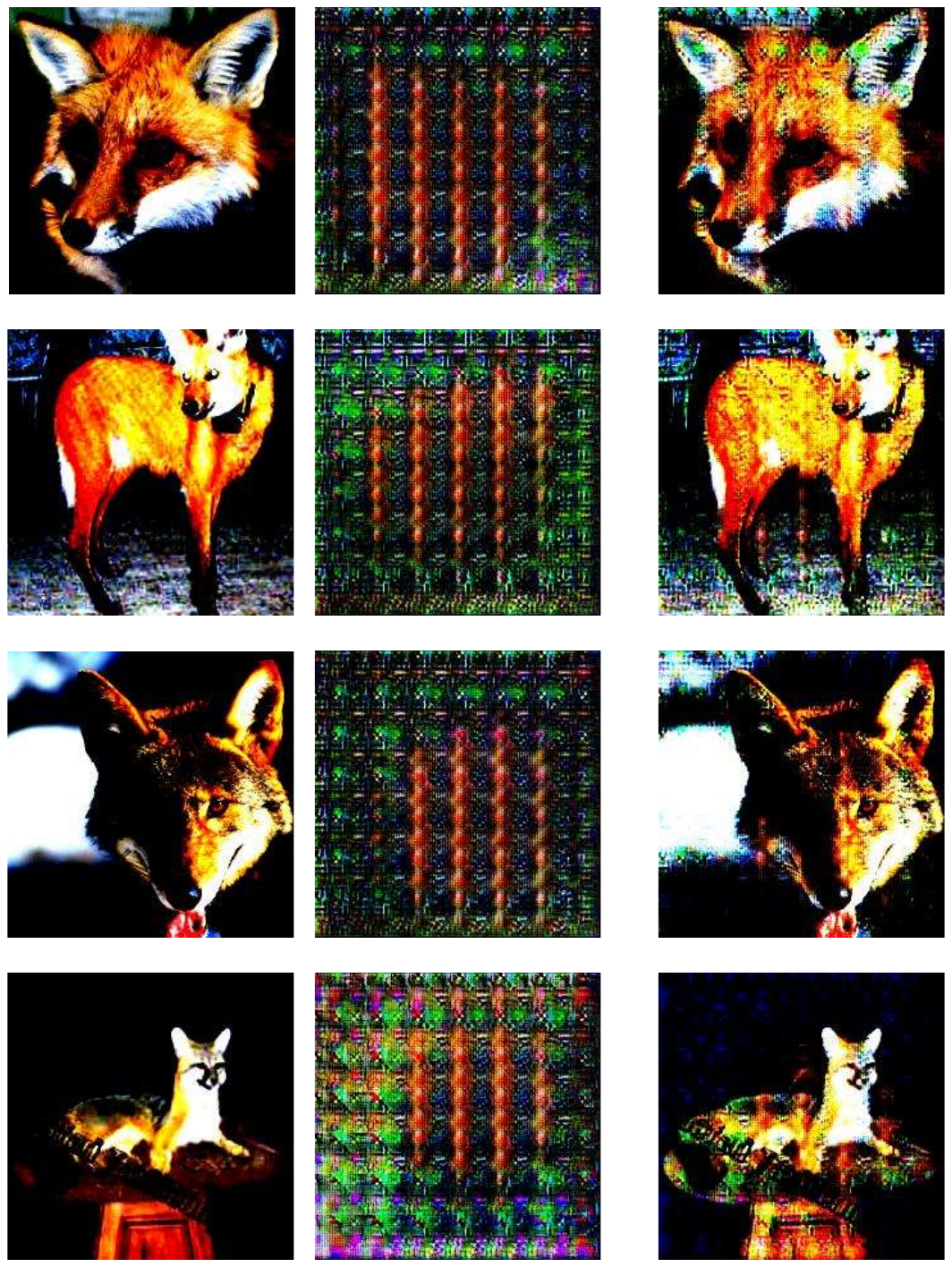}
\end{subfigure}\hfill
\caption[This is for my LOF]{Examples of generated perturbation from normalized images in diverse image classes for performance enhancement problem. (A) bug, (B) crab, (C) monkey, and (D) fox.
}
\label{fig:exp_graph_enhance_qual}
\end{figure*}
\begin{figure*}[t]
\centering
\begin{subfigure}[t]{0.03\textwidth}
\textbf{(A)}
\end{subfigure}
\begin{subfigure}[t]{0.21\textwidth}
\includegraphics[width=\linewidth,valign=t]{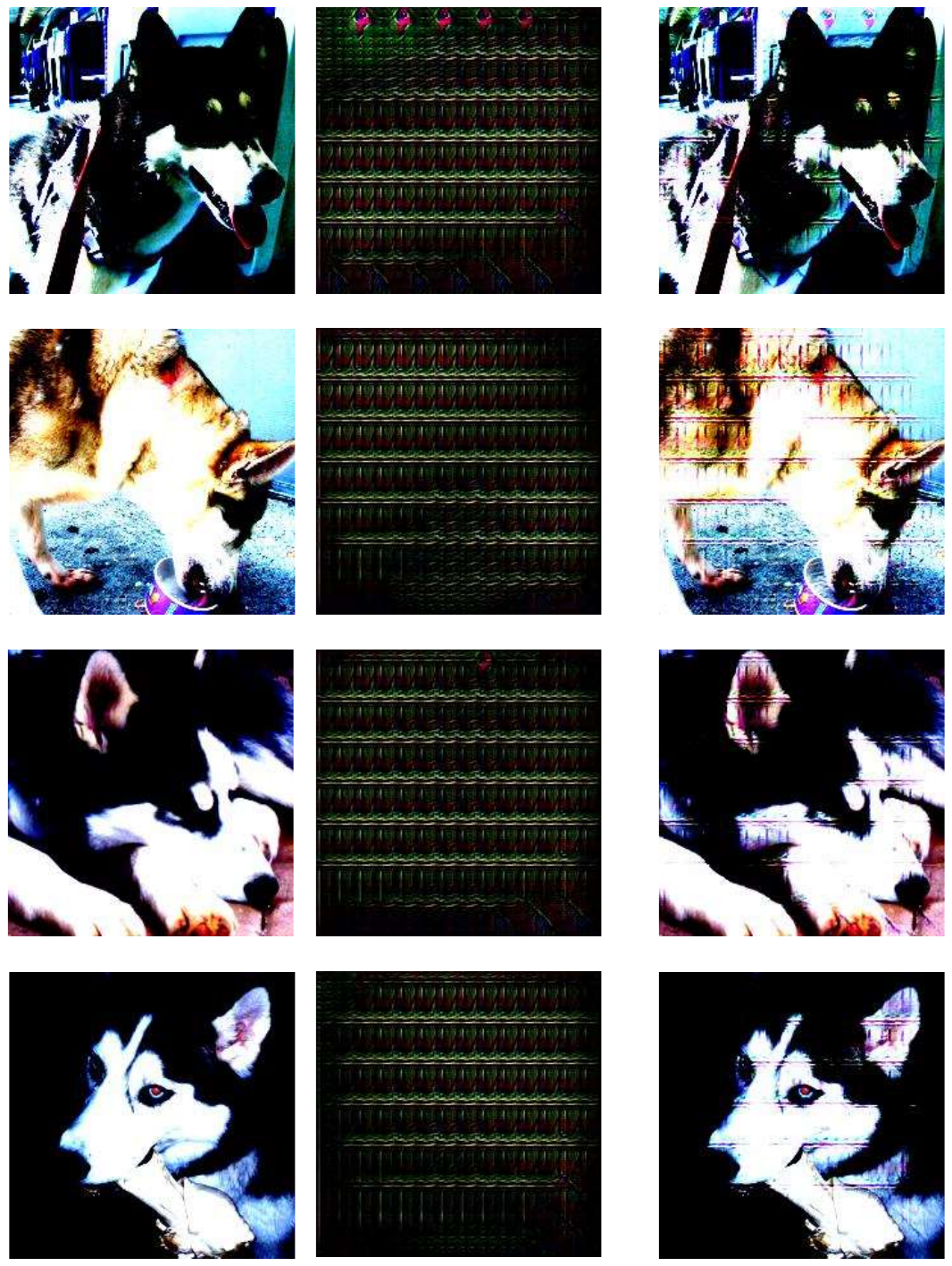}
\end{subfigure}\hfill
\begin{subfigure}[t]{0.03\textwidth}
\textbf{(B)}
\end{subfigure}
\begin{subfigure}[t]{0.21\textwidth}
\includegraphics[width=\linewidth,valign=t]{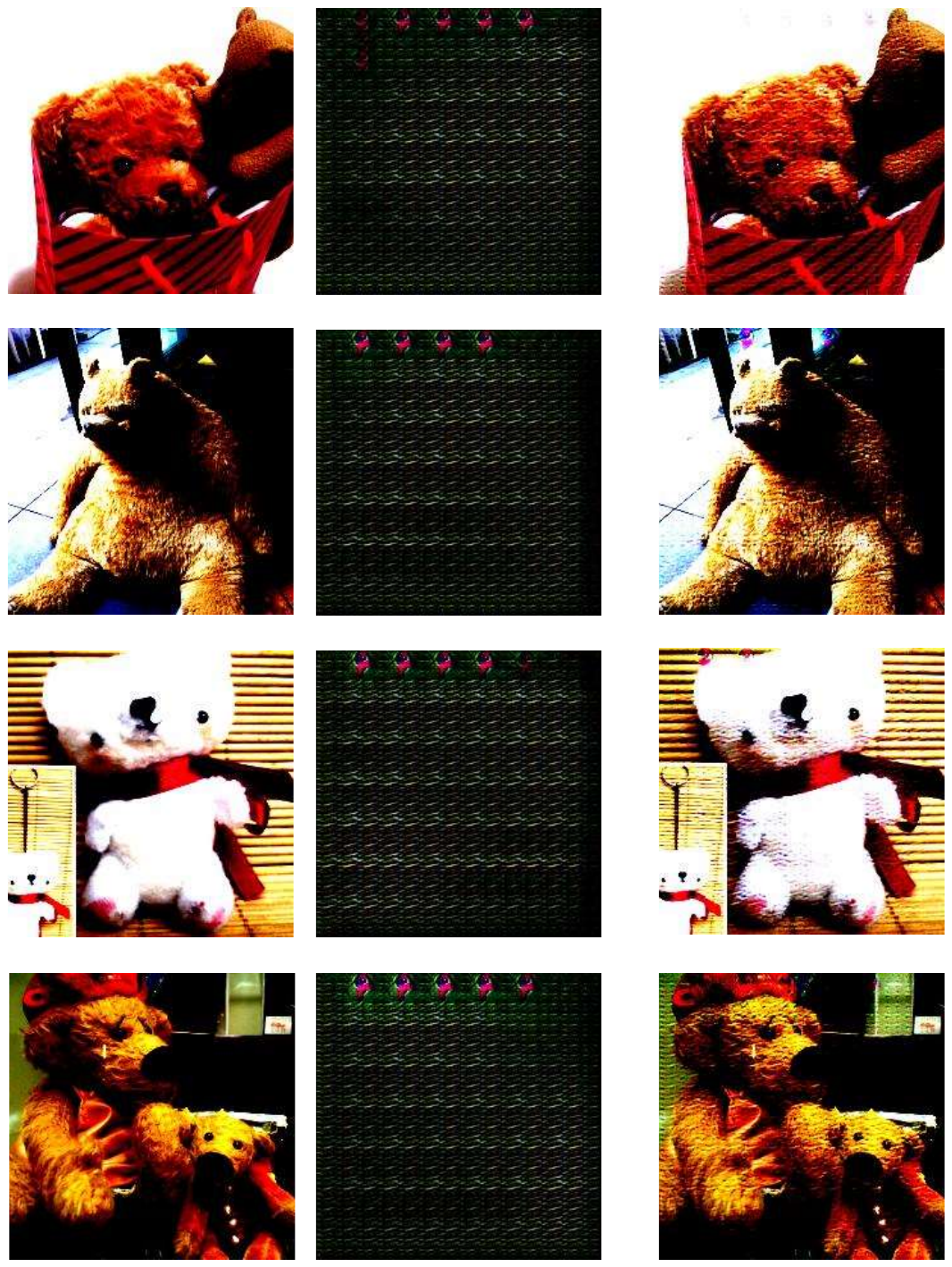}
\end{subfigure}\hfill
\begin{subfigure}[t]{0.03\textwidth}
\textbf{(C)}
\end{subfigure}
\begin{subfigure}[t]{0.21\textwidth}
\includegraphics[width=\linewidth,valign=t]{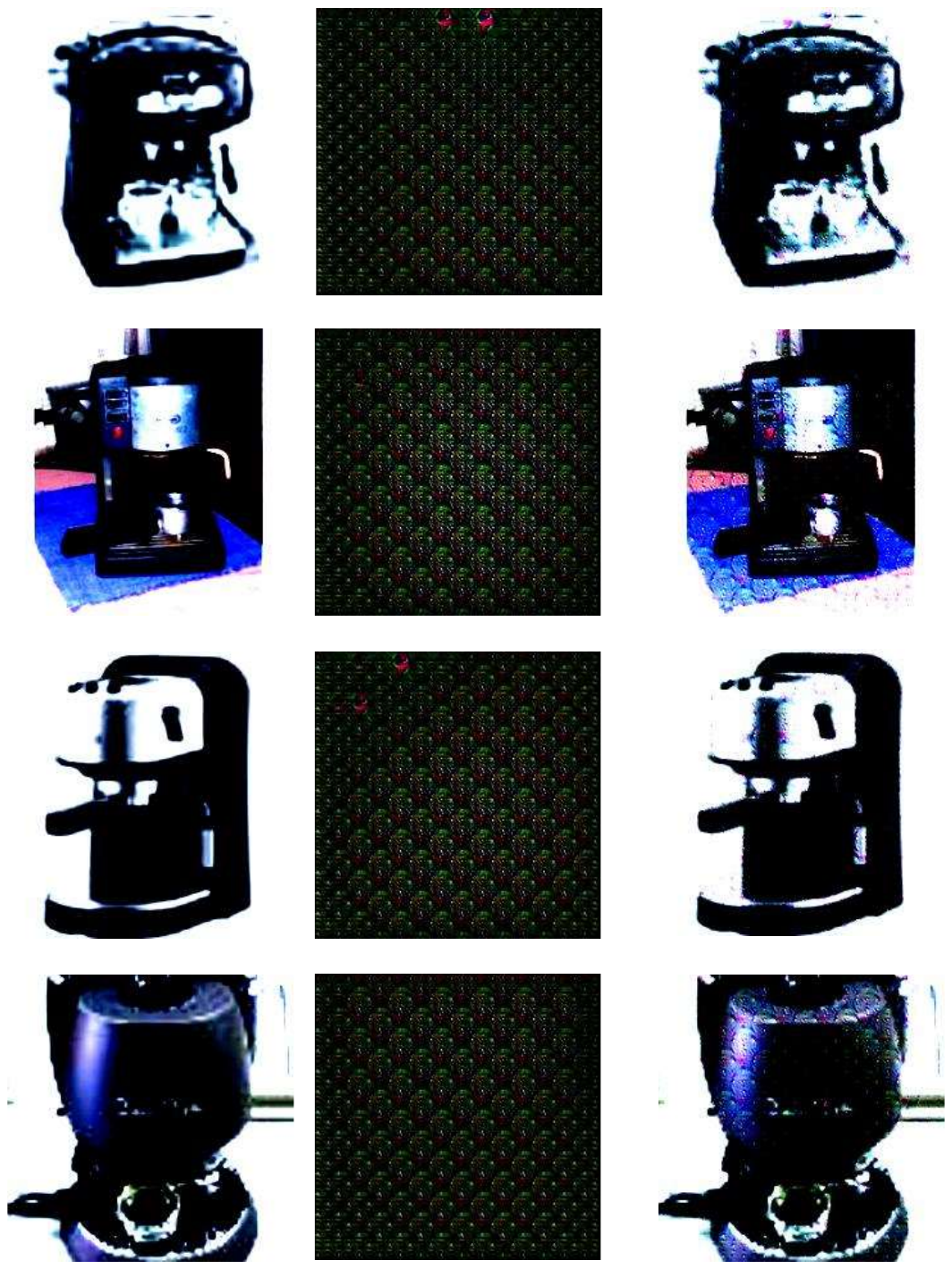}
\end{subfigure}\hfill
\begin{subfigure}[t]{0.03\textwidth}
\textbf{(D)}
\end{subfigure}
\begin{subfigure}[t]{0.21\textwidth}
\includegraphics[width=\linewidth,valign=t]{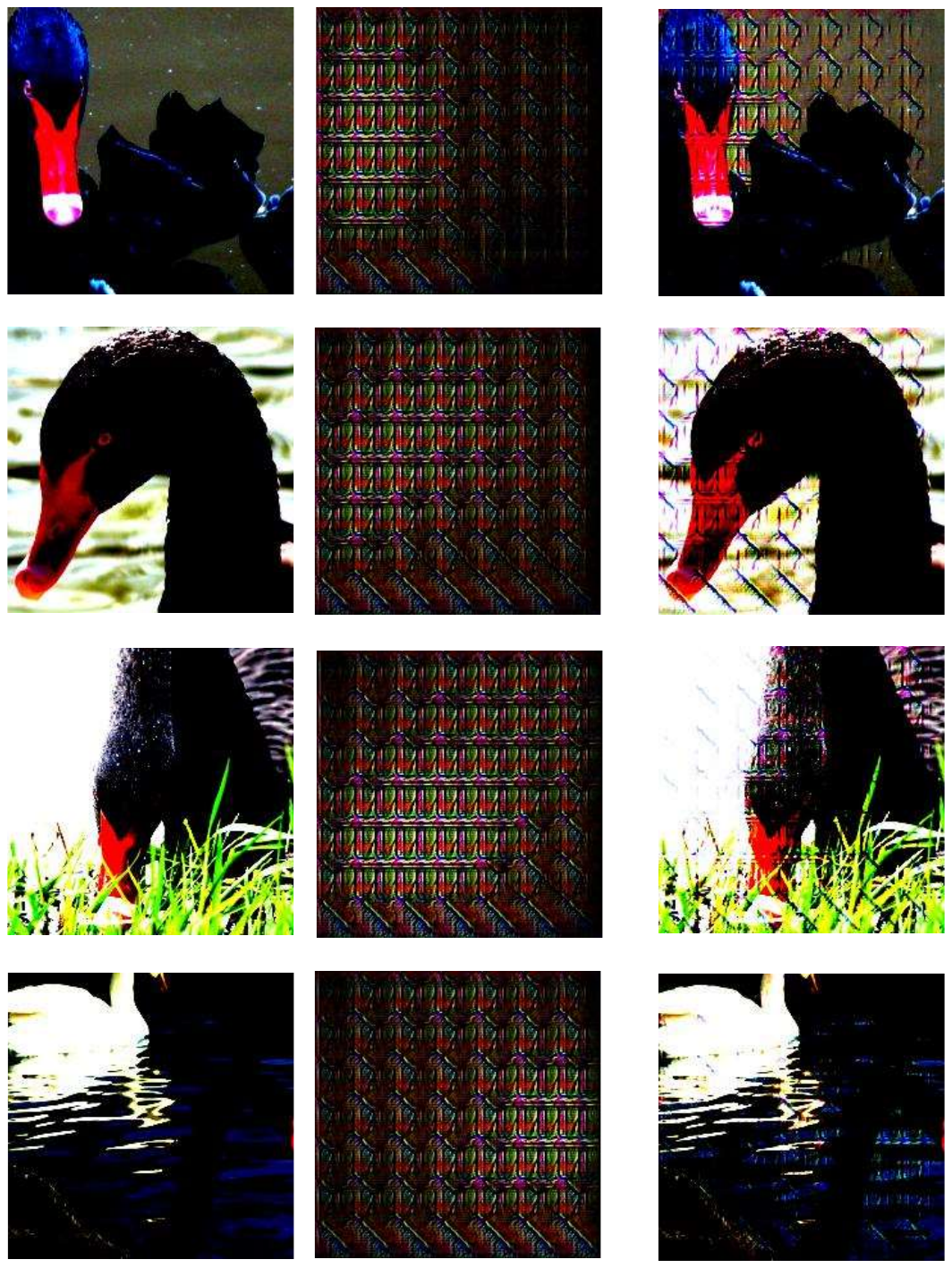}
\end{subfigure}\hfill
\caption[This is for my LOF]{Examples of generated perturbation from normalized images in diverse image classes for performance degradation problem. (A) dog, (B) teddy bear, (C) coffee machine, and (D) swan.
}
\label{fig:exp_graph_adv_qual}
\end{figure*}
\begin{figure*}[t]
\centering
\begin{subfigure}[t]{0.03\textwidth}
\textbf{(A)}
\end{subfigure}
\begin{subfigure}[t]{0.21\textwidth}
\includegraphics[width=\linewidth,valign=t]{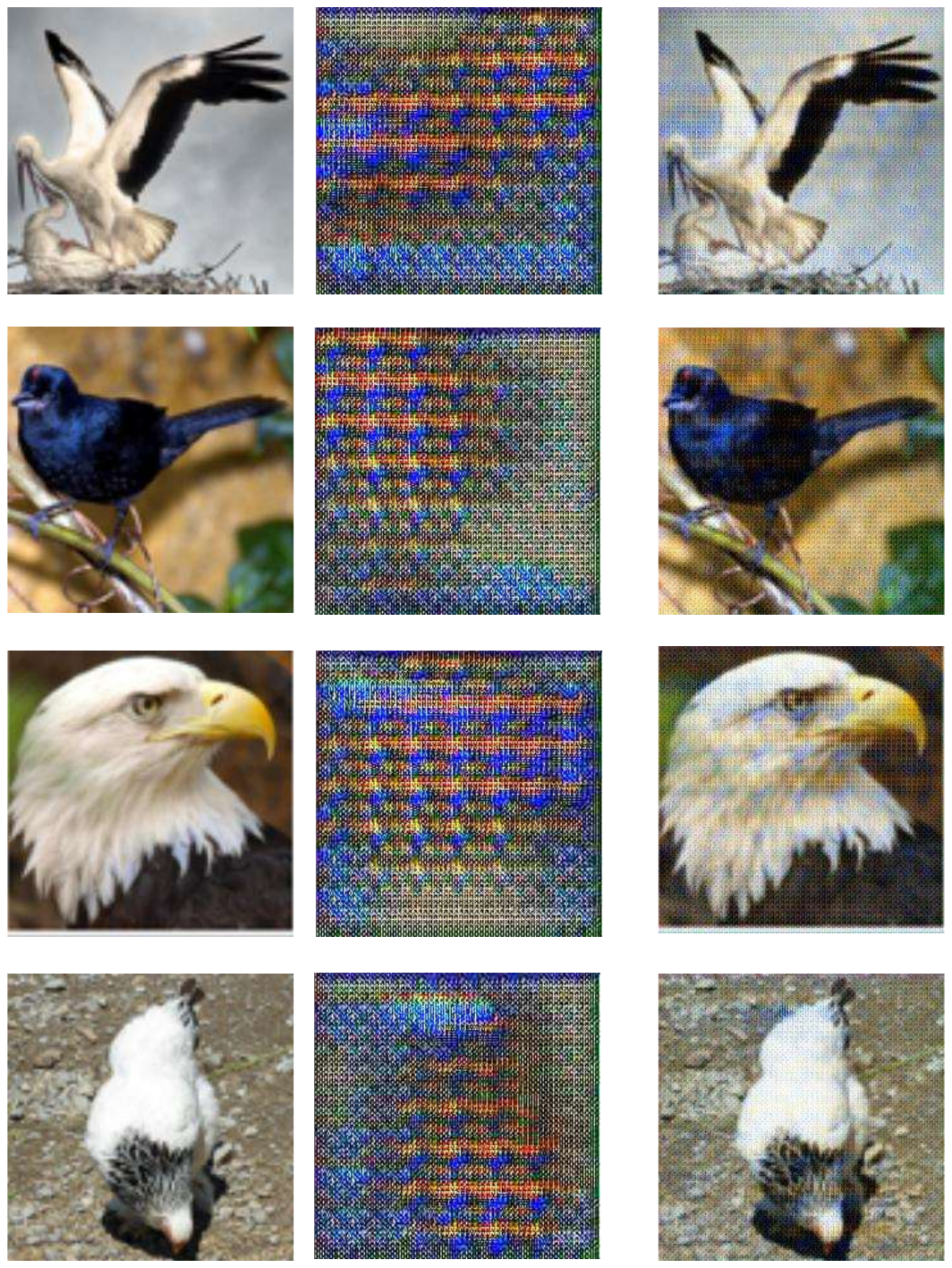}
\end{subfigure}\hfill
\begin{subfigure}[t]{0.03\textwidth}
\textbf{(B)}
\end{subfigure}
\begin{subfigure}[t]{0.21\textwidth}
\includegraphics[width=\linewidth,valign=t]{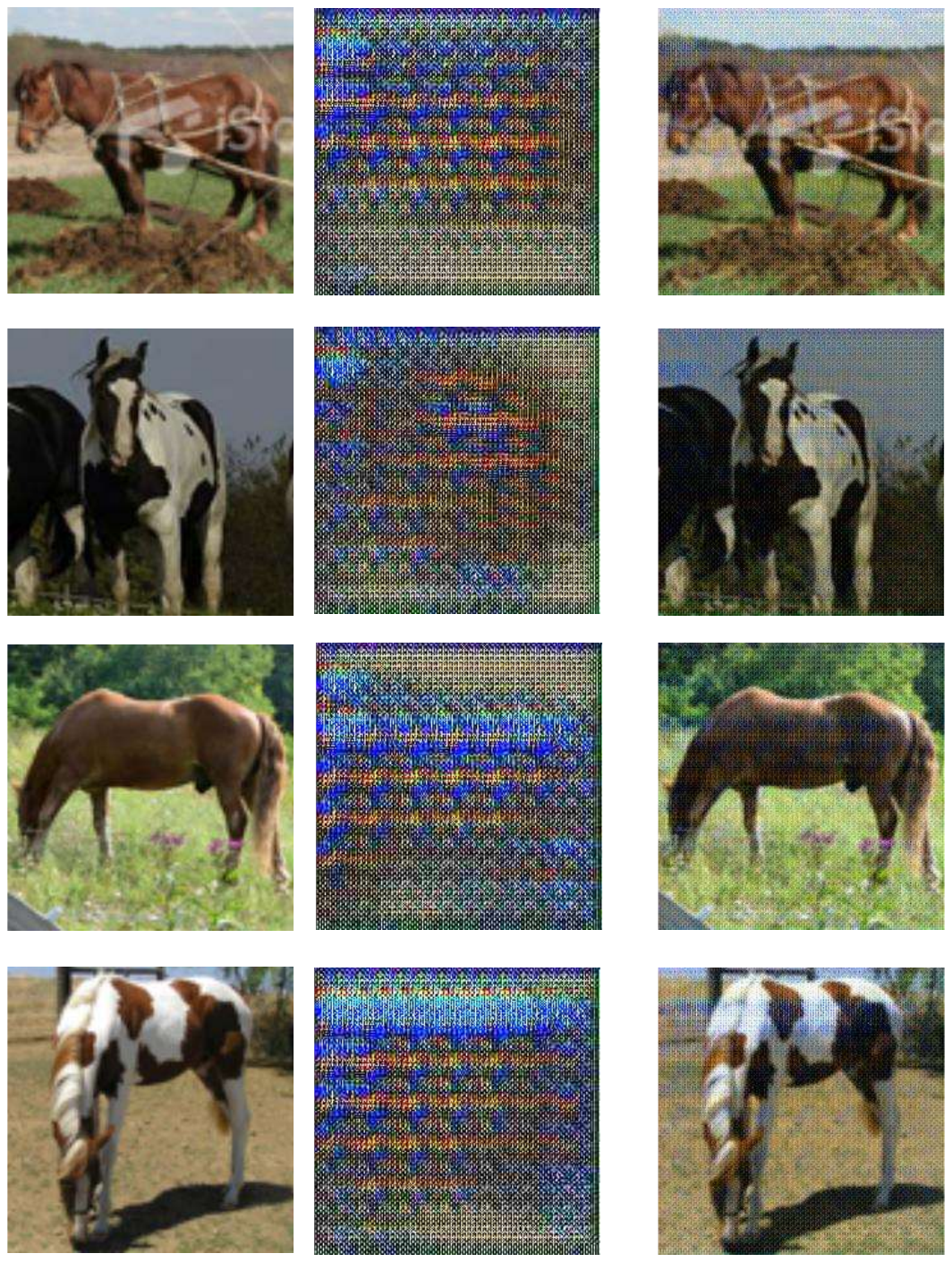}
\end{subfigure}\hfill
\begin{subfigure}[t]{0.03\textwidth}
\textbf{(C)}
\end{subfigure}
\begin{subfigure}[t]{0.21\textwidth}
\includegraphics[width=\linewidth,valign=t]{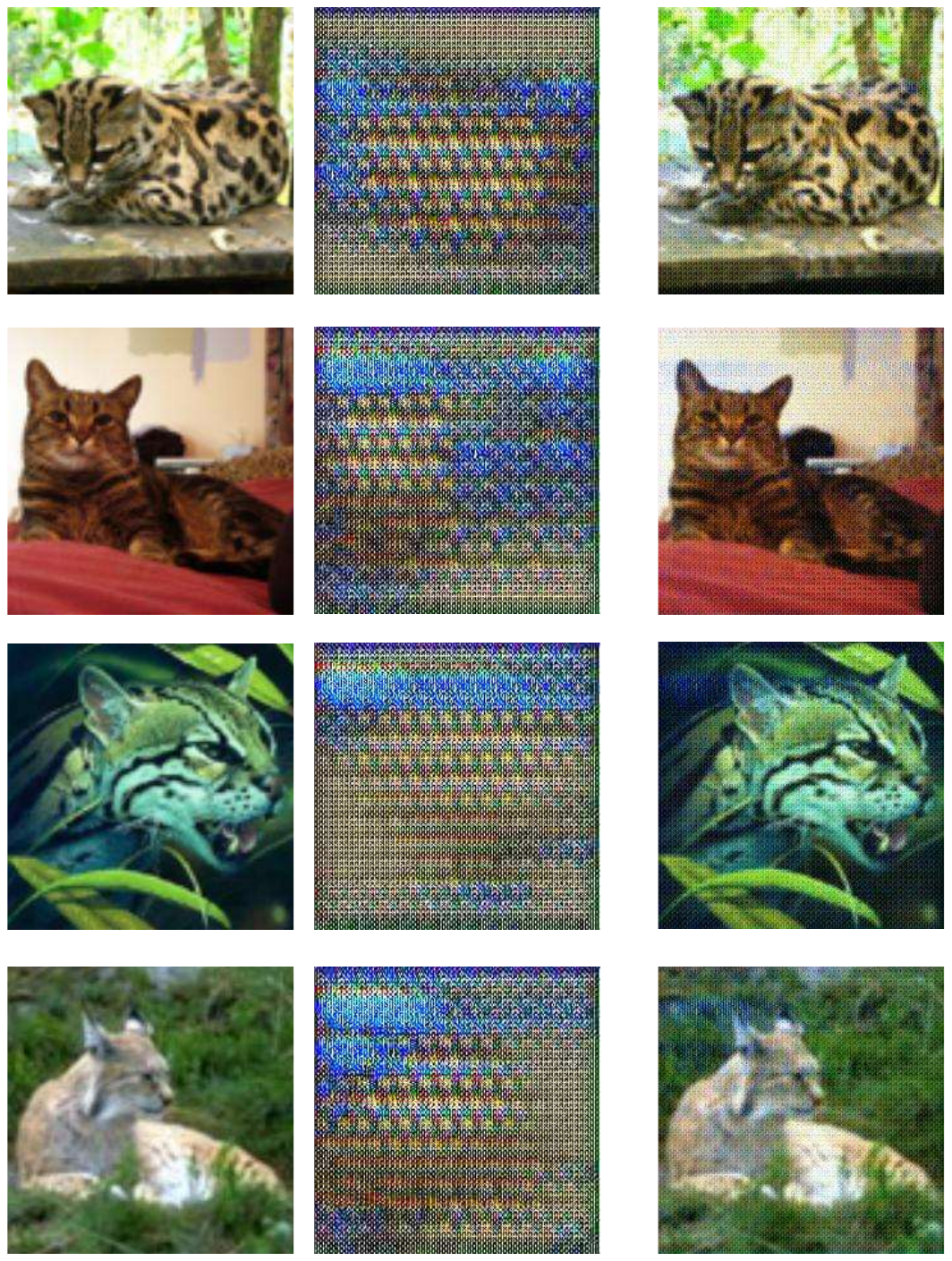}
\end{subfigure}\hfill
\begin{subfigure}[t]{0.03\textwidth}
\textbf{(D)}
\end{subfigure}
\begin{subfigure}[t]{0.21\textwidth}
\includegraphics[width=\linewidth,valign=t]{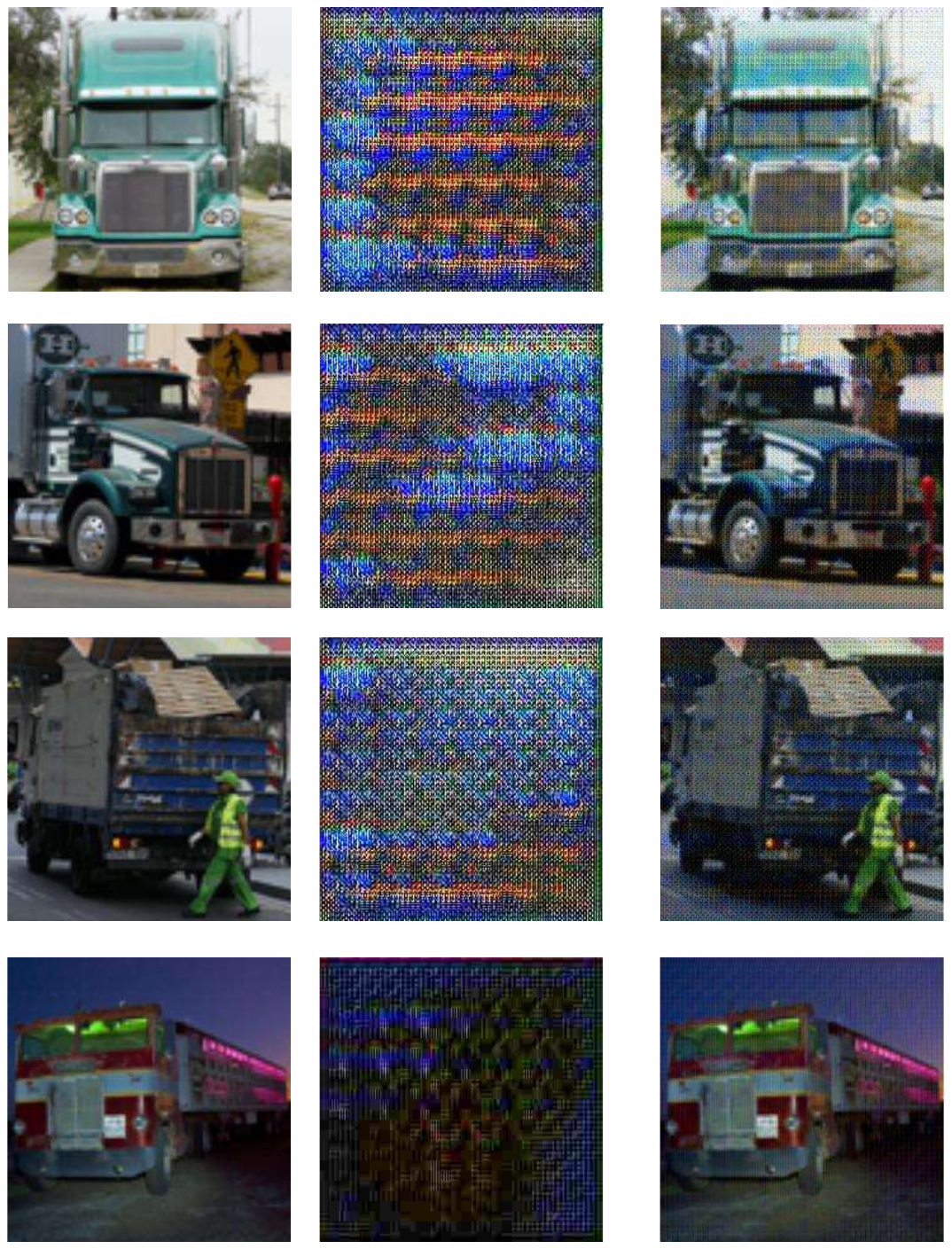}
\end{subfigure}\hfill
\caption[This is for my LOF]{Examples of generated perturbation from non-normalized images in diverse image classes for performance enhancement problem. (A) eagle, (B) horse, (C) cat, and (D) truck.
}
\label{fig:exp_graph_enhance_qual_vanilla}
\end{figure*}

In Figures~\ref{fig:exp_graph_enhance_qual},~\ref{fig:exp_graph_adv_qual},~\ref{fig:exp_graph_enhance_qual_vanilla}, and~\ref{fig:exp_graph_adv_qual_vanilla}, additional perturbations and corresponding perturbed images for diverse image classes are presented.
The perturbations in Figures~\ref{fig:exp_graph_enhance_qual},~\ref{fig:exp_graph_adv_qual} were generated from normalized images, and those in Figures~\ref{fig:exp_graph_enhance_qual_vanilla}, and~\ref{fig:exp_graph_adv_qual_vanilla} were generated from non-normalized images.
Figures~\ref{fig:exp_graph_enhance_qual} and \ref{fig:exp_graph_enhance_qual_vanilla} show the examples of performance enhancement problem, and Figures~\ref{fig:exp_graph_adv_qual} and \ref{fig:exp_graph_adv_qual_vanilla} present the examples of performance degradation problem.
It is interesting that the perturbations for each image class seem to share some similar visual characteristics among them when seeing Figure~\ref{fig:exp_graph_enhance_qual}.
For example, the perturbations generated from `fox' images and that from `bug' images have clearly discriminative shapes from each other.
\begin{figure*}[t]
\centering
\begin{subfigure}[t]{0.03\textwidth}
\textbf{(A)}
\end{subfigure}
\begin{subfigure}[t]{0.21\textwidth}
\includegraphics[width=\linewidth,valign=t]{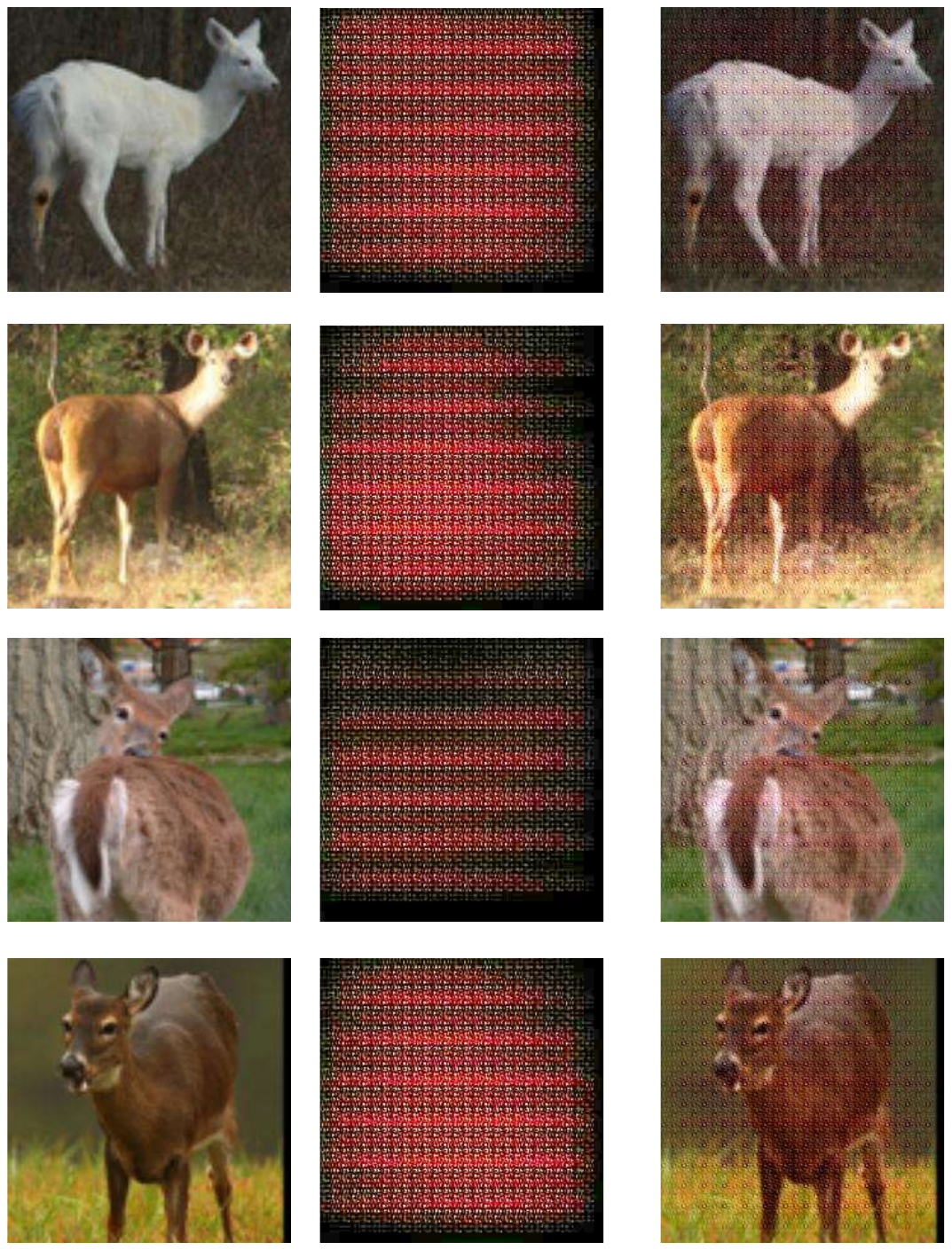}
\end{subfigure}\hfill
\begin{subfigure}[t]{0.03\textwidth}
\textbf{(B)}
\end{subfigure}
\begin{subfigure}[t]{0.21\textwidth}
\includegraphics[width=\linewidth,valign=t]{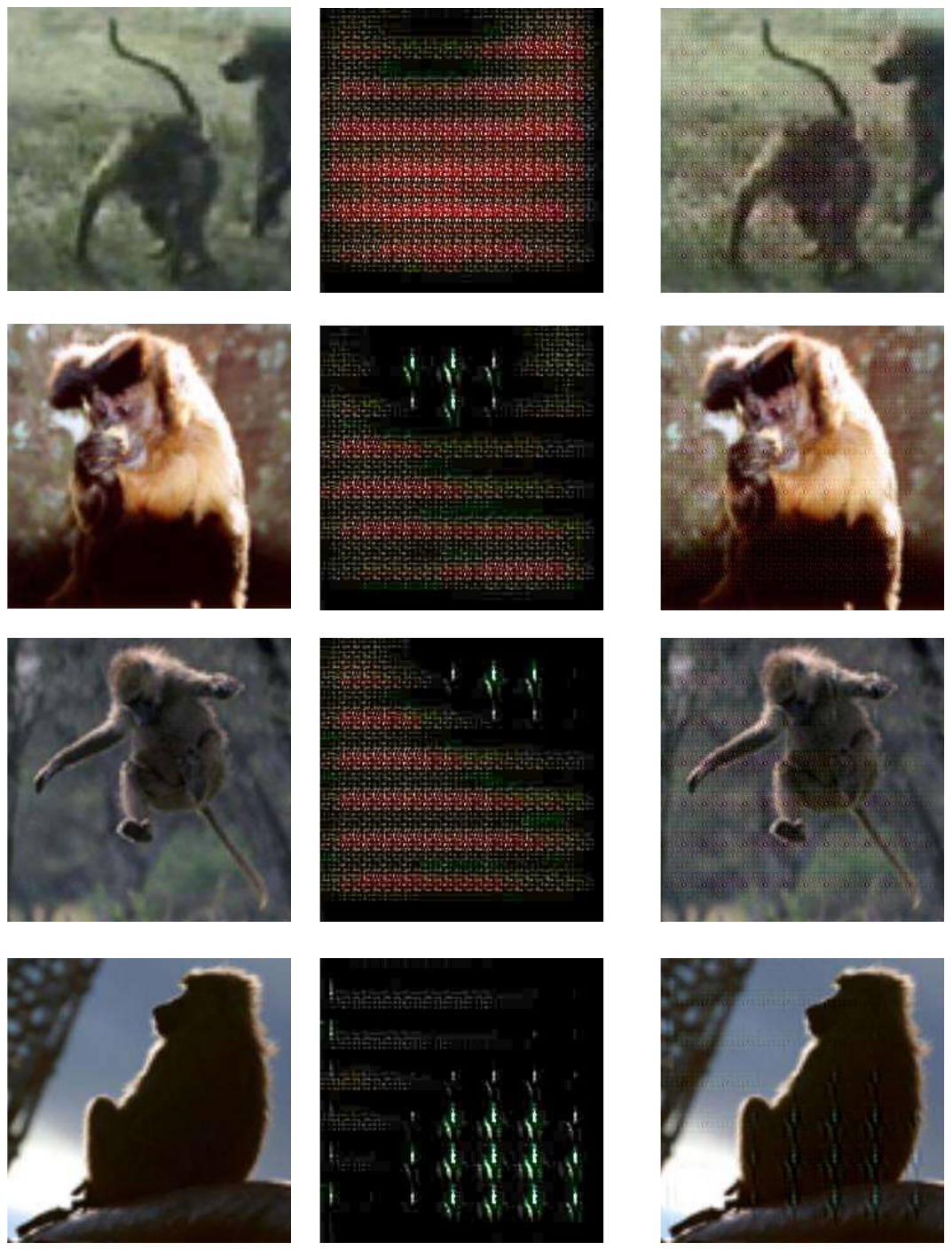}
\end{subfigure}\hfill
\begin{subfigure}[t]{0.03\textwidth}
\textbf{(C)}
\end{subfigure}
\begin{subfigure}[t]{0.21\textwidth}
\includegraphics[width=\linewidth,valign=t]{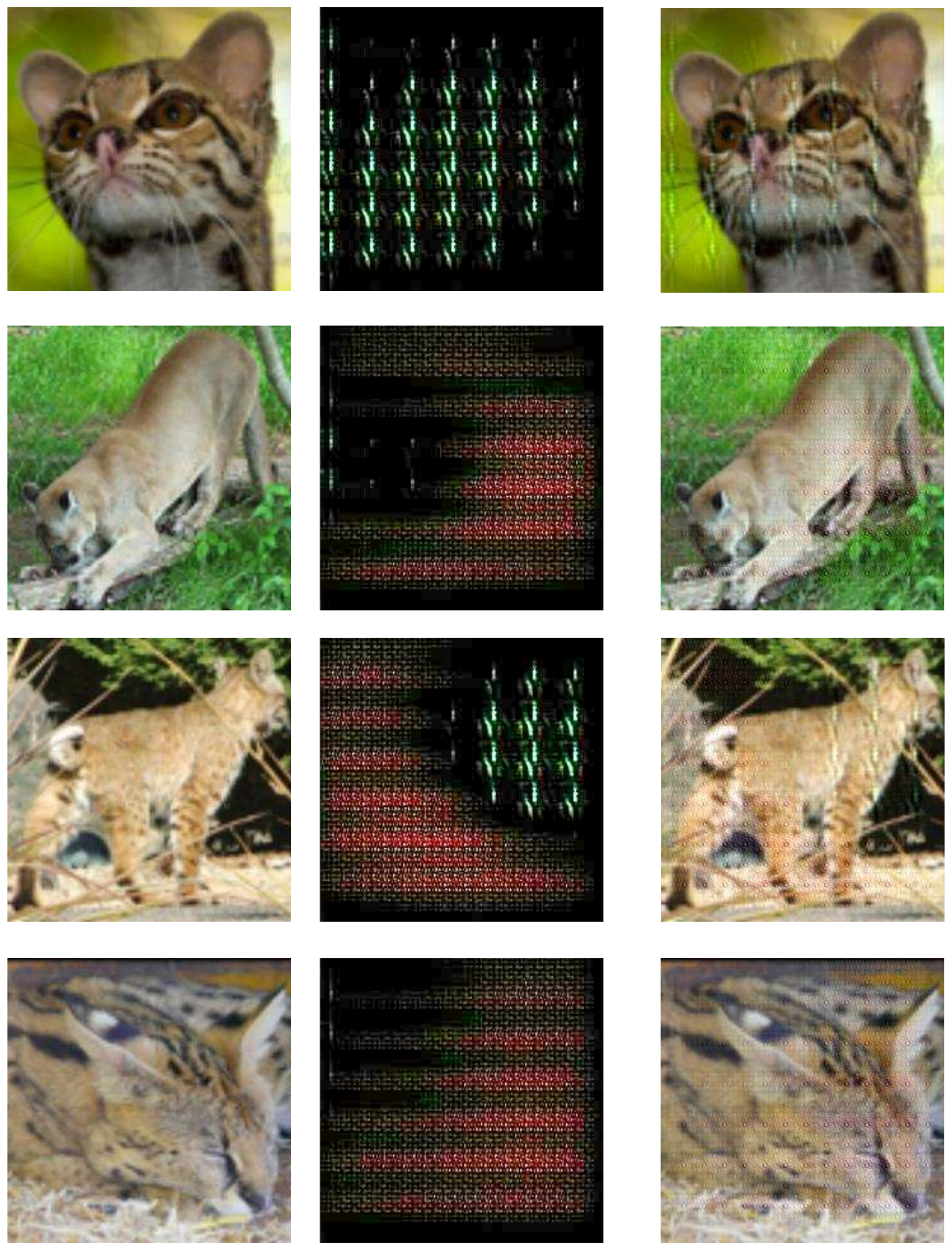}
\end{subfigure}\hfill
\begin{subfigure}[t]{0.03\textwidth}
\textbf{(D)}
\end{subfigure}
\begin{subfigure}[t]{0.21\textwidth}
\includegraphics[width=\linewidth,valign=t]{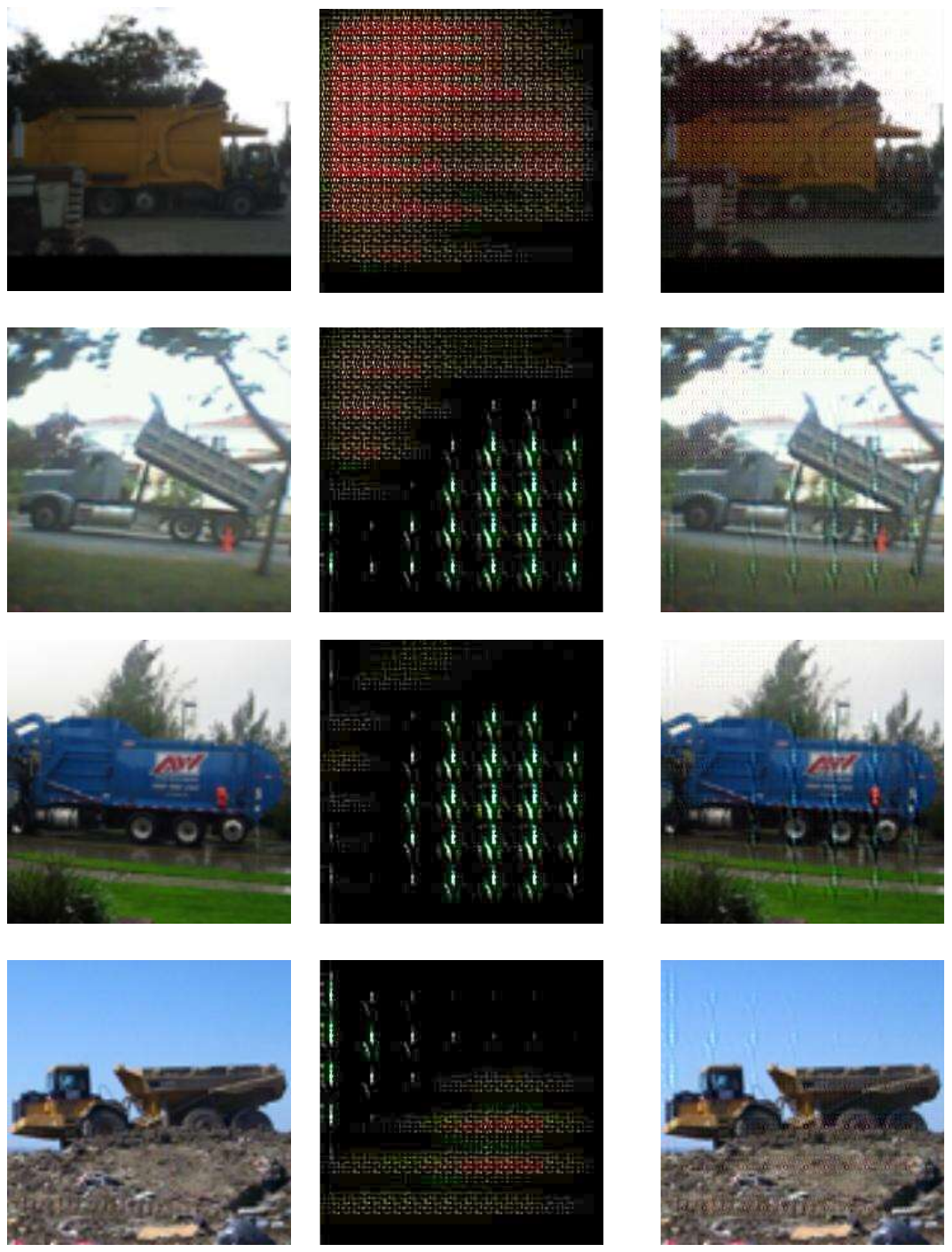}
\end{subfigure}\hfill
\caption[This is for my LOF]{Examples of generated perturbation from non-normalized images in diverse image classes for performance degradation problem. (A) deer, (B) monkey, (C) cat, and (D) truck.
}
\label{fig:exp_graph_adv_qual_vanilla}
\end{figure*}
\end{appendices}

\end{document}